\documentclass{article}

\PassOptionsToPackage{square,numbers,sort&compress}{natbib}

 \usepackage[preprint]{neurips_2026}

\usepackage[utf8]{inputenc} 
\usepackage[T1]{fontenc}    
\usepackage{hyperref}       
\usepackage{url}            
\usepackage{booktabs}       
\usepackage{amsfonts}       
\usepackage{nicefrac}       
\usepackage{microtype}      
\usepackage{xcolor}         

\usepackage{amsmath}
\usepackage{amssymb}
\usepackage{mathtools}
\usepackage{amsthm}
\usepackage{algorithm}
\usepackage{algorithmic}
\usepackage{float}
\usepackage{multirow}
\usepackage{subcaption} 
\usepackage{subfiles}

\theoremstyle{plain}
\newtheorem{theorem}{Theorem}[section]
\newtheorem{proposition}[theorem]{Proposition}
\newtheorem{lemma}[theorem]{Lemma}

\theoremstyle{definition}
\newtheorem{definition}[theorem]{Definition}
\newtheorem{assumption}[theorem]{Assumption}
\newtheorem{remark}[theorem]{Remark}

\DeclareMathOperator*{\argmax}{arg\,max}

\DeclareMathOperator{\subopt}{SubOpt}

\DeclareMathOperator{\Gap}{Gap}

\DeclareMathOperator{\relint}{relint}
\DeclareMathOperator{\supp}{supp}
\DeclareMathOperator{\conv}{conv}

\title{Inverse Reinforcement Learning\\ without an Optimal Demonstrator:\\ A Feasible Reward Set Approach}

\author{%
  Kihyun Kim \\
  MIT LIDS \\
  \texttt{kihyun@mit.edu} \\
  \And
  Shripad Deshmukh \\
  University of Massachusetts, Amherst \\
  \texttt{svdeshmukh@cs.umass.edu} \\
  \And
  Nikos Vlassis \\
  Adobe Research \\
  \texttt{vlassis@adobe.com} \\
  \And
  Jiawei Zhang\thanks{Corresponding author.} \\
  University of Wisconsin-Madison \\
  \texttt{jzhang2924@wisc.edu} \\
}

\begin{document}

\maketitle

\begin{abstract}
Inverse reinforcement learning (IRL) typically assumes demonstrations from a single optimal demonstrator, but in many applications data come from multiple imperfect demonstrators with heterogeneous suboptimality levels. We study reward learning in this setting through a feasible-reward-set framework: for each demonstrator, we encode its declared suboptimality level as a linear constraint and intersect the resulting feasible sets across demonstrators. Our theoretical analysis shows that the joint feasible set shrinks monotonically as data are added, and we give an exact characterization of when a new demonstrator strictly tightens it. We further establish two recovery guarantees for the feasible reward set of the ground-truth optimal demonstrator: one bound depends on closeness to the optimal occupancy, while the other requires only sufficient coverage and no near-optimal demonstrator. On the practical side, we introduce strategies to address the inherent reward ambiguity in the obtained reward set and provide an offline algorithm with function approximation for high-dimensional environments. Experiments in tabular grid-world and large language model (LLM) fine-tuning settings are consistent with the theoretical predictions and demonstrate the effectiveness of the proposed framework over baselines.
\end{abstract}

\section{Introduction}\label{sec:1}

IRL aims to infer rewards that explain observed behavior from demonstrations, typically under the assumption that the demonstrator policy is optimal under the true reward~\cite{ng2000algorithms, adams2022survey}.
In many practical applications, however, demonstration data is not collected from a single optimal demonstrator. Instead, we often observe demonstrations from multiple imperfect demonstrators, each reflecting limited skill, resource constraints, or data corruption.
For example, in LLMs, supervision commonly comes from imperfect and heterogeneous human raters rather than from a single optimal expert~\cite{bukharin2024robust, park2024rlhf, zhang2025diverging}.
In robotics, demonstrations are often collected from operators with heterogeneous proficiency~\cite{chen2020joint,chen2021learning}, or in constrained domains~\cite{jayanthi2023droid}.
Such scenarios underscore the need for IRL algorithms that can recover optimal behavior from suboptimal demonstrations generated by multiple demonstrators with heterogeneous optimality levels.

Conventional IRL algorithms, such as maximum entropy (MaxEnt) IRL~\citep{ziebart2008maximum} and Bayesian IRL~\citep{ramachandran2007bayesian}, typically assume a probabilistic behavioral model and recover a single reward consistent with that model.
Building on these algorithms, \citep{beliaev2022imitation,beliaev2025inverse} account for varying demonstrator quality by estimating demonstrator expertise or demonstrator-specific noise parameters.
However, this approach leaves a fundamental question unresolved: can rewards under which the ground-truth demonstrator's policy is optimal be provably recovered from suboptimal demonstrations alone?
The difficulty is that maximum likelihood estimation (MLE) tends to assign higher reward to frequently visited state-action pairs, even when they do not reflect optimal behavior.
Ranked-demonstration methods address this question more directly, giving a sufficient condition for extrapolation beyond suboptimal demonstrations~\cite{brown2019extrapolating, brown2020better}.
However, their theory guarantees performance improvement rather than recovery of the reward set under which the ground-truth demonstrator's policy is optimal.
Moreover, their ranking-based algorithm does not apply without strict preferences, such as when demonstrators have the same optimality level but specialize in different tasks.

To address this question, we adopt the \emph{feasible reward set} approach in this paper. Rather than selecting a single reward under an assumed model, this approach characterizes the set of rewards consistent with demonstrated behavior through linear constraints~\citep{metelli2021provably, metelli2023towards}.
Specifically, for each demonstrator, we construct the feasible set of rewards under which its behavior is at most $\epsilon$-suboptimal, where $\epsilon$ is a given upper bound on that demonstrator's suboptimality. We then intersect these sets across demonstrators to obtain a joint feasible set.
This approach provides a model-free way to encode the suboptimality levels of the demonstrators, rather than representing them indirectly through a probabilistic model assumption.

This offers several advantages over previous work.
On the theory side, we show that the joint feasible set shrinks monotonically as demonstrations are added, and give an exact characterization of when a new demonstrator strictly tightens it.
We further provide two recovery guarantees for the feasible reward set of the ground-truth optimal demonstrator: one based on closeness to the optimal occupancy, and another based on sufficient coverage without requiring a near-optimal demonstrator.
We complement these results with instances where existing MaxEnt and ranking-based IRL baselines retain constant mismatch, while our method exactly recovers the feasible reward set.
On the practical side, we introduce practical strategies to address reward ambiguity issues, and extend the proposed framework to high-dimensional environments with function approximation.
Taken together, these results yield a principled, model-free framework for IRL from multiple suboptimal demonstrations, providing an alternative to model-dependent likelihood-based methods.
Experiments in both tabular grid-world and high-dimensional LLM fine-tuning settings support the theoretical predictions and demonstrate the effectiveness of the proposed framework over baselines.

\paragraph{Related work.}
Our work builds on optimization-based views of IRL, including the original constraint formulation~\citep{ng2000algorithms} and recent feasible-reward-set analyses~\citep{metelli2021provably,metelli2023towards,kim2024unified}.
\citet{poiani2024sub} show that suboptimal-demonstrator data shrink the feasible reward set, but assume access to an optimal demonstrator, whereas our focus is on reward-set recovery without one.
\citet{lazzati2025generalizing} study principled representative selection from the feasible reward set via closed-form reward centroids; we adopt a related idea to address reward ambiguity issues.
Learning from imperfect, failed, or negatively labeled behavior has also been studied in IRL and imitation learning (IL): failed demonstrations can reduce reward ambiguity~\citep{shiarlis2016inverse}, ranked-demonstration methods learn from ordered imperfect trajectories~\citep{brown2019extrapolating,brown2020better}, margin-based methods use separation constraints~\citep{ratliff2006maximum,ratliff2009learning}, and other work learns policies directly from imperfect demonstrations~\citep{gao2018reinforcement,zhang2021confidence}.
In contrast to these works, our work studies reward-set identifiability under declared suboptimality bounds, without relying on probabilistic model assumptions, ordered demonstrations, or access to an optimal demonstrator. A more detailed discussion is provided in Appendix~\ref{app:related}.

\paragraph{Organization of the paper.}
Section~\ref{sec:2} introduces notation and background on the LP view of IRL.
Section~\ref{sec:3:1} presents the feasible reward set formulation for multiple suboptimal demonstrations, establishes monotonic shrinkage and recovery guarantees, and compares with existing baselines.
Section~\ref{sec:3:2} introduces strategies for addressing the reward ambiguity issue in IRL, and Section~\ref{sec:3:3} proposes an offline algorithm with function approximation for high-dimensional environments.
Finally, Section~\ref{sec:4} provides empirical results on both a grid-world environment and LLM fine-tuning.
\section{Preliminaries}\label{sec:2}
Let $[K]$ denote the set $\{1, 2, \ldots, K\}$ and let $\Delta(S)$ denote the probability simplex over $S$.
$\Vert \cdot \Vert_p$ denotes the $\ell_p$ norm.
$\mathbf{1}\{e\}\in\{0,1\}$ denotes the indicator function, which equals $1$ if event $e$ occurs and $0$ otherwise.
We use $\mathbf{1}$ to denote the all-ones vector of appropriate dimensions.
$\supp(\cdot)$ denotes the support of a function (or distribution) and $\conv(\cdot)$ denotes the convex hull.
For ease of notation, we use both function and vector notation for rewards and occupancy measures.

\subsection{Discounted Markov decision processes (MDPs) and occupancy measures}
We consider an infinite-horizon discounted MDP $\mathcal{M}_r := (S,A,P,r,\mu_0,\gamma)$, where $S$ and $A$ are finite state and action spaces, $P(\cdot|s,a)\in\Delta(S)$ is the transition kernel, $r:S\times A\mapsto [0,1]$ is the reward function, $\mu_0\in\Delta(S)$ is the initial-state distribution, and $\gamma\in(0,1)$ is the discount factor.
When the reward is not specified, we write $\mathcal M := (S,A,P,\mu_0,\gamma)$ for the corresponding reward-free MDP. For a policy $\pi(\cdot| s)\in\Delta(A)$, the value function $v^\pi_r(s)$ is defined as the expected total discounted reward received when initiating from state $s$ and following $\pi$, such that
\begin{equation}
v_r^\pi(s):=\mathbb{E}^{\pi} \left[\sum_{h=0}^{\infty}\gamma^h r(s_h,a_h)\mid s_0=s \right]  .  
\end{equation}
Let $v_r^\star$ denote the optimal value function that maximizes the value function, which satisfies the Bellman equations.
Then, it is well known that $v_r^\star$ is the optimal solution to the following linear program~\citep{puterman1994markov}:
\begin{equation}\label{eq:primal}
\min_{v\in\mathbb{R}^{|S|}} (1-\gamma)\mu_0^\top v
\quad\text{s.t.}\quad
M^\top v \ge r,
\end{equation}
where the dynamics operator $M\in\mathbb{R}^{|S|\times(|S||A|)}$ is defined by $
M(s',(s,a)) := \mathbf{1}\{s'=s\}-\gamma P(s'| s,a)$.
The dual program can be expressed as
\begin{equation}\label{eq:dual}
\max_{d\in\mathbb{R}^{|S||A|}} r^\top d
\quad\text{s.t.}\quad
Md=(1-\gamma)\mu_0,\quad d\ge 0.
\end{equation}
The dual variable $d\in\mathbb{R}^{|S||A|}$ is a \emph{discounted occupancy measure}, i.e., the normalized discounted frequency with which a policy visits each state--action pair.
For a policy $\pi$ and initial distribution $\mu_0$, $d^\pi$ is defined by
\begin{equation}
d^\pi(s,a)
:= (1-\gamma)\,\mathbb{E}^{\pi}\!\left[\sum_{h=0}^{\infty}\gamma^h\,\mathbf{1}\{(s_h,a_h)=(s,a)\}\,\Big|\, s_0\sim\mu_0\right]
\end{equation}
which is feasible for the dual LP~\eqref{eq:dual}.

Throughout the paper, we fix the reward-free MDP $\mathcal M$ and denote the admissible occupancy polytope under $\mathcal M$ by
\begin{equation}
\Phi := \{d\in\mathbb{R}_+^{|S||A|}\mid Md=(1-\gamma)\mu_0\}.
\end{equation}
For a reward $r$, define the suboptimality gap of an occupancy measure $d\in\Phi$ as
\begin{equation}
\subopt(r,d):=\max_{\tilde{d}\in\Phi}\; r^\top \tilde{d} - r^\top d.
\end{equation}
Then, $\subopt(r,d)=0$ if and only if $d$ maximizes the reward over $\Phi$. We denote the set of optimal occupancy measures under reward $r$ by $\Phi^\star(r) \subseteq \Phi$:
\begin{equation}\label{eq:optimal_occupancy_set}
\Phi^\star(r):=\{d\in\Phi\mid \subopt(r,d)=0\}
= \argmax_{d\in\Phi}\; r^\top d.
\end{equation}

\subsection{IRL from a single optimal demonstrator}\label{sec:2:3}
\paragraph{Feasible reward set.}
Given an optimal demonstrator policy $\pi_e$, let $d_e:=d^{\pi_e}\in\Phi$ denote its occupancy measure, where the subscript $e$ denotes expert.
Since multiple rewards can make the demonstrator occupancy measure optimal, we consider the set of rewards for which $d_e$ is optimal, referred to as the feasible reward set in the IRL literature \cite{metelli2021provably, metelli2023towards}:
\begin{equation}\label{eq:R_single}
\mathcal{R}(d_e):=\{r\in \Delta(S \times A)\mid d_e\in \Phi^\star(r)\}.
\end{equation}
The simplex constraint $r \in \Delta(S \times A)$ is introduced to remove scaling ambiguity in the suboptimality measure, and all feasible reward sets in the theory below use this normalization.
For other reward scales, the suboptimality levels must be rescaled by the corresponding constant.
With this normalization, the feasible reward set has a simple geometric interpretation: it is the intersection of the normal cone to $\Phi$ at $d_e$ and the simplex $\Delta(S \times A)$, i.e., $\mathcal{R}(d_e) =  N (d_e;\Phi) \cap \Delta(S \times A)$, where $N (d_e;\Phi) := \{ r \mid r^\top(d_e - d) \geq 0 \quad \forall d \in \Phi \}$.

\paragraph{LP characterization.}
By strong duality, $d_e$ is optimal for the dual LP~\eqref{eq:dual} under reward $r$ if and only if there exists a primal variable $v$ such that Karush--Kuhn--Tucker (KKT) conditions hold:
\begin{equation}\label{eq:R_kkt}
\begin{split}
(1-\gamma)\mu_0^\top v - r^\top d_e = 0 \quad & \text{(Zero duality gap)},\\
M^\top v \ge r \quad & \text{(Primal feasibility)}.    
\end{split}
\end{equation}
Note that the dual feasibility condition is unnecessary since it is automatically satisfied by the assumption $d_e \in \Phi$.
Then, the feasible reward set can be expressed as the following polytope~\cite{kim2024unified}
\begin{equation}\label{eq:single_lp_formulation}
\mathcal{R}(d_e) = \{\; r \in \Delta(S \times A) \mid \exists v\in \mathbb{R}^{|S|} \quad \text{s.t.} \quad (1-\gamma)\mu_0^\top v - r^\top d_e = 0, \; M^\top v \geq r \;\}. 
\end{equation}

\section{IRL from multiple suboptimal demonstrators}\label{sec:3}
\subsection{Proposed IRL framework with theoretical guarantees}\label{sec:3:1}
\paragraph{Feasible reward set.}
We first extend the feasible reward set of a single optimal demonstrator to the multiple suboptimal demonstrations setting.
Consider $K$ demonstrators with policies $\{\pi_e^k\}_{k=1}^K$ and corresponding discounted occupancy measures $d_e^k := d^{\pi_e^k}$.
Suppose that each demonstrator policy is known to be at maximum $\epsilon^k$ suboptimal under true rewards, where $\epsilon^k \in [0, 1]$.
Let $\mathcal{D} := \{(d_e^k, \epsilon^k)\}_{k=1}^K$ denote the set of occupancy-suboptimality pairs.

Unless we have $\epsilon^k = 0$ for some $k$, we do not have access to an optimal demonstrator.
Therefore, we first recover the feasible reward set that makes $d_e^k$ $\epsilon^k$-suboptimal, and find the intersection over all $k$.
Analogously to the optimal occupancy set $\Phi^\star(r)$~\eqref{eq:optimal_occupancy_set} in IRL from a single optimal demonstrator, we define the $\epsilon$-suboptimal occupancy set for any reward $r$ as
\begin{equation}
\Phi^\star(r; \epsilon) := \{d\in\Phi \mid \subopt(r,d)\le \epsilon\}.
\end{equation}
We then define the feasible reward set for which $d_e^k$ is at most $\epsilon^k$ suboptimal, and take the intersection over all demonstrators:
\begin{definition}[Feasible reward set for multiple suboptimal demonstrations]\label{def:feasible_reward_set}
\begin{equation}
    \mathcal{R} (\mathcal{D}) := \bigcap_{k=1}^K \mathcal{R}(d_e^k; \epsilon^k), \quad \text{where} \quad \mathcal{R}(d_e^k; \epsilon^k) := \{r \in \Delta(S \times A) \mid d_e^k \in \Phi^\star(r; \epsilon^k) \}.
\end{equation}
\end{definition}

\begin{proposition}\label{prop:lp_formulation}
The feasible reward set $\mathcal{R} (\mathcal{D})$ can be expressed as:
\begin{equation}\label{eq:lp_formulation}
\mathcal{R} (\mathcal{D}) := \{r \in \Delta(S \times A) \mid \exists v\in \mathbb{R}^{|S|} \; \text{s.t.} \; (1-\gamma)\mu_0^\top v - r^\top d_e^k \leq \epsilon^k \quad \forall k \in [K], \; M^\top v \geq r\}.
\end{equation}
\end{proposition}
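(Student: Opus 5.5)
The plan is to reduce membership in each $\mathcal{R}(d_e^k;\epsilon^k)$ to an LP condition via strong duality, and then observe that one common $v$ serves all $k$ at once. The key quantity is the optimal value of the dual LP~\eqref{eq:dual}. For a fixed reward $r$, strong duality between \eqref{eq:primal} and \eqref{eq:dual} gives
\[
\max_{\tilde d\in\Phi} r^\top \tilde d \;=\; \min\{(1-\gamma)\mu_0^\top v \mid M^\top v\ge r\} \;=\; (1-\gamma)\mu_0^\top v_r^\star .
\]
Both programs are feasible: $\Phi$ is nonempty (it contains any $d^\pi$), and a large constant $v$ satisfies $M^\top v\ge r$ because $M^\top \mathbf{1} = (1-\gamma)\mathbf{1}$. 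Both are also bounded, so the minimum is attained. Consequently $\subopt(r,d) = (1-\gamma)\mu_0^\top v_r^\star - r^\top d$.

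($\subseteq$) Take $r\in\mathcal{R}(\mathcal{D})$ and set $v := v_r^\star$. Then $M^\top v\ge r$, since $v_r^\star$ is primal feasible. For every $k$, the identity above gives $(1-\gamma)\mu_0^\top v - r^\top d_e^k = \subopt(r,d_e^k)\le\epsilon^k$. The point to check is that a single $v$ works for all $k$ simultaneously. This holds because $v_r^\star$ depends only on $r$ and not on the demonstrator.

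($\supseteq$) Suppose $r\in\Delta(S\times A)$ and some $v$ satisfies $M^\top v\ge r$ together with the $K$ inequalities. Weak duality suffices here. For any $\tilde d\in\Phi$,
\[
r^\top\tilde d \le (M^\top v)^\top \tilde d = v^\top M\tilde d = (1-\gamma)\mu_0^\top v,
\]
using $\tilde d\ge 0$ and $M\tilde d=(1-\gamma)\mu_0$. Hence $\subopt(r,d_e^k)\le (1-\gamma)\mu_0^\top v - r^\top d_e^k\le \epsilon^k$ for each $k$. Since $d_e^k\in\Phi$ by assumption, this gives $d_e^k\in\Phi^\star(r;\epsilon^k)$ for all $k$, so $r\in\mathcal{R}(\mathcal{D})$.

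The only mildly delicate step is the $\subseteq$ direction. The naive approach would produce a separate certificate $v^k$ for each demonstrator from the single-demonstrator characterization~\eqref{eq:single_lp_formulation}, adapted to $\epsilon^k$. One then has to argue that these certificates can be merged into one. Choosing the optimal dual certificate $v_r^\star$ resolves this directly, since it attains the minimal possible value of $(1-\gamma)\mu_0^\top v$ and so makes every $k$-th inequality as slack as possible. The remaining checks (feasibility and attainment for strong duality) are routine.
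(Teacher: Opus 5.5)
Your proof is correct and follows the paper's own argument: strong duality with an attained minimizer $v_r^\star$ that depends only on $r$ (not on $k$) gives the forward inclusion with a single common certificate, and weak duality (which the paper phrases as $J^\star(r)$ being the minimum over all feasible $v$) gives the reverse inclusion. Your explicit feasibility check for strong duality via $M^\top\mathbf{1}=(1-\gamma)\mathbf{1}$ is a small detail that the paper leaves implicit.
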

The proof is provided in Appendix~\ref{app:prop:lp_formulation}.
The 2D illustration of the feasible reward sets for a single optimal demonstrator and multiple suboptimal demonstrators is shown in Figure~\ref{fig:feasible_reward_set}(a) and (b), respectively.

\begin{figure}
    \centering
    \begin{subfigure}[b]{0.33\textwidth}
    \centering
        \includegraphics[width=0.85\textwidth]{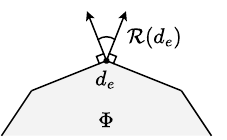}
        \caption{}
    \end{subfigure}
    \begin{subfigure}[b]{0.32\textwidth}
    \centering
        \includegraphics[width=\textwidth]{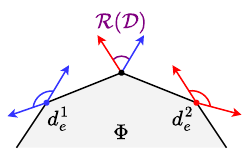}
        \caption{}
    \end{subfigure}
    \begin{subfigure}[b]{0.32\textwidth}
        \centering
            \includegraphics[width=\textwidth]{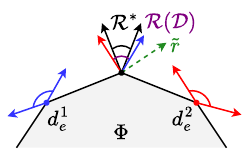}
            \caption{}
    \end{subfigure}
    \caption{2D illustration of (a) the feasible reward set for a single optimal demonstrator (black cone), (b) the feasible reward set with two suboptimal demonstrators (purple cone, intersection of blue and red cones), and (c) Theorem~\ref{thm:bound2}, where $\tilde r$ (green arrow) is ruled out by demonstrator $d_e^1$ (blue cone). }
    \label{fig:feasible_reward_set}
\end{figure}

\paragraph{Monotonic shrinkage property.}
First, we investigate the monotonic shrinkage property of the feasible reward set as new demonstrations are added.
Specifically, we show that the feasible reward set becomes strictly tighter when a new demonstrator visits previously unseen state-action pairs and its declared suboptimality level is sufficiently small relative to the newly covered mass.
This follows from the fact that our formulation is intersection-based: each additional demonstration contributes another linear compatibility constraint.
We first define the ground-truth optimal policy and the target reward set we aim to recover.
\begin{definition}[Ground-truth solution]\label{def:ground_truth}
Let $\pi_e^\star$ denote the ground-truth optimal policy and $d_e^\star := d^{\pi_e^\star} \in \Phi$ denote its occupancy measure.
Let $\mathcal{R}^\star \subseteq \mathcal{R}(d_e^\star)$ denote the target reward set that makes $\pi_e^\star$ an optimal policy.
\end{definition}  
Note that neither $d_e^\star$ nor $\mathcal{R}^\star$ can be directly estimated because demonstrations from the optimal demonstrator $\pi_e^\star$ are unavailable.
Instead, we observe the dataset $\mathcal D:=\{(d_e^k,\epsilon^k)\}_{k=1}^K$ from $K$ suboptimal demonstrators under the following assumption: the declared suboptimality levels do not exclude any reward in the target reward set.
\begin{assumption}[Dataset consistency]\label{ass:consistency}
The dataset $\mathcal D$ is consistent with $\mathcal R^\star$ in the sense that it does not rule out any reward in $\mathcal R^\star$: $\subopt(r, d_e^k) \leq \epsilon^k \quad \forall r \in \mathcal{R}^\star,\; k \in [K]$.
Equivalently, every reward in $\mathcal R^\star$ remains feasible under the dataset, i.e., $\mathcal R^\star \subseteq \mathcal R(\mathcal D)$.
\end{assumption}

The following theorem gives a coverage-based sufficient condition under which adding a new demonstrator strictly shrinks the feasible reward set while preserving the target reward set.
\begin{theorem}\label{thm:monotonicity}
Suppose Assumption~\ref{ass:consistency} holds for the dataset $\mathcal{D}$ and the augmented dataset $\mathcal{D}^+ := \mathcal{D}\cup\{(d_e',\epsilon')\}$.
Let $\mathcal Z_{\mathcal D}:=\bigcup_{k=1}^K \supp(d_e^k)$ be the set of state-action pairs visited in $\mathcal D$, and let $\alpha(d_e', \mathcal D) := \sum_{(s,a)\notin \mathcal Z_{\mathcal D}} d_e'(s,a)$ be the mass that the new demonstrator places outside the previously visited set.
If
\begin{equation}
    \epsilon' < \frac{\alpha(d_e', \mathcal D)}{|\mathcal Z_{\mathcal D}|},
\end{equation}
then $\mathcal{R}^\star \subseteq \mathcal{R}(\mathcal{D}^+) \subsetneq \mathcal{R}(\mathcal{D})$.
\end{theorem}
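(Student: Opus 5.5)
The two inclusions $\mathcal R^\star\subseteq\mathcal R(\mathcal D^+)\subseteq\mathcal R(\mathcal D)$ are immediate. The first is Assumption~\ref{ass:consistency} applied to $\mathcal D^+$. The second holds because, by Definition~\ref{def:feasible_reward_set}, $\mathcal R(\mathcal D^+)=\mathcal R(\mathcal D)\cap\mathcal R(d_e';\epsilon')$. The real content is therefore strictness. I will exhibit an explicit reward $\hat r\in\mathcal R(\mathcal D)$ with $\subopt(\hat r,d_e')>\epsilon'$.

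\textbf{Candidate reward.} Set $Z:=\mathcal Z_{\mathcal D}$. Note that $Z\neq\emptyset$, since each $d_e^k\in\Phi$ has total mass $1$. I take the uniform reward on the previously visited pairs,
\[
\hat r(s,a):=\frac{\mathbf 1\{(s,a)\in Z\}}{|Z|}\in\Delta(S\times A).
\]

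\textbf{Step 1: $\hat r\in\mathcal R(\mathcal D)$.} Since $\hat r\ge 0$ and each $\tilde d\in\Phi$ has $\mathbf 1^\top\tilde d=1$, we get $\max_{\tilde d\in\Phi}\hat r^\top\tilde d\le 1/|Z|$. On the other hand, $d_e^k$ is supported inside $Z$, so $\hat r^\top d_e^k=1/|Z|$. Hence $\subopt(\hat r,d_e^k)=0\le\epsilon^k$ for every $k$, and $\hat r\in\mathcal R(\mathcal D)$. Equivalently, $d_e^k$ attains the value upper bound.

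\textbf{Step 2: $\hat r\notin\mathcal R(d_e';\epsilon')$.} Using $\mathbf 1^\top d_e'=1$ and the definition of $\alpha$,
\[
\hat r^\top d_e'=\frac{1-\alpha(d_e',\mathcal D)}{|Z|}.
\]
Taking any $d_e^1\in\Phi$ as a comparator gives
\[
\subopt(\hat r,d_e')\ge \hat r^\top d_e^1-\hat r^\top d_e'=\frac{\alpha(d_e',\mathcal D)}{|Z|}>\epsilon'.
\]
So $\hat r\in\mathcal R(\mathcal D)\setminus\mathcal R(\mathcal D^+)$, which proves the strict inclusion.

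The only delicate point is choosing the witness reward. One needs a reward that every old demonstrator already maximizes, so that it survives all the old constraints regardless of the $\epsilon^k$. It must also penalize mass outside $Z$, and it must lie in the simplex so that the threshold comes out as exactly $\alpha/|Z|$. The uniform indicator on $Z$ meets all three requirements. The key observation is that its optimal value equals $1/|Z|$, which follows because occupancy measures in $\Phi$ are probability vectors: the column sums of $M$ are $1-\gamma$, so $\mathbf 1^\top d=1$. Once that is in place, the rest is a direct calculation, and no use of the LP form in Proposition~\ref{prop:lp_formulation} is needed.
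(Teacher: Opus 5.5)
Your proposal is correct and follows essentially the same route as the paper: the inclusions come from $\mathcal R(\mathcal D^+)=\mathcal R(\mathcal D)\cap\mathcal R(d_e';\epsilon')$ and the assumption, and strictness comes from the witness reward uniform on $\mathcal Z_{\mathcal D}$. The only difference is that you prove $\subopt(\hat r,d_e')\ge\alpha/|\mathcal Z_{\mathcal D}|$ through an old demonstrator as comparator instead of stating the exact equality, which is all the strict inclusion needs.
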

The proof is provided in Appendix~\ref{app:thm:monotonicity}.
We make a few remarks about this theorem.
First, the theorem provides only a sufficient condition on the new data for strictly shrinking the set $\mathcal{R}(\mathcal{D})$, and the necessary and sufficient condition is given in Lemma~\ref{lem:strict_shrinkage_characterization}.
Second, the denominator $|\mathcal Z_{\mathcal D}|$ may appear large because it scales with the number of visited state-action pairs, but this dependence is a consequence of the simplex normalization in our feasible reward set definition: under this normalization, the average reward over the full state-action space is only $1/(|S||A|)$.
If a different constant reward scale is used, the condition rescales accordingly.

\paragraph{Recovery of $\mathcal{R}(d_e^\star)$.}
Next, we bound the mismatch between $\mathcal{R}(\mathcal D)$ and the feasible reward set of the ground-truth optimal demonstrator, $\mathcal{R}(d_e^\star)$.
To quantify this mismatch, we use the worst-case suboptimality of $d_e^\star$ over rewards in $\mathcal{R}(\mathcal D)$:
\begin{equation}
\Gap(\mathcal{R}(\mathcal D), \mathcal{R}(d_e^\star)) := \max_{r \in \mathcal{R}(\mathcal D)} \subopt(r,d_e^\star).
\end{equation}
If $\Gap(\mathcal{R}(\mathcal D), \mathcal{R}(d_e^\star)) \to 0$, then $\mathcal{R}(\mathcal D)$ converges to $\mathcal{R}(d_e^\star)$ in the one-sided Hausdorff distance induced by the $\ell_\infty$ norm (see Appendix~\ref{app:hausdorff_distance} for the proof).
If $\Gap(\mathcal{R}(\mathcal D), \mathcal{R}(d_e^\star)) = 0$, then $\mathcal{R}(\mathcal D) \subseteq \mathcal{R}(d_e^\star)$, so any selected reward in $\mathcal{R}(\mathcal D)$ exposes an optimal face of $\Phi$ containing $d_e^\star$; under the relative-interior normal-cone condition, this exposed face is the minimal face containing $d_e^\star$ (see Appendix~\ref{app:optimal_face_recovery} for details).

The following two theorems present the upper bounds under two different cases.
The first case provides the bound which depends on the closeness of the convex hull of the suboptimal demonstrators' occupancies to the optimal occupancy.
\begin{theorem}\label{thm:bound1}
Suppose that Assumption~\ref{ass:consistency} holds.
Then, the following inequality holds:
 \begin{equation}
    \Gap(\mathcal{R}(\mathcal D), \mathcal{R}(d_e^\star)) \leq \min_{(d, \epsilon) \in \conv(\mathcal{D})}\left\{\Vert d - d_e^\star \Vert_1  + \epsilon \right\},
\end{equation}
\end{theorem}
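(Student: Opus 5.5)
Fix any $r \in \mathcal{R}(\mathcal D)$ and any convex combination $(d,\epsilon) = \sum_{k} \lambda^k (d_e^k, \epsilon^k)$ with $\lambda \in \Delta([K])$. The plan is to prove $\subopt(r,d_e^\star) \le \Vert d - d_e^\star\Vert_1 + \epsilon$. Taking the maximum over $r$ and then the minimum over $(d,\epsilon)\in\conv(\mathcal D)$ gives the claim; the minimum is attained because $\conv(\mathcal D)$ is compact and the objective is continuous. Assumption~\ref{ass:consistency} is not needed for this inequality. It only guarantees that $\mathcal R(\mathcal D)\supseteq\mathcal R^\star$ is nonempty, so that the maximum defining $\Gap$ is over a nonempty set.

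\textbf{Step 1 (convexity of the constraint in $d$).} By Definition~\ref{def:feasible_reward_set}, $r\in\mathcal R(\mathcal D)$ means that for each $k$,
\[
\max_{\tilde d\in\Phi} r^\top\tilde d - r^\top d_e^k \le \epsilon^k .
\]
The map $d\mapsto \subopt(r,d)=\max_{\tilde d\in\Phi} r^\top\tilde d - r^\top d$ is affine in $d$, because the maximum term does not depend on $d$. Averaging the $K$ inequalities with weights $\lambda^k$ therefore gives $\subopt(r,d)\le\epsilon$. The point $d$ lies in $\Phi$ since $\Phi$ is convex, although affinity of the expression means we do not even need this.

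\textbf{Step 2 (perturbation to $d_e^\star$).} Next write
\[
\subopt(r,d_e^\star)=\subopt(r,d)+r^\top(d-d_e^\star).
\]
Since $r\in\Delta(S\times A)$, every entry satisfies $0\le r(s,a)\le 1$, so $\Vert r\Vert_\infty\le 1$. Hölder's inequality then gives $r^\top(d-d_e^\star)\le\Vert r\Vert_\infty\Vert d-d_e^\star\Vert_1\le\Vert d-d_e^\star\Vert_1$. Combining this with Step 1 yields the desired bound for every $r\in\mathcal R(\mathcal D)$.

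\textbf{Main obstacle.} The only point requiring care is Step 1. One must observe that although each $\mathcal R(d_e^k;\epsilon^k)$ is defined for a single demonstrator, membership in the intersection implies the constraint for every convex combination of the pairs $(d_e^k,\epsilon^k)$. This works precisely because the suboptimality is linear in the occupancy measure for a fixed reward. Alternatively, the same fact can be read off Proposition~\ref{prop:lp_formulation}: take the single witness $v$ there and average the inequalities $(1-\gamma)\mu_0^\top v - r^\top d_e^k\le\epsilon^k$. By weak duality, $(1-\gamma)\mu_0^\top v\ge\max_{\tilde d\in\Phi}r^\top\tilde d$ whenever $M^\top v\ge r$, which again gives $\subopt(r,d)\le\epsilon$. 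The rest is routine.
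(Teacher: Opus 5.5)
Your proof is correct and follows essentially the same route as the paper: use that $\subopt(r,\cdot)$ is affine to average the per-demonstrator constraints into $\subopt(r,d)\le\epsilon$, then bound $r^\top(d-d_e^\star)\le\Vert r\Vert_\infty\Vert d-d_e^\star\Vert_1\le\Vert d-d_e^\star\Vert_1$. You are also right that the assumption plays no role in the inequality, and $\mathcal R(\mathcal D)$ is nonempty even without it, since the constant reward $\tfrac{1}{|S||A|}\mathbf 1$ makes every occupancy optimal.
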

The proof is provided in Appendix~\ref{app:thm:bound1}.
The second case is less trivial than the previous one, as it does not rely on the closeness to the optimal occupancy measure, but only requires the sufficient coverage of the suboptimal demonstrations.
\begin{theorem}\label{thm:bound2}
Suppose that Assumption~\ref{ass:consistency} holds.
In addition, suppose that for every $\tilde r$ such that $\subopt(\tilde r,d_e^\star)>\delta$, there exist $r' \in \mathcal{R}(d_e^\star)$ and $(d_e^{k}, \epsilon^{k}) \in \mathcal{D}$ such that
\begin{equation}
(\tilde r-r')^\top(d_e^\star - d_e^{k}) > 0, \;\; \text{and} \;\;
\subopt(r', d_e^{k}) \in [\epsilon^{k} - \delta, \epsilon^{k}].
\end{equation}
Then, $\Gap(\mathcal{R}(\mathcal D), \mathcal{R}(d_e^\star)) \leq \delta$.
\end{theorem}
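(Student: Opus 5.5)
The plan is a contrapositive argument. Rather than bounding $\Gap(\mathcal{R}(\mathcal D), \mathcal{R}(d_e^\star))$ directly, I will show that every reward $\tilde r$ with $\subopt(\tilde r, d_e^\star) > \delta$ is excluded from $\mathcal{R}(\mathcal D)$. It follows that every $r \in \mathcal{R}(\mathcal D)$ satisfies $\subopt(r, d_e^\star) \le \delta$, and taking the maximum over $\mathcal{R}(\mathcal D)$ gives the claim. This is the picture in Figure~\ref{fig:feasible_reward_set}(c): the offending reward is cut away by a single demonstrator's constraint.

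Fix $\tilde r \in \Delta(S\times A)$ with $\subopt(\tilde r, d_e^\star) > \delta$. The hypothesis supplies $r' \in \mathcal{R}(d_e^\star)$ and an index $k$ with $(\tilde r - r')^\top(d_e^\star - d_e^k) > 0$ and $\subopt(r', d_e^k) \ge \epsilon^k - \delta$. The key step is to decompose the suboptimality of $d_e^k$ under $\tilde r$ through $d_e^\star$:
\begin{equation*}
\subopt(\tilde r, d_e^k) = \max_{d\in\Phi}\tilde r^\top d - \tilde r^\top d_e^\star + \tilde r^\top(d_e^\star - d_e^k) = \subopt(\tilde r, d_e^\star) + \tilde r^\top(d_e^\star - d_e^k).
\end{equation*}
The first term exceeds $\delta$ by assumption. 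By the separation hypothesis, the second term exceeds $r'^\top(d_e^\star - d_e^k)$.

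Next I use $r' \in \mathcal{R}(d_e^\star)$. This means $d_e^\star \in \Phi^\star(r')$, so $r'^\top d_e^\star = \max_{d\in\Phi} r'^\top d$. Hence $r'^\top(d_e^\star - d_e^k) = \subopt(r', d_e^k) \ge \epsilon^k - \delta$. Combining the two bounds gives
\begin{equation*}
\subopt(\tilde r, d_e^k) > \delta + (\epsilon^k - \delta) = \epsilon^k,
\end{equation*}
so $d_e^k \notin \Phi^\star(\tilde r;\epsilon^k)$. Therefore $\tilde r \notin \mathcal{R}(d_e^k;\epsilon^k) \supseteq \mathcal{R}(\mathcal D)$, which completes the contrapositive.

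The main subtlety is seeing that the lower end $\epsilon^k - \delta$ of the interval is exactly what absorbs the slack $\delta$ from $\subopt(\tilde r, d_e^\star) > \delta$. Without the decomposition through $d_e^\star$, one only obtains $\subopt(\tilde r, d_e^k) > \epsilon^k - \delta$, which is not enough to exclude $\tilde r$. The upper end $\subopt(r', d_e^k) \le \epsilon^k$ is not needed for the bound. Together with Assumption~\ref{ass:consistency}, it only ensures the hypothesis can be met without the data ruling out rewards in $\mathcal{R}^\star$.
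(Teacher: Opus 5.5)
Your proposal is correct and follows the paper's proof: it uses the same decomposition $\subopt(\tilde r,d_e^k)=\subopt(\tilde r,d_e^\star)+(\tilde r-r')^\top(d_e^\star-d_e^k)+\subopt(r',d_e^k)$, which relies on the optimality of $d_e^\star$ under $r'$, to show $\subopt(\tilde r,d_e^k)>\epsilon^k$; the paper then concludes by contradiction where you use the contrapositive, which makes no real difference. As you note, only the lower end of the interval is used, and the paper's auxiliary lemma even needs only the non-strict separation $(\tilde r-r')^\top(d_e^\star-d_e^k)\ge 0$, because strictness already comes from $\subopt(\tilde r,d_e^\star)>\delta$.
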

The proof is provided in Appendix~\ref{app:thm:bound2}.
The theorem states how demonstrator constraints rule out rewards that have large mismatch with $\mathcal R(d_e^\star)$.
For any reward $\tilde r$ that makes $d_e^\star$ more than $\delta$-suboptimal, there must be a reference reward $r' \in \mathcal R(d_e^\star)$ and a demonstrator $(d_e^k,\epsilon^k)\in\mathcal D$ that exposes this mismatch.
At $r'$, demonstrator $k$ is already close to its allowed suboptimality level, so its feasibility constraint has little slack.
Moving from $r'$ toward $\tilde r$ then increases the return of the optimal demonstrator $d_e^\star$ more than that of demonstrator $d_e^k$, causing $\tilde r$ to violate this demonstrator constraint.
This mechanism is illustrated in Figure~\ref{fig:feasible_reward_set}(c).

\begin{remark}[Finite demonstrators suffice]
While Theorem~\ref{thm:bound2} states the pointwise assumption for infinitely many rewards, it can be satisfied by finitely many demonstrators.
Indeed, if the local condition holds on the closed set of rewards with $\subopt(\tilde r,d_e^\star)\ge\delta$, compactness gives a finite subcover and hence a finite dataset.
A formal statement and proof is given in Appendix~\ref{app:finite_demonstrators}.
\end{remark}
 
\paragraph{Comparison with baselines.}
Next, we compare our approach with two standard baselines for multiple suboptimal demonstrations: a MaxEnt estimator with demonstrator-specific inverse temperatures $\beta_k$ fixed from the declared suboptimality levels~\citep{beliaev2022imitation,beliaev2025inverse}, and a ranking-based estimator based on Bradley--Terry--Luce (BTL) pairwise likelihoods~\citep{bradley1952rank,luce1959individual}, as in T-REX/D-REX-style methods~\citep{brown2019extrapolating,brown2020better}.
Both baselines optimize a single global objective over all demonstrations, unlike our intersection-based feasible-set formulation, and therefore need not track the shrinking feasible reward set.
The next proposition shows that each can retain a constant mismatch with $\mathcal{R}(d_e^\star)$ even when the bound in Theorem~\ref{thm:bound1} is arbitrarily small; formal definitions and proofs are in Appendices~\ref{app:maxent} and~\ref{app:ranking}.

\begin{proposition}\label{prop:baseline-failure}
For each of the MaxEnt and ranking-based estimators described in Appendix~\ref{app:maxent} and Appendix~\ref{app:ranking}, there exists a constant $C>0$ such that, for every $\eta>0$, one can construct a finite dataset $\mathcal D$ for which the upper bound in Theorem~\ref{thm:bound1} is at most $\eta$, but every estimator $\hat r$ satisfies $\subopt(\hat r,d_e^\star)\ge C$.
\end{proposition}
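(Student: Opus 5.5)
The plan is to build, for each estimator, an explicit small MDP family indexed by a parameter that controls how close the demonstrators get to $d_e^\star$. The simplest choice is a single-state MDP (or a bandit-like MDP with $\gamma$ arbitrary and self-loops). There $\Phi$ is the simplex $\Delta(A)$ and $\subopt(r,d)=\max_a r(a)-r^\top d$.

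Take three actions $a^\star,b,c$ with $d_e^\star=\mathbf{1}_{a^\star}$. The dataset will contain two kinds of demonstrators.
\begin{itemize}
\item A near-optimal demonstrator $d^1=(1-\eta)\mathbf{1}_{a^\star}+\eta\mathbf{1}_b$ with $\epsilon^1=\eta$. This alone makes the Theorem~\ref{thm:bound1} bound at most $\|d^1-d_e^\star\|_1+\epsilon^1\le 3\eta$, after rescaling $\eta$.
\item Many copies, $N$ of them, of a poor demonstrator concentrated on $b$ or $c$, with a large declared $\epsilon$ (e.g.\ $\epsilon=1$, so Assumption~\ref{ass:consistency} holds trivially for any target $\mathcal R^\star\subseteq\mathcal R(d_e^\star)$).
\end{itemize}

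For MaxEnt with temperatures $\beta_k$ fixed from $\epsilon^k$, as described in Appendix~\ref{app:maxent}, the objective is $\sum_k \log$-likelihood, whose terms are of the form $\beta_k r^\top d^k-\log\sum_a e^{\beta_k r(a)}$. These are concave in $r$ over $\Delta(A)$. The $N$ poor copies push mass toward $r(b)$ (frequency bias), and their weight grows with $N$ while the near-optimal term's weight stays bounded. So I will show that every maximizer $\hat r$ satisfies $\hat r(b)-\hat r(a^\star)\ge C$ once $N$ is chosen large. I will do this via first-order optimality on the simplex: compare the directional derivative along $\mathbf{1}_b-\mathbf{1}_{a^\star}$ at any $r$ with $r(b)\le r(a^\star)+C$, and show it is strictly positive for $N$ large, using that $\beta_k\ge\beta_{\min}>0$ keeps the poor terms non-degenerate. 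The conclusion is $\subopt(\hat r,d_e^\star)\ge\hat r(b)-\hat r(a^\star)\ge C$, with $C$ independent of $\eta$ since $N=N(\eta)$ can grow.

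For the BTL ranking estimator of Appendix~\ref{app:ranking}, pairwise preferences are induced by the declared levels. I will use demonstrators with equal $\epsilon$ (no strict preference) or exploit that BTL only sees return differences $r^\top(d^i-d^j)$. Include two poor demonstrators, on $b$ and on $c$ with $\epsilon_b<\epsilon_c$, whose ranking term pushes $r(b)-r(c)$ up without bound relative to the single near-optimal comparison. The near-optimal pair $d^1$ versus the $b$-demonstrator contributes only a gradient of size $O(\eta)\cdot$(weight) in the $a^\star$ direction, since $d^1-\mathbf{1}_b$ involves $a^\star$ with coefficient $1-\eta$. Choosing pair multiplicities appropriately, the maximizers on the simplex put at least a constant advantage on $b$ over $a^\star$. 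Again I will verify this by a KKT/directional-derivative argument and conclude $\subopt(\hat r,d_e^\star)\ge C$.

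The main obstacle is making the ``every estimator'' claim hold uniformly over all maximizers and all $\eta$. The objectives may be only weakly concave on the simplex, so there can be multiple maximizers. The constant $C$ must not degrade as $\eta\to0$, and the construction must stay consistent with Assumption~\ref{ass:consistency}. I will handle this by exhibiting a strictly improving feasible direction from every point in the region $\{r:\hat r(b)-\hat r(a^\star)<C\}$, which rules out all maximizers there at once. I will also fix the dataset multiplicities so that the bad terms dominate by a margin independent of $\eta$.
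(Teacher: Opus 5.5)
Your MaxEnt half is essentially the paper's argument. Both use a one-state bandit, one near-optimal demonstrator with declared level $\eta$, and enough copies of a bad demonstrator with $\epsilon=1$ that frequency bias dominates, with the count chosen after $B(\eta)$ is fixed. The paper uses two actions and the single concave variable $r(a_1)-r(a_2)$. Your three-action directional-derivative version also works, provided that at points with $r(a^\star)=0$ you switch to the feasible direction $\mathbf{1}_b-\mathbf{1}_c$.

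The ranking half, however, rests on a false step. Since $r^\top(d^1-\mathbf{1}_b)=(1-\eta)\bigl(r(a^\star)-r(b)\bigr)$, each comparison $d^1\succ\mathbf{1}_b$ has derivative $-2(1-\eta)\sigma\bigl(-r^\top(d^1-\mathbf{1}_b)\bigr)$ along $\mathbf{1}_b-\mathbf{1}_{a^\star}$. This is of order one, not $O(\eta)$, and it pushes directly against your conclusion; each $d^1\succ\mathbf{1}_c$ adds a further $-(1-2\eta)\sigma(\cdot)$. Nothing becomes small as $\eta\to0$, so the construction can only work through multiplicity.

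Let $w_{1b},w_{1c},w_{bc}$ be the multiplicities of the three comparisons, and use $\sigma(r(c)-r(b))\ge\sigma(-1)$. Then the derivative along $\mathbf{1}_b-\mathbf{1}_{a^\star}$ is at least $w_{bc}\sigma(-1)-2w_{1b}-w_{1c}$, and along $\mathbf{1}_b-\mathbf{1}_c$ it is at least $2w_{bc}\sigma(-1)-w_{1b}$. If $w_{bc}\sigma(-1)>2w_{1b}+w_{1c}$, every point other than $\mathbf{1}_b$ has a strictly improving feasible direction. Hence $\mathbf{1}_b$ is the unique maximizer and $C=1$, uniformly in $\eta$. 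With level-induced preferences and $N$ copies of each poor demonstrator, $w_{bc}=N^2$ and $w_{1b}=w_{1c}=N$, so $N\sigma(-1)>3$ suffices. This inequality, not an $O(\eta)$ effect, is what your proof must establish.

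Two further points need fixing.
\begin{itemize}
\item Consistency is not automatic. A level-induced ranking $\mathbf{1}_b\succ\mathbf{1}_c$ forces $\epsilon_b<\epsilon_c\le1$. But Assumption~\ref{ass:consistency} requires $r(a^\star)-r(b)\le\epsilon_b$ for every $r\in\mathcal R^\star$, which fails whenever $\mathbf{1}_{a^\star}\in\mathcal R^\star$. You must either restrict the target, e.g.\ $\mathcal R^\star=\{r^\star\}$ with $r^\star=(0.6,0.4,0)$, $\epsilon_b=1/2$, $\epsilon_c=1$ (which also makes every induced ranking truthful), or supply the rankings separately from the declared levels with $\epsilon_b=\epsilon_c=1$.
\item Drop the equal-$\epsilon$ option. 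It produces no comparisons, so the objective is constant and $\mathbf{1}_{a^\star}$ is a maximizer.
\end{itemize}

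For comparison, the paper's ranking instance does not depend on $\eta$ at all. It includes the exactly optimal demonstrator $e_{a_1}$ with $\epsilon=0$, so the Theorem~\ref{thm:bound1} bound is $0$. It adds two copies of $\rho e_{a_1}+(1-\rho)e_{a_2}$ (level $1-\rho$) and $e_{a_3}$ (level $1$), with comparisons only against $e_{a_3}$. The weights $(\rho,1-\rho)$ let these two comparisons outweigh the single $e_{a_1}\succ e_{a_3}$ comparison. Its declared levels hold for every reward making $a_1$ optimal, so consistency holds for any $\mathcal R^\star\subseteq\mathcal R(d_e^\star)$. Once corrected, your route is a legitimate alternative. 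It needs a restricted target or separately supplied rankings, but it isolates a clean mechanism: many truthful comparisons combined with the simplex normalization.
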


\subsection{Addressing reward ambiguity issue in IRL}\label{sec:3:2}
\paragraph{Reward ambiguity issue.}
The reward set $\mathcal{R}(\mathcal{D})$ suffers from the reward ambiguity issue as in IRL from a single optimal demonstrator~\citep{ng2000algorithms,metelli2021provably,metelli2023towards,kim2024unified}.
The main issue is the existence of \emph{degenerate or sparse rewards} in $\mathcal{R}(\mathcal{D})$, such as $r = \tfrac{1}{|S||A|}\mathbf{1}$, that cannot separate the demonstrators' behaviors from other suboptimal behaviors.
An LP formulation that only imposes constraints in \eqref{eq:lp_formulation} does not rule out such degenerate solutions, and a solver may return one depending on the objective or tie-breaking rule.
We consider two complementary ways to reduce this ambiguity: imposing additional constraints and setting the objective function.

\paragraph{Imposing additional constraints.}
We consider two ways of imposing additional constraints beyond the constraints in \eqref{eq:lp_formulation} to reduce the feasible reward set.
First, a coverage-based penalty can discourage rewards that make unobserved state--action pairs appear optimal by enforcing positive Bellman inequality slack on pairs with zero aggregate demonstrator occupancy.
Geometrically, this is closely related to selecting rewards from the interior of the feasible set~\citep{lazzati2025generalizing}.
Second, pairwise performance-gap information, when available, can be encoded as additional linear constraints.
These constraints further shrink $\mathcal R(\mathcal D)$ and rule out rewards that cannot separate better demonstrators from worse ones.
Formal definitions and their advantages are discussed in detail in Appendices~\ref{app:coverage_penalty} and~\ref{app:pairwise_performance_gap}.

\paragraph{Reward selection within the feasible set.}
Even after these constraints are imposed, the feasible set may contain many rewards.
We select a representative reward using a baseline policy known, or assumed, to be substantially worse than the demonstrators, by maximizing the return gap between the average demonstrator and baseline occupancies:
\begin{equation}\label{eq:objective}
\hat r \in \argmax_{r\in \mathcal{R}(\mathcal{D})}  \; r^\top(\bar d_e - d_\mathrm{base}).
\end{equation}
Here, $\bar d_e:=\frac{1}{K}\sum_{k=1}^K d_e^k$ is the average demonstrator occupancy and $d_\mathrm{base}$ is the occupancy of the baseline policy $\pi_\mathrm{base}$, which is assumed to be inexpensive to estimate (when no natural baseline is available, one can use the uniform policy).
Although the maximizer need not be unique, this objective excludes rewards that assign the same value to the demonstrators and the baseline.
This strategy is particularly well-suited to LLM fine-tuning, where the pretrained base model provides a baseline and demonstrators improve over it only on the regions where they are competent.

\subsection{Offline algorithm with function approximation} \label{sec:3:3}
\paragraph{Datasets and function approximators.}
Next, we describe an offline function-approximation implementation of the tabular objective \eqref{eq:objective} for high-dimensional MDPs.
The goal is to preserve the structure of the LP: maximize the empirical return gap between demonstrators and a baseline, while enforcing the demonstrator-suboptimality and Bellman-feasibility constraints in \eqref{eq:lp_formulation} through penalties.
For each demonstrator $k$, we observe an offline dataset of $N_k$ trajectories of horizon $H$, sampled i.i.d. from the initial state distribution $\mu_0$, the unknown transition $P$, and $\pi_e^k$:
\begin{equation}
\mathcal{D}^k = \left\{\tau^{k,n}=(s^{k,n}_0,a^{k,n}_0,\ldots,s^{k,n}_{H})\right\}_{n=1}^{N_k}.
\end{equation}
Then, the empirical estimate of $d_e^k$ is computed from $\mathcal{D}^k$ by the standard truncated discounted occupancy estimate:
\begin{equation}
\hat{d}_e^k(s,a) := (1-\gamma)\frac{1}{N_k}\sum_{n=1}^{N_k}\sum_{h=0}^{H-1}\gamma^h \mathbf{1}\{(s_h^{k,n},a_h^{k,n})=(s,a)\} \quad \forall (s,a)\in S\times A.
\end{equation}

We parameterize the reward function and the value function as $r_\theta: S\times A\mapsto [0,1]$ and $v_\phi: S \mapsto\mathbb{R}$, respectively.
Given an offline dataset $\mathcal{D}=\{(\mathcal{D}^k, \epsilon^k)\}_{k=1}^K$, we train $(\theta,\phi)$ by optimizing the following empirical penalized objective.

\paragraph{Loss function.}
The first term below is the empirical version of the reward-selection objective \eqref{eq:objective}; the remaining terms penalize violations of the two LP constraints in \eqref{eq:lp_formulation}.
\begin{align}
    L(r_\theta,v_\phi) := &-r_\theta^\top \left( \frac{1}{K} \sum_{k=1}^K \hat{d}_e^k - \hat{d}_{\mathrm{base}} \right) + \lambda_\mathrm{sub} \frac{1}{K} \sum_{k=1}^K \; \rho_{\leq} \Big((1-\gamma) \mu_0^\top v_\phi- r_\theta^\top \hat{d}_e^k-\epsilon^k\Big) \label{eq:loss_gap}\\
    &+ \lambda_\mathrm{bell} \mathbb{E}_{(s,a)\sim \bar{\mathcal{D}}}\big[\rho_{\leq} \big(r_\theta(s,a) + \gamma \mathbb E_{s'\sim \bar{\mathcal D}(\cdot\mid s,a)}[v_\phi(s')] - v_\phi(s)\big)\big] \label{eq:loss_feas},
\end{align}
where $\lambda_\mathrm{sub}, \lambda_\mathrm{bell} > 0$ weight the demonstrator suboptimality penalty and the Bellman feasibility penalty, respectively, and $\rho_{\leq}$ denotes a convex non-decreasing penalty function, such as squared hinge $\rho_{\leq}(x)=\max\{0,x\}^2$.
Here $(s,a)\sim\bar{\mathcal D}$ denotes the empirical state-action distribution of the full offline dataset, and $\bar{\mathcal D}(\cdot\mid s,a)$ denotes its empirical next-state distribution.
This Bellman term is therefore a support-limited relaxation of the pointwise constraint $M^\top v\ge r$ in \eqref{eq:lp_formulation}.
$(\theta,\phi)$ can be updated by stochastic gradient methods on $L(r_\theta,v_\phi)$, enforcing bounded rewards by parameterization (e.g., sigmoid layer).

\begin{remark}
The proposed framework assumes the availability of upper bounds on each demonstrator's suboptimality level, but in practice these quantities may be unknown.
If these quantities are unknown, we treat them as hyperparameters controlling constraint strictness: larger values make more rewards feasible, while smaller values require demonstrations to be closer to optimal under the learned reward.
In practice, one can start with mild values, and gradually decrease them while monitoring validation performance and feasibility penalties.
\end{remark}

\paragraph{Complexity and convergence.}
In the tabular setting, \eqref{eq:objective} is an LP with $|S||A| + |S|$ variables, $K$ demonstrator-gap constraints, Bellman inequalities over $S\times A$, and the simplex constraints, so generic LP methods run in polynomial time in these dimensions.
Under function approximation, the cost scales with the number of gradient updates, the number of sampled transitions and demonstrators included in each batch, and the forward/backward cost of the reward and value networks.
When an explicit reward is not needed, this cost can be reduced by directly parameterizing the advantage function instead of separately learning $r_\theta$ and $v_\phi$; the resulting advantage-parameterized objective is given in Appendix~\ref{app:direct_policy}.
While convergence of specific optimization algorithms is beyond the scope of this paper, the empirical objective can be viewed as a standard sample-average approximation of its population analogue~\citep{shapiro2021lectures}. Sample-complexity bounds for estimating $\hat d_e^k$ are available in~\citep{kim2024unified}.

\paragraph{Limitations.}
We close this section by noting two limitations.
First, this paper focuses on the theoretical advantages of the proposed framework rather than on practical considerations for deployment.
The proposed function-approximation algorithm is a direct penalized extension of the tabular LP formulation, leaving implementation efficiency, optimizer choice, and online extensions for future work.
Second, as discussed above, the framework assumes access to upper bounds on demonstrator suboptimality, which may not be available in practice.
While we provide a practical tuning heuristic, more principled methods for selecting these quantities remain an important direction for future work.
\section{Numerical experiments}\label{sec:4}
\begin{figure}
    \centering
    \begin{subfigure}[b]{0.48\textwidth}
    \centering
    \includegraphics[width=\textwidth]{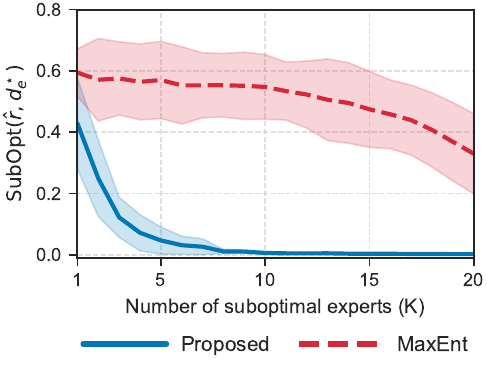}
    \end{subfigure}
    \hspace{0.02\textwidth}
    \begin{subfigure}[b]{0.48\textwidth}
    \centering
    \includegraphics[width=\textwidth]{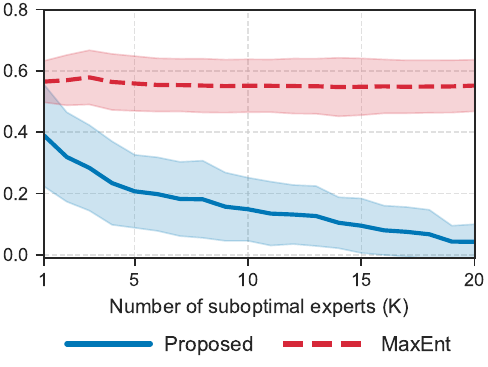}
    \end{subfigure}
    \caption{Suboptimality of the recovered reward for the proposed method and the MaxEnt baseline. Curves denote means and shaded bands denote one standard deviation over 20 random maps. Left: Case 1 (increasing quality and coverage). Right: Case 2 (fixed quality with increasing coverage).}
    \label{fig:grid_world}
    \end{figure}

\subsection{Grid-world with detour portals}
\paragraph{MDP setting.}
We use a tabular deterministic $20\times 20$ Grid-world with terminal state $(20,20)$, uniform initial distribution, and $20$ detour portals $\{(i_p,o_p)\}_{p=1}^{20}$ with distinct entrances.
The action space is $A=\{\textsf{U},\textsf{D},\textsf{L},\textsf{R},\textsf{IN}\}$: movement actions are standard grid moves, while \textsf{IN} teleports from $i_p$ to $o_p$ at portal entrances and otherwise stays in place.

\paragraph{Datasets.}
Let $\pi_e^\star$ denote the optimal policy that knows the correct action at all portal entrances.
We consider two cases for suboptimal demonstration data.
In Case 1, each demonstrator $\pi_e^k$ has knowledge about $k$ portals, i.e., $\{(i_p, o_p)\}_{p=1}^k$.
This reflects the condition in Theorem~\ref{thm:bound1} that the quality of the added demonstrator increases with $K$. When $K=20$, we have an optimal demonstration in the dataset.
In Case 2, each demonstrator $\pi_e^k$ has knowledge about only one portal $p=k$, i.e., $(i_k, o_k)$. This reflects the condition in Theorem~\ref{thm:bound2} that the quality of each demonstrator is fixed while total coverage increases with $K$.
For portals unknown to a demonstrator, we assume that the demonstrator follows the optimal policy of the grid-world in which detour portals are ignored.
We illustrate the environment and the two cases of suboptimal demonstrators in Figure~\ref{fig:grid_world_illustration} in Appendix~\ref{app:exp_details}.

For each demonstrator $\pi_e^k$, we sample $N=100$ trajectories of horizon $H=40$ and compare the LP solution of the proposed method~\eqref{eq:objective} with a solution of the MaxEnt baseline described in Appendix~\ref{app:maxent}.
We omit the ranking-based baseline because it performs worse than both methods in this setting and requires strict ordering information between demonstrations, which our setup does not assume.
The performance of the learned reward is evaluated by the suboptimality of the optimal occupancy, i.e., $\subopt(\widehat r, d_e^\star)$, and the results are reported as the mean and standard deviation over $20$ random maps for each value of $K$.
Other hyperparameters are $\gamma=0.95, \epsilon^k=0.1, \beta_k = 1.0$ for all $k$.

\paragraph{Results.}
The results are shown in Figure \ref{fig:grid_world}.
In Case 1, both methods benefit from clearer quality differences, but the proposed method improves faster: its mean suboptimality decreases from $0.428$ at $K=1$ to $0.002$ at $K=20$, whereas MaxEnt decreases from $0.594$ to $0.330$.
In Case 2, the gap is more pronounced: the MaxEnt baseline remains nearly flat, with mean suboptimality $0.564$ at $K=1$ and $0.550$ at $K=20$, as the reward is fitted to the aggregated data with only highly suboptimal demonstrators.
By contrast, the proposed feasible-set method decreases from $0.389$ at $K=1$ to $0.043$ at $K=20$, which is predicted by Theorem~\ref{thm:bound2}.
Overall, these results confirm that our framework recovers substantially better rewards from suboptimal demonstrations than the MaxEnt baseline.

\begin{figure}
    \centering
    \begin{subfigure}[b]{0.45\textwidth}
    \centering
    \includegraphics[width=\textwidth]{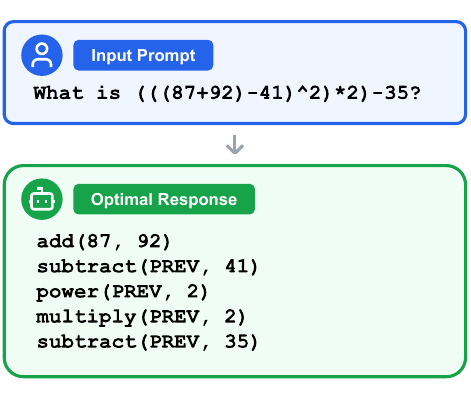}
    \end{subfigure}
    \hspace{0.02\textwidth}
    \begin{subfigure}[b]{0.48\textwidth}
    \centering
    \includegraphics[width=\textwidth]{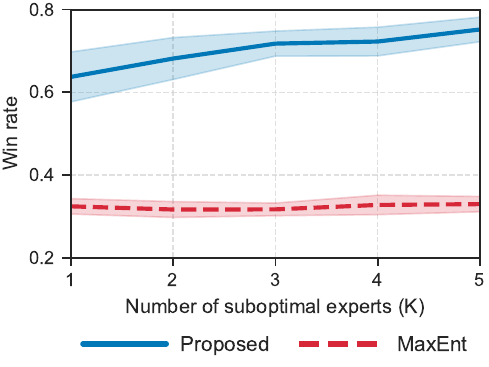}
    \end{subfigure}
    \caption{LLM experiment on arithmetic tool-use tasks. Left: Example of prompt and the optimal response. Right: Win rate of optimal policy under the trained advantage model.}
    \label{fig:llm_experiment}
\end{figure}

\subsection{LLM fine-tuning on arithmetic tool-use tasks}

\paragraph{Arithmetic tool-use tasks.}
We evaluate the function approximation algorithm from Section~\ref{sec:3:3} in the LLM domain with arithmetic tool-use tasks.
Each task consists of a single-turn arithmetic prompt whose answer is obtained through a chain of $1$--$5$ calls to five arithmetic tools (\texttt{add}, \texttt{subtract}, \texttt{multiply}, \texttt{divide}, \texttt{power}).
An optimal response is a newline-separated trace with one call per line, using a dedicated \texttt{PREV} token when an operand is the output of the previous line (Figure~\ref{fig:llm_experiment} left).

\paragraph{Datasets and evaluation.}
We consider $5$ suboptimal demonstrators, where each demonstrator has knowledge about one tool, and generates an incorrect (dummy) tool call on all other tools; increasing $K$ from $1$ to $5$ increases coverage of the tool set while keeping per-demonstrator quality low.
We used Qwen2.5-1.5B~\citep{yang2025qwen} with an additional head for advantage model and trained it using the proposed method with the advantage-parameterized loss \eqref{eq:loss_advantage_param} and the MaxEnt baseline (Appendix~\ref{app:maxent}).
Since the direct evaluation of the suboptimality measure can be blurred by an extra policy-optimization stage (for computing an optimal policy) and is sensitive to the reward (or advantage) scale, instead we report the win-rate of the optimal response with respect to the near-optimal response (which has at most one error in tool calling) as a proxy.
Additional details are provided in Appendix~\ref{app:exp_details}.

\paragraph{Results.}
Figure~\ref{fig:llm_experiment} shows the mean and standard deviation of the win-rate depending on the number of demonstrators $K$, over $10$ runs.
The MaxEnt baseline achieves an average win-rate of only $0.323$, below the $0.5$ win-rate of a constant reward, because it fits the reward to aggregated data from only suboptimal demonstrators.
The proposed method improves the win-rate (average win-rate $0.702$) over MaxEnt, increasing from $0.637$ at $K=1$ to $0.752$ at $K=5$, aligning with the recovery guarantee in Theorem~\ref{thm:bound2} and with the tabular experiments above.
These results suggest that our framework remains effective beyond tabular domains when implemented with the proposed function approximation algorithm.

\section{Conclusion}\label{sec:conclusion}
This work develops a feasible-set-based IRL framework for learning from multiple imperfect demonstrators without assuming access to an optimal demonstrator. By encoding demonstrator suboptimality as linear constraints and intersecting them, the approach ensures monotonic refinement of the candidate reward set and provides theoretical conditions for exact recovery under sufficient coverage. Experiments in both tabular and high-dimensional settings support these predictions. Overall, the framework opens a new avenue for principled IRL beyond likelihood-based methods, particularly in realistic scenarios with imperfect and heterogeneous demonstrations.



{
\small
\bibliographystyle{unsrtnat}
\bibliography{reference}

@article{ouyang2022training,
  title={Training language models to follow instructions with human feedback},
  author={Ouyang, Long and Wu, Jeffrey and Jiang, Xu and Almeida, Diogo and Wainwright, Carroll and Mishkin, Pamela and Zhang, Chong and Agarwal, Sandhini and Slama, Katarina and Ray, Alex and others},
  journal={Advances in neural information processing systems},
  volume={35},
  pages={27730--27744},
  year={2022}
}

@article{ziegler2019fine,
  title={Fine-tuning language models from human preferences},
  author={Ziegler, Daniel M and Stiennon, Nisan and Wu, Jeffrey and Brown, Tom B and Radford, Alec and Amodei, Dario and Christiano, Paul and Irving, Geoffrey},
  journal={arXiv preprint arXiv:1909.08593},
  year={2019}
}

@article{christiano2017deep,
  title={Deep reinforcement learning from human preferences},
  author={Christiano, Paul F and Leike, Jan and Brown, Tom and Martic, Miljan and Legg, Shane and Amodei, Dario},
  journal={Advances in neural information processing systems},
  volume={30},
  year={2017}
}

@article{park2024rlhf,
  title={Rlhf from heterogeneous feedback via personalization and preference aggregation},
  author={Park, Chanwoo and Liu, Mingyang and Kong, Dingwen and Zhang, Kaiqing and Ozdaglar, Asuman},
  journal={arXiv preprint arXiv:2405.00254},
  year={2024}
}

@inproceedings{kim2024unified,
  title={A Unified Linear Programming Framework for Offline Reward Learning from Human Demonstrations and Feedback},
  author={Kim, Kihyun and Zhang, Jiawei and Ozdaglar, Asuman E and Parrilo, Pablo},
  booktitle={International Conference on Machine Learning},
  pages={24694--24712},
  year={2024},
  organization={PMLR}
}

@article{yang2025qwen,
      title={Qwen2.5 Technical Report}, 
      author={An Yang and Baosong Yang and Beichen Zhang and Binyuan Hui and Bo Zheng and Bowen Yu and Chengyuan Li and Dayiheng Liu and Fei Huang and Haoran Wei and Huan Lin and Jian Yang and Jianhong Tu and Jianwei Zhang and Jianxin Yang and Jiaxi Yang and Jingren Zhou and Junyang Lin and Kai Dang and Keming Lu and Keqin Bao and Kexin Yang and Le Yu and Mei Li and Mingfeng Xue and Pei Zhang and Qin Zhu and Rui Men and Runji Lin and Tianhao Li and Tianyi Tang and Tingyu Xia and Xingzhang Ren and Xuancheng Ren and Yang Fan and Yang Su and Yichang Zhang and Yu Wan and Yuqiong Liu and Zeyu Cui and Zhenru Zhang and Zihan Qiu},
      journal={arXiv preprint arXiv:2412.15115},
      year={2024}
}

@inproceedings{ziebart2008maximum,
  title={Maximum entropy inverse reinforcement learning.},
  author={Ziebart, Brian D and Maas, Andrew L and Bagnell, J Andrew and Dey, Anind K and others},
  booktitle={Aaai},
  volume={8},
  pages={1433--1438},
  year={2008},
  organization={Chicago, IL, USA}
}

@inproceedings{ng2000algorithms,
  title={Algorithms for Inverse Reinforcement Learning},
  author={Ng, Andrew Y and Russell, Stuart J},
  booktitle={International Conference on Machine Learning},
  pages={663--670},
  year={2000},
  organization={PMLR}
}

@inproceedings{ramachandran2007bayesian,
  title={Bayesian Inverse Reinforcement Learning.},
  author={Ramachandran, Deepak and Amir, Eyal and others},
  booktitle={IJCAI},
  volume={7},
  pages={2586--2591},
  year={2007}
}

@article{poiani2024sub,
  title={Sub-optimal experts mitigate ambiguity in inverse reinforcement learning},
  author={Poiani, Riccardo and Curti, Gabriele and Metelli, Alberto M and Restelli, Marcello},
  journal={Advances in Neural Information Processing Systems},
  volume={37},
  pages={85778--85823},
  year={2024}
}

@article{lazzati2025generalizing,
  title={Generalizing Behavior via Inverse Reinforcement Learning with Closed-Form Reward Centroids},
  author={Lazzati, Filippo and Metelli, Alberto Maria},
  journal={arXiv preprint arXiv:2509.12010},
  year={2025}
}

@inproceedings{beliaev2025inverse,
  title={Inverse reinforcement learning by estimating expertise of demonstrators},
  author={Beliaev, Mark and Pedarsani, Ramtin},
  booktitle={Proceedings of the AAAI Conference on Artificial Intelligence},
  volume={39},
  pages={15532--15540},
  year={2025}
}

@article{hoffman1952approximate,
  title={On Approximate Solutions of Systems of Linear Inequalities},
  author={Hoffman, Alan J},
  journal={Journal of Research of the National Bureau of Standards},
  volume={49},
  number={4},
  year={1952}
}

@article{pena2021new,
  title={New characterizations of Hoffman constants for systems of linear constraints},
  author={Pena, Javier and Vera, Juan C and Zuluaga, Luis F},
  journal={Mathematical Programming},
  volume={187},
  number={1},
  pages={79--109},
  year={2021},
  publisher={Springer}
}

@inproceedings{abbeel2004apprenticeship,
  title={Apprenticeship learning via inverse reinforcement learning},
  author={Abbeel, Pieter and Ng, Andrew Y},
  booktitle={International Conference on Machine Learning},
  pages={1},
  year={2004}
}

@inproceedings{ratliff2006maximum,
  title={Maximum margin planning},
  author={Ratliff, Nathan D and Bagnell, J Andrew and Zinkevich, Martin A},
  booktitle={International Conference on Machine Learning},
  pages={729--736},
  year={2006}
}

@inproceedings{boularias2011relative,
  title={Relative Entropy Inverse Reinforcement Learning},
  author={Boularias, A and Kober, J and Peters, J},
  booktitle={Fourteenth International Conference on Artificial Intelligence and Statistics (AISTATS 2011)},
  pages={182--189},
  year={2011},
  organization={MIT Press}
}

@inproceedings{finn2016guided,
  title={Guided cost learning: Deep inverse optimal control via policy optimization},
  author={Finn, Chelsea and Levine, Sergey and Abbeel, Pieter},
  booktitle={International conference on machine learning},
  pages={49--58},
  year={2016},
  organization={PMLR}
}

@article{fu2017learning,
  title={Learning robust rewards with adversarial inverse reinforcement learning},
  author={Fu, Justin and Luo, Katie and Levine, Sergey},
  journal={arXiv preprint arXiv:1710.11248},
  year={2017}
}

@inproceedings{dimitrakakis2011bayesian,
  title={Bayesian multitask inverse reinforcement learning},
  author={Dimitrakakis, Christos and Rothkopf, Constantin A},
  booktitle={European Workshop on Reinforcement Learning},
  pages={273--284},
  year={2011},
  organization={Springer}
}

@inproceedings{babes2011apprenticeship,
  title={Apprenticeship learning about multiple intentions},
  author={Babes, Monica and Marivate, Vukosi and Subramanian, Kaushik and Littman, Michael L},
  booktitle={International Conference on Machine Learning},
  pages={897--904},
  year={2011}
}

@inproceedings{beliaev2022imitation,
  title={Imitation learning by estimating expertise of demonstrators},
  author={Beliaev, Mark and Shih, Andy and Ermon, Stefano and Sadigh, Dorsa and Pedarsani, Ramtin},
  booktitle={International Conference on Machine Learning},
  pages={1732--1748},
  year={2022},
  organization={PMLR}
}

@inproceedings{brown2019extrapolating,
  title={Extrapolating beyond suboptimal demonstrations via inverse reinforcement learning from observations},
  author={Brown, Daniel and Goo, Wonjoon and Nagarajan, Prabhat and Niekum, Scott},
  booktitle={International Conference on Machine Learning},
  pages={783--792},
  year={2019},
  organization={PMLR}
}

@inproceedings{brown2020better,
  title={Better-than-demonstrator imitation learning via automatically-ranked demonstrations},
  author={Brown, Daniel S and Goo, Wonjoon and Niekum, Scott},
  booktitle={Conference on Robot Learning},
  pages={330--359},
  year={2020},
  organization={PMLR}
}

@article{sikchi2023ranking,
  title={A Ranking Game for Imitation Learning},
  author={Sikchi, H and Saran, A and Goo, W and Niekum, S},
  journal={Transactions on machine learning research},
  year={2023}
}

@article{adams2022survey,
  title={A survey of inverse reinforcement learning},
  author={Adams, Stephen and Cody, Tyler and Beling, Peter A},
  journal={Artificial Intelligence Review},
  volume={55},
  number={6},
  pages={4307--4346},
  year={2022},
  publisher={Springer}
}

@inproceedings{chen2021learning,
  title={Learning from suboptimal demonstration via self-supervised reward regression},
  author={Chen, Letian and Paleja, Rohan and Gombolay, Matthew},
  booktitle={Conference on robot learning},
  pages={1262--1277},
  year={2021},
  organization={PMLR}
}

@inproceedings{jayanthi2023droid,
  title={Droid: Learning from offline heterogeneous demonstrations via reward-policy distillation},
  author={Jayanthi, Sravan and Chen, Letian and Balabanska, Nadya and Duong, Van and Scarlatescu, Erik and Ameperosa, Ezra and Zaidi, Zulfiqar Haider and Martin, Daniel and Del Matto, Taylor Keith and Ono, Masahiro and others},
  booktitle={Conference on Robot Learning},
  pages={1547--1571},
  year={2023},
  organization={PMLR}
}

@inproceedings{chen2020joint,
  title={Joint goal and strategy inference across heterogeneous demonstrators via reward network distillation},
  author={Chen, Letian and Paleja, Rohan and Ghuy, Muyleng and Gombolay, Matthew},
  booktitle={Proceedings of the 2020 ACM/IEEE international conference on human-robot interaction},
  pages={659--668},
  year={2020}
}

@article{bukharin2024robust,
  title={Robust reinforcement learning from corrupted human feedback},
  author={Bukharin, Alexander and Hong, Ilgee and Jiang, Haoming and Li, Zichong and Zhang, Qingru and Zhang, Zixuan and Zhao, Tuo},
  journal={Advances in Neural Information Processing Systems},
  volume={37},
  pages={124093--124113},
  year={2024}
}

@inproceedings{zhang2025diverging,
  title={Diverging Preferences: When do Annotators Disagree and do Models Know?},
  author={Zhang, Michael Jq and Wang, Zhilin and Hwang, Jena D and Dong, Yi and Delalleau, Olivier and Choi, Yejin and Choi, Eunsol and Ren, Xiang and Pyatkin, Valentina},
  booktitle={International Conference on Machine Learning},
  pages={76193--76212},
  year={2025},
  organization={PMLR}
}

@inproceedings{metelli2021provably,
  title={Provably efficient learning of transferable rewards},
  author={Metelli, Alberto Maria and Ramponi, Giorgia and Concetti, Alessandro and Restelli, Marcello},
  booktitle={International Conference on Machine Learning},
  pages={7665--7676},
  year={2021},
  organization={PMLR}
}

@inproceedings{metelli2023towards,
  title={Towards theoretical understanding of inverse reinforcement learning},
  author={Metelli, Alberto Maria and Lazzati, Filippo and Restelli, Marcello},
  booktitle={International Conference on Machine Learning},
  pages={24555--24591},
  year={2023},
  organization={PMLR}
}

@article{lindner2022active,
  title={Active exploration for inverse reinforcement learning},
  author={Lindner, David and Krause, Andreas and Ramponi, Giorgia},
  journal={Advances in Neural Information Processing Systems},
  volume={35},
  pages={5843--5853},
  year={2022}
}

@inproceedings{zhao2024inverse,
  title={Is Inverse Reinforcement Learning Harder than Standard Reinforcement Learning? A Theoretical Perspective},
  author={Zhao, Lei and Wang, Mengdi and Bai, Yu},
  booktitle={International Conference on Machine Learning},
  pages={60957--61020},
  year={2024},
  organization={PMLR}
}

@article{freihaut2024feasible,
  title={On feasible rewards in multi-agent inverse reinforcement learning},
  author={Freihaut, Till and Ramponi, Giorgia},
  journal={arXiv preprint arXiv:2411.15046},
  year={2024}
}

@article{ratliff2009learning,
  title={Learning to search: Functional gradient techniques for imitation learning},
  author={Ratliff, Nathan D and Silver, David and Bagnell, J Andrew},
  journal={Autonomous Robots},
  volume={27},
  number={1},
  pages={25--53},
  year={2009},
  publisher={Springer}
}

@book{puterman1994markov,
  title={Markov decision processes: discrete stochastic dynamic programming},
  author={Puterman, Martin L},
  year={1994},
  publisher={John Wiley \& Sons}
}

@book{shapiro2021lectures,
  title={Lectures on stochastic programming: modeling and theory},
  author={Shapiro, Alexander and Dentcheva, Darinka and Ruszczynski, Andrzej},
  year={2021},
  publisher={SIAM}
}

@inproceedings{bain1995framework,
  title={A Framework for Behavioural Cloning.},
  author={Bain, Michael and Sammut, Claude},
  booktitle={Machine intelligence 15},
  pages={103--129},
  year={1995}
}

@inproceedings{ross2010efficient,
  title={Efficient reductions for imitation learning},
  author={Ross, St{\'e}phane and Bagnell, Drew},
  booktitle={Proceedings of the thirteenth international conference on artificial intelligence and statistics},
  pages={661--668},
  year={2010},
  organization={JMLR Workshop and Conference Proceedings}
}

@inproceedings{ross2011reduction,
  title={A reduction of imitation learning and structured prediction to no-regret online learning},
  author={Ross, St{\'e}phane and Gordon, Geoffrey and Bagnell, Drew},
  booktitle={Proceedings of the fourteenth international conference on artificial intelligence and statistics},
  pages={627--635},
  year={2011},
  organization={JMLR Workshop and Conference Proceedings}
}

@article{ross2014reinforcement,
  title={Reinforcement and imitation learning via interactive no-regret learning},
  author={Ross, Stephane and Bagnell, J Andrew},
  journal={arXiv preprint arXiv:1406.5979},
  year={2014}
}

@article{ho2016generative,
  title={Generative adversarial imitation learning},
  author={Ho, Jonathan and Ermon, Stefano},
  journal={Advances in neural information processing systems},
  volume={29},
  year={2016}
}

@inproceedings{shiarlis2016inverse,
  title={Inverse Reinforcement Learning from Failure},
  author={Shiarlis, Kyriacos and Messias, Joao and Whiteson, Shimon},
  booktitle={Proceedings of the 2016 International Conference on Autonomous Agents \& Multiagent Systems},
  pages={1060--1068},
  year={2016}
}

@article{zhang2021confidence,
  title={Confidence-aware imitation learning from demonstrations with varying optimality},
  author={Zhang, Songyuan and Cao, Zhangjie and Sadigh, Dorsa and Sui, Yanan},
  journal={Advances in Neural Information Processing Systems},
  volume={34},
  pages={12340--12350},
  year={2021}
}

@article{gao2018reinforcement,
  title={Reinforcement learning from imperfect demonstrations},
  author={Gao, Yang and Xu, Huazhe and Lin, Ji and Yu, Fisher and Levine, Sergey and Darrell, Trevor},
  journal={arXiv preprint arXiv:1802.05313},
  year={2018}
}

@article{ibarz2018reward,
  title={Reward learning from human preferences and demonstrations in atari},
  author={Ibarz, Borja and Leike, Jan and Pohlen, Tobias and Irving, Geoffrey and Legg, Shane and Amodei, Dario},
  journal={Advances in neural information processing systems},
  volume={31},
  year={2018}
}

@article{syed2007game,
  title={A game-theoretic approach to apprenticeship learning},
  author={Syed, Umar and Schapire, Robert E},
  journal={Advances in neural information processing systems},
  volume={20},
  year={2007}
}

@inproceedings{syed2008apprenticeship,
  title={Apprenticeship learning using linear programming},
  author={Syed, Umar and Bowling, Michael and Schapire, Robert E},
  booktitle={Proceedings of the 25th international conference on Machine learning},
  pages={1032--1039},
  year={2008}
}

@article{bradley1952rank,
  title={Rank analysis of incomplete block designs: I. the method of paired comparisons},
  author={Bradley, Ralph Allan and Terry, Milton E},
  journal={Biometrika},
  volume={39},
  number={3/4},
  pages={324--345},
  year={1952},
  publisher={JSTOR}
}

@book{luce1959individual,
  title={Individual choice behavior},
  author={Luce, R Duncan and others},
  volume={4},
  year={1959},
  publisher={Wiley New York}
}

@article{plackett1975analysis,
  title={The analysis of permutations},
  author={Plackett, Robin L},
  journal={Journal of the Royal Statistical Society Series C: Applied Statistics},
  volume={24},
  number={2},
  pages={193--202},
  year={1975},
  publisher={Oxford University Press}
}
}

\newpage
\appendix
\section{Additional related work}\label{app:related}
\paragraph{IRL from a single optimal demonstrator and apprenticeship learning.}
Classical IRL studies the recovery of a reward function that rationalizes a single optimal demonstrator's behavior.
The foundational formulation of \citet{ng2000algorithms} characterizes rewards under which the demonstrated policy is optimal, thereby making explicit the non-identifiability of IRL and motivating solution concepts based on constraints.
\citet{abbeel2004apprenticeship} shifted the emphasis toward apprenticeship learning through feature expectation matching, while \citet{ratliff2006maximum} introduced max-margin formulations that select rewards or cost functions separating expert behavior from alternatives.
The game-theoretic and LP apprenticeship-learning formulations \citep{syed2007game, syed2008apprenticeship} are also relevant to our occupancy-measure viewpoint, as they cast learning from demonstrations as an optimization problem over policies and rewards.
Our contribution differs from this lineage in that we do not assume access to an optimal demonstrator to match or compete with; instead, we build a feasible reward set by intersecting slack-bounded constraints from multiple imperfect demonstrators.
Probabilistic approaches model expert stochasticity more directly: Bayesian IRL~\citep{ramachandran2007bayesian} places a posterior over rewards, and MaxEnt IRL~\citep{ziebart2008maximum} resolves ambiguity by choosing the distribution over trajectories with highest entropy subject to matching demonstration statistics.
These likelihood-based ideas have since become a dominant template for scalable IRL, including scalable variants such as Relative Entropy IRL~\citep{boularias2011relative}, Guided Cost Learning~\citep{finn2016guided}, and adversarial IRL (AIRL)~\citep{fu2017learning}.

\paragraph{Feasible-set approaches in IRL.} 
Our work is most closely related to the feasible-reward-set view of IRL, which frames reward learning not as selecting a single maximum-likelihood reward, but as characterizing the full set of rewards consistent with the demonstrations through optimality constraints.
This perspective is implicit in the original IRL formulation~\citep{ng2000algorithms} and is developed explicitly and with theoretical precision in \citet{metelli2021provably, metelli2023towards}, which formulate the problem of estimating the feasible reward set from finite-horizon Bellman equations and analyze its sample complexity.
\citet{lindner2022active} further studies an efficient trajectory-sampling strategy that relaxes the need for a state-action generative model when estimating the feasible reward set.
In the offline setting, \citet{zhao2024inverse} and \citet{kim2024unified} analyze sample complexity using penalty-based and LP-based approaches, respectively.
\citet{freihaut2024feasible} studies the feasible reward set for multi-agent IRL in entropy-regularized Markov games.
\citet{poiani2024sub} show that incorporating additional suboptimal demonstrations can shrink the feasible set and reduce ambiguity.
More recently, \citet{lazzati2025generalizing} study how to select principled representatives, such as reward centroids, from the feasible region to improve generalization.
Our method fits squarely within this feasible-set tradition, but differs in two important respects: we consider the setting with multiple suboptimal demonstrations only and no optimal demonstrator, and we aim to recover the ground-truth reward set rather than merely shrink the feasible set.

\paragraph{IRL from multiple demonstrations via MLE-based approaches.}
A natural extension of single-optimal-demonstrator MaxEnt IRL to heterogeneous demonstrators is to assume a shared latent reward while introducing demonstrator-specific noise, confidence, or inverse-temperature parameters, then fit the reward by maximizing an aggregate likelihood over all demonstrations.
This line includes Bayesian multi-task IRL formulations that cluster demonstrations and infer task-specific rewards~\citep{dimitrakakis2011bayesian, babes2011apprenticeship}, as well as more recent methods that explicitly estimate demonstrator expertise so that more reliable demonstrators exert more influence on the learned reward~\citep{beliaev2022imitation, beliaev2025inverse}.
Relatedly, IRL from failure shows that successful and failed demonstrations can reduce reward ambiguity~\citep{shiarlis2016inverse}.
The main strength of this family is statistical efficiency and compatibility with standard probabilistic modeling, since heterogeneity is absorbed into soft-optimality parameters or latent clusters rather than hard constraints.
However, these methods still commit to a single point estimate chosen to explain the observed visitation frequencies, which can be problematic when all demonstrators are suboptimal and no optimal demonstrator is available: the failure mode emphasized in our main text.

\paragraph{Ranking and preference-based reward learning.}
Another line of work leverages ranking or margin information rather than assuming a probabilistic model for likelihood maximization.
Classical choice models such as Bradley--Terry~\citep{bradley1952rank}, Luce choice~\citep{luce1959individual}, and Plackett--Luce ranking models~\citep{plackett1975analysis} provide statistical foundations for pairwise and listwise preference likelihoods.
The idea of enforcing a margin constraint appears in early imitation-learning work such as maximum margin planning (MMP)~\citep{ratliff2006maximum, ratliff2009learning}, which imposes a margin in the reward-optimization problem so that the demonstrated policy attains higher expected reward than alternative policies.
In the suboptimal-data regime, this idea reappears in trajectory-ranking and preference-based formulations such as T-REX~\citep{brown2019extrapolating}, which learns a reward from approximately ranked demonstrations and can extrapolate beyond imperfect behavior, and D-REX~\citep{brown2020better}, which automatically generates rankings by perturbing a cloned policy.
More recently, the ranking-game framework of \citet{sikchi2023ranking} proposes a two-player formulation that jointly learns a reward and a policy from expert demonstrations and offline preferences.
Although these methods are closely related in spirit to our objective because they use comparative structure to rule out degenerate rewards, they typically rely on explicit preference or ranking information rather than the maximum suboptimality level considered here.
In practical settings, explicit preference or ranking information is often difficult to obtain because demonstrators may have different areas of expertise.

\paragraph{Imitation learning (IL).}
IL learns policies directly from demonstrations rather than first recovering an explicit reward function.
The simplest formulation is behavior cloning, which casts IL as supervised learning over expert state-action pairs but is prone to distribution shift in sequential decision-making~\citep{bain1995framework}.
Reduction-based and interactive methods such as SMILe and DAgger address the resulting compounding-error and covariate-shift issues by analyzing sequential prediction and querying the expert on learner-induced states~\citep{ross2010efficient,ross2011reduction}.
Value-aware extensions such as AggreVaTe incorporate expert cost-to-go information so that imitation reflects long-horizon decision quality~\citep{ross2014reinforcement}, while GAIL reframes imitation as adversarial occupancy-measure matching~\citep{ho2016generative}.
More directly related to our motivation, \citet{zhang2021confidence} learn confidence scores and policies from demonstrations with varying optimality, and \citet{gao2018reinforcement} study reinforcement learning from imperfect demonstrations using an actor-critic objective designed to surpass demonstrator performance.
These works address the practical problem of learning good policies from imperfect data; our focus is instead reward recovery and feasible-set ambiguity under declared suboptimality bounds.

\paragraph{Reinforcement learning from human feedback (RLHF).}
Our motivation from LLM raters connects the proposed setting to RLHF.
\citet{christiano2017deep} learn rewards from non-expert human pairwise preferences, \citet{ibarz2018reward} combine demonstrations and trajectory preferences in Atari, \citet{ziegler2019fine} fine-tune language models from human preferences, and \citet{ouyang2022training} establish the InstructGPT pipeline using demonstrations, rankings, reward modeling, and policy optimization.
These works use human comparisons to train reward models and policies at scale.
In our framework, such comparisons can impose additional linear constraints on the reward set, thereby reducing ambiguity, as discussed in Appendix~\ref{app:pairwise_performance_gap}.

\section{Proof of Proposition~\ref{prop:lp_formulation}}\label{app:prop:lp_formulation}
By Definition~\ref{def:feasible_reward_set},
\begin{equation}
    \mathcal R(\mathcal D)=\left\{r\in\Delta(S \times A)\mid \subopt(r,d_e^k)\le \epsilon^k \quad \forall k \in [K]\right\}.
\end{equation}
Thus, for a fixed reward vector $r$, it suffices to prove
\begin{align}
    &\subopt(r,d_e^k)\le \epsilon^k \quad \forall k \in [K] \\
    &\Longleftrightarrow\quad
    \exists v\in\mathbb R^{|S|}\quad \text{s.t.}\quad M^\top v\ge r,\;
    (1-\gamma)\mu_0^\top v-r^\top d_e^k \le \epsilon^k \quad \forall k \in [K].
\end{align}

Consider the LP formulation of an MDP
\begin{equation}
    \max_{d\in\Phi} r^\top d,
\end{equation}
whose dual is
\begin{equation}
    \min_{v\in\mathbb R^{|S|}} (1-\gamma)\mu_0^\top v
    \quad \text{s.t.} \quad M^\top v\ge r.
\end{equation}
By strong duality, these two problems have the same optimal value, and the dual optimum is attained. Let
\begin{equation}
    J^\star(r):=\min_{v:\,M^\top v\ge r}(1-\gamma)\mu_0^\top v
    =\max_{d\in\Phi} r^\top d.
\end{equation}
Then, for every $k\in[K]$,
\begin{equation}
    \subopt(r,d_e^k)\le \epsilon^k
    \quad\Longleftrightarrow\quad
    J^\star(r)\le \epsilon^k+r^\top d_e^k.
\end{equation}
Therefore,
\begin{equation}
    \subopt(r,d_e^k)\le \epsilon^k \quad \forall k \in [K]
    \quad\Longleftrightarrow\quad
    J^\star(r)\le \epsilon^k+r^\top d_e^k \quad \forall k \in [K].
\end{equation}
Since the dual optimum is attained, there exists $v^\star\in\mathbb R^{|S|}$ such that
\begin{equation}
    M^\top v^\star\ge r,
    \qquad
    (1-\gamma)\mu_0^\top v^\star=J^\star(r).
\end{equation}
This $v^\star$ depends only on $r$, not on $k$. Substituting $J^\star(r)=(1-\gamma)\mu_0^\top v^\star$ into the previous display yields
\begin{align}
    &\subopt(r,d_e^k)\le \epsilon^k \quad \forall k \in [K]\\
    &\Longleftrightarrow\quad
    \exists v^\star\in\mathbb R^{|S|}\quad \text{s.t.}\quad M^\top v^\star\ge r,\;
    (1-\gamma)\mu_0^\top v^\star-r^\top d_e^k \le \epsilon^k
    \quad \forall k \in [K],
\end{align}
which proves the forward implication.
Conversely, suppose there exists some $v\in\mathbb R^{|S|}$ satisfying
$M^\top v\ge r$ and
\begin{equation}
    (1-\gamma)\mu_0^\top v-r^\top d_e^k\le \epsilon^k
    \quad \forall k\in[K].
\end{equation}
Because $J^\star(r)$ is the minimum dual value over all such feasible $v$,
\begin{equation}
    J^\star(r)-r^\top d_e^k
    \le
    (1-\gamma)\mu_0^\top v-r^\top d_e^k
    \le
    \epsilon^k
    \quad \forall k\in[K].
\end{equation}
Thus $\subopt(r,d_e^k)\le \epsilon^k$ for every $k$, completing the proof.

\section{Proof of Theorem~\ref{thm:monotonicity}}\label{app:thm:monotonicity}
\begin{lemma}[Coverage gap from newly visited mass]\label{lem:coverage_gap}
Let $\mathcal Z_{\mathcal D}:=\bigcup_{k=1}^K \supp(d_e^k)$ and
\begin{equation}
    \alpha(d_e',\mathcal D):=\sum_{(s,a)\notin\mathcal Z_{\mathcal D}}d_e'(s,a).
\end{equation}
There exists a reward $r_{\mathcal Z}\in\mathcal R(\mathcal D)$ such that
\begin{equation}
    \subopt(r_{\mathcal Z},d_e')
    =
    \frac{\alpha(d_e',\mathcal D)}{|\mathcal Z_{\mathcal D}|}.
\end{equation}
\end{lemma}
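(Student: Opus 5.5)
The reward we will use is the uniform reward on the previously visited set,
\begin{equation}
    r_{\mathcal Z}(s,a):=\frac{\mathbf 1\{(s,a)\in\mathcal Z_{\mathcal D}\}}{|\mathcal Z_{\mathcal D}|}.
\end{equation}
The plan has three steps: show $r_{\mathcal Z}$ lies in $\mathcal R(\mathcal D)$, compute the optimal value $\max_{d\in\Phi}r_{\mathcal Z}^\top d$, and evaluate the return of $d_e'$. Since $K\ge 1$ and each $d_e^k$ is a nonzero occupancy measure, $\mathcal Z_{\mathcal D}\neq\emptyset$. Hence $r_{\mathcal Z}$ is well defined, nonnegative, and sums to one, so $r_{\mathcal Z}\in\Delta(S\times A)$.

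The key auxiliary fact is that every $d\in\Phi$ has total mass one. To see this, sum the constraint $Md=(1-\gamma)\mu_0$ over $s'$. For each column $(s,a)$ of $M$ we have $\sum_{s'}M(s',(s,a))=1-\gamma\sum_{s'}P(s'|s,a)=1-\gamma$. Therefore $(1-\gamma)\mathbf 1^\top d=(1-\gamma)\mathbf 1^\top\mu_0=1-\gamma$, which gives $\mathbf 1^\top d=1$. It follows that for any $d\in\Phi$,
\begin{equation}
    r_{\mathcal Z}^\top d=\frac{1}{|\mathcal Z_{\mathcal D}|}\sum_{(s,a)\in\mathcal Z_{\mathcal D}}d(s,a)\le\frac{1}{|\mathcal Z_{\mathcal D}|}.
\end{equation}
Equality holds whenever $\supp(d)\subseteq\mathcal Z_{\mathcal D}$, and in particular for every $d_e^k$. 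Consequently $\max_{d\in\Phi}r_{\mathcal Z}^\top d=1/|\mathcal Z_{\mathcal D}|$, and this maximum is attained by each $d_e^k\in\Phi$.

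We now finish both claims. First, $\subopt(r_{\mathcal Z},d_e^k)=0\le\epsilon^k$ for all $k$, so $r_{\mathcal Z}\in\mathcal R(\mathcal D)$ by Definition~\ref{def:feasible_reward_set}. Second, again using $\mathbf 1^\top d_e'=1$, the return of the new demonstrator is
\begin{equation}
    r_{\mathcal Z}^\top d_e'=\frac{1-\alpha(d_e',\mathcal D)}{|\mathcal Z_{\mathcal D}|}.
\end{equation}
Subtracting this from the optimal value gives $\subopt(r_{\mathcal Z},d_e')=\alpha(d_e',\mathcal D)/|\mathcal Z_{\mathcal D}|$, as claimed.

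The only step that needs care is the normalization $\mathbf 1^\top d=1$ on $\Phi$. Everything else is direct substitution, so I do not expect a genuine obstacle.
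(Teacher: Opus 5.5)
Your proposal is correct and follows the paper's proof. Both use the uniform reward on $\mathcal Z_{\mathcal D}$, note that every $d_e^k$ attains the maximal value $1/|\mathcal Z_{\mathcal D}|$ so that $r_{\mathcal Z}\in\mathcal R(\mathcal D)$, and compute $r_{\mathcal Z}^\top d_e'=(1-\alpha(d_e',\mathcal D))/|\mathcal Z_{\mathcal D}|$; your derivation of $\mathbf 1^\top d=1$ from $Md=(1-\gamma)\mu_0$, and hence of $\mathcal Z_{\mathcal D}\neq\emptyset$, spells out a step the paper only asserts.
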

\begin{proof}
The idea is to build a reward that makes all previously observed state-action pairs equally valuable and assigns zero reward elsewhere. Define
\begin{equation}
    r_{\mathcal Z}(s,a)
    :=
    \begin{cases}
        1/|\mathcal Z_{\mathcal D}|, & (s,a)\in \mathcal Z_{\mathcal D},\\
        0, & (s,a)\notin \mathcal Z_{\mathcal D}.
    \end{cases}
\end{equation}
Then $r_{\mathcal Z}\in\Delta(S \times A)$.
Since every previous demonstrator has support contained in $\mathcal Z_{\mathcal D}$ and every occupancy measure has unit mass,
\begin{equation}
    r_{\mathcal Z}^\top d_e^k
    =
    \frac{1}{|\mathcal Z_{\mathcal D}|}
    \quad \forall k\in[K].
\end{equation}
For any $d\in\Phi$, we also have $r_{\mathcal Z}^\top d\le 1/|\mathcal Z_{\mathcal D}|$.
Thus, $\subopt(r_{\mathcal Z},d_e^k)=0\le\epsilon^k$ for every $k$, and hence $r_{\mathcal Z}\in\mathcal R(\mathcal D)$.

The only way the new demonstrator loses value under this reward is by placing mass outside the old support. Therefore,
\begin{equation}
    r_{\mathcal Z}^\top d_e'
    =
    \frac{1-\alpha(d_e',\mathcal D)}{|\mathcal Z_{\mathcal D}|}.
\end{equation}
Since the previous demonstrators attain value $1/|\mathcal Z_{\mathcal D}|$ under $r_{\mathcal Z}$,
\begin{equation}
    \subopt(r_{\mathcal Z},d_e')
    =
    \max_{d\in\Phi} r_{\mathcal Z}^\top d-r_{\mathcal Z}^\top d_e'
    =
    \frac{1}{|\mathcal Z_{\mathcal D}|} - \frac{1-\alpha(d_e',\mathcal D)}{|\mathcal Z_{\mathcal D}|}
    =
    \frac{\alpha(d_e',\mathcal D)}{|\mathcal Z_{\mathcal D}|}.
\end{equation}
\end{proof}

\begin{proof}[Proof of Theorem~\ref{thm:monotonicity}]
By Definition~\ref{def:feasible_reward_set},
\begin{equation}
    \mathcal R(\mathcal D^+)
    =
    \mathcal R(\mathcal D)\cap \mathcal R(d_e';\epsilon').
\end{equation}
The augmented feasible set is therefore obtained by adding one more constraint to the old feasible set, so $\mathcal R(\mathcal D^+) \subseteq \mathcal R(\mathcal D)$.
Since Assumption~\ref{ass:consistency} holds for the augmented dataset $\mathcal D^+$, we also have $\mathcal R^\star \subseteq \mathcal R(\mathcal D^+)$.

By Lemma~\ref{lem:coverage_gap}, there exists $r_{\mathcal Z}\in\mathcal R(\mathcal D)$ such that
\begin{equation}
    \subopt(r_{\mathcal Z},d_e')
    =
    \frac{\alpha(d_e',\mathcal D)}{|\mathcal Z_{\mathcal D}|}
    >
    \epsilon'.
\end{equation}
Therefore, $r_{\mathcal Z}\notin \mathcal R(d_e';\epsilon')$.
Thus the added constraint removes at least one reward that was previously feasible, namely $r_{\mathcal Z}$. Hence $\mathcal R(\mathcal D^+) \subsetneq \mathcal R(\mathcal D)$.
Combining this with $\mathcal R^\star \subseteq \mathcal R(\mathcal D^+)$ proves the theorem.
\end{proof}

\begin{lemma}[Exact characterization of strict shrinkage]\label{lem:strict_shrinkage_characterization}
Suppose Assumption~\ref{ass:consistency} holds for the dataset $\mathcal{D}$ and the augmented dataset $\mathcal{D}^+ := \mathcal{D}\cup\{(d_e',\epsilon')\}$.
Then,
\begin{equation}
    \mathcal{R}^\star \subseteq \mathcal{R}(\mathcal{D}^+) \subsetneq \mathcal{R}(\mathcal{D})
\end{equation}
if and only if
\begin{equation}
    \epsilon' < \max_{r \in \mathcal{R}(\mathcal{D})} \subopt(r,d_e').
\end{equation}
\end{lemma}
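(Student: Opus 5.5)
The plan is to separate the chain $\mathcal{R}^\star \subseteq \mathcal{R}(\mathcal{D}^+) \subsetneq \mathcal{R}(\mathcal{D})$ into the part that always holds and the part that is equivalent to the stated inequality. As in the proof of Theorem~\ref{thm:monotonicity}, Definition~\ref{def:feasible_reward_set} gives $\mathcal{R}(\mathcal{D}^+) = \mathcal{R}(\mathcal{D}) \cap \mathcal{R}(d_e';\epsilon')$. Hence $\mathcal{R}(\mathcal{D}^+) \subseteq \mathcal{R}(\mathcal{D})$ always holds. Assumption~\ref{ass:consistency} for $\mathcal{D}^+$ gives $\mathcal{R}^\star \subseteq \mathcal{R}(\mathcal{D}^+)$. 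The chain therefore holds if and only if the inclusion is strict, that is, if and only if some $r \in \mathcal{R}(\mathcal{D})$ violates $\subopt(r,d_e') \le \epsilon'$.

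The heart of the argument is to show that strictness is equivalent to $\epsilon' < \max_{r\in\mathcal{R}(\mathcal{D})}\subopt(r,d_e')$. Two facts are needed first.

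\begin{itemize}
\item The maximum is well defined and attained. $\mathcal{R}(\mathcal{D})$ is nonempty: it contains the uniform reward, or alternatively $\mathcal{R}^\star$ under Assumption~\ref{ass:consistency}, provided $\mathcal{R}^\star \neq \emptyset$. It is also compact, being a closed subset of the simplex; closedness follows from Proposition~\ref{prop:lp_formulation}, or directly from continuity of $\subopt(\cdot,d)$.
\item The map $r \mapsto \subopt(r,d_e') = \max_{\tilde d\in\Phi} r^\top\tilde d - r^\top d_e'$ is continuous in $r$. It is a maximum of linear functions over the compact polytope $\Phi$, so it is convex and Lipschitz.
\end{itemize}

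By Weierstrass, the maximum over $\mathcal{R}(\mathcal{D})$ is then attained.

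The two directions follow. ($\Leftarrow$) Let $r^\dagger$ be a maximizer. If $\subopt(r^\dagger,d_e') > \epsilon'$, then $r^\dagger \in \mathcal{R}(\mathcal{D}) \setminus \mathcal{R}(d_e';\epsilon')$, so the inclusion is strict. ($\Rightarrow$) If some $r \in \mathcal{R}(\mathcal{D})$ lies outside $\mathcal{R}(\mathcal{D}^+)$, then $\epsilon' < \subopt(r,d_e') \le \max_{\mathcal{R}(\mathcal{D})}\subopt(\cdot,d_e')$.

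The only step that needs care is attainment of the maximum, which is what allows the supremum to be replaced by an actual witness reward in the ($\Leftarrow$) direction. I expect compactness and continuity to settle this routinely. I would also remark that Theorem~\ref{thm:monotonicity} is recovered as a corollary: Lemma~\ref{lem:coverage_gap} exhibits $r_{\mathcal Z}\in\mathcal{R}(\mathcal{D})$ with $\subopt(r_{\mathcal Z},d_e') = \alpha(d_e',\mathcal{D})/|\mathcal Z_{\mathcal D}|$, which lower-bounds the maximum.
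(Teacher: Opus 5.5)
Your proof is correct and follows the paper's own argument. Both use $\mathcal{R}(\mathcal{D}^+) = \mathcal{R}(\mathcal{D}) \cap \mathcal{R}(d_e';\epsilon')$ together with Assumption~\ref{ass:consistency} to get both inclusions for free, and then show that strictness holds exactly when some $r \in \mathcal{R}(\mathcal{D})$ has $\subopt(r,d_e') > \epsilon'$. Your compactness and continuity discussion only ensures the $\max$ is well defined, which the paper takes for granted; the $(\Leftarrow)$ direction would not need attainment anyway, since $\epsilon' < \sup$ already gives a witness reward by the definition of supremum.
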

\begin{proof}
The identity
\begin{equation}
    \mathcal R(\mathcal D^+)
    =
    \mathcal R(\mathcal D)\cap \mathcal R(d_e';\epsilon')
\end{equation}
implies $\mathcal R(\mathcal D^+) \subseteq \mathcal R(\mathcal D)$.
Moreover, Assumption~\ref{ass:consistency} for $\mathcal D^+$ gives $\mathcal R^\star \subseteq \mathcal R(\mathcal D^+)$.

It remains to identify exactly when the last inclusion is strict. First, assume
\begin{equation}
    \epsilon' < \max_{r \in \mathcal{R}(\mathcal D)} \subopt(r,d_e').
\end{equation}
Then there exists some $\tilde r \in \mathcal R(\mathcal D)$ such that
\begin{equation}
    \subopt(\tilde r,d_e')>\epsilon'.
\end{equation}
Therefore, $\tilde r\notin \mathcal R(d_e';\epsilon')$, and hence
\begin{equation}
    \tilde r \notin \mathcal R(\mathcal D)\cap \mathcal R(d_e';\epsilon')
    =
    \mathcal R(\mathcal D^+).
\end{equation}
So $\mathcal R(\mathcal D^+) \subsetneq \mathcal R(\mathcal D)$.
Together with $\mathcal R^\star \subseteq \mathcal R(\mathcal D^+)$, this yields the desired strict inclusion.

Conversely, assume
\begin{equation}
    \mathcal R^\star \subseteq \mathcal R(\mathcal D^+) \subsetneq \mathcal R(\mathcal D).
\end{equation}
Since the second inclusion is strict, there exists some $\hat r \in \mathcal R(\mathcal D)$ such that
\begin{equation}
    \hat r \notin \mathcal R(\mathcal D^+).
\end{equation}
Using $\mathcal R(\mathcal D^+) = \mathcal R(\mathcal D)\cap \mathcal R(d_e';\epsilon')$ and $\hat r \in \mathcal R(\mathcal D)$, we must have
\begin{equation}
    \hat r \notin \mathcal R(d_e';\epsilon').
\end{equation}
By Definition~\ref{def:feasible_reward_set}, this means
\begin{equation}
    \subopt(\hat r,d_e')>\epsilon'.
\end{equation}
Since $\hat r \in \mathcal R(\mathcal D)$, it follows that
\begin{equation}
    \max_{r \in \mathcal R(\mathcal D)} \subopt(r,d_e')
    \ge
    \subopt(\hat r,d_e')
    >
    \epsilon',
\end{equation}
which proves the reverse implication.
\end{proof}

\section{One-sided Hausdorff distance and maximum suboptimality}\label{app:hausdorff_distance}

\begin{definition}[One-sided Hausdorff distance]
For two nonempty sets $A,B \subset \mathbb{R}^m$, the one-sided Hausdorff distance from $A$ to $B$ with respect to the $\ell_\infty$ norm is defined as
\begin{equation}
d_H(A\mid B)
:=
\sup_{x \in A} \inf_{y \in B} \|x-y\|_\infty.
\end{equation}
\end{definition}

\begin{proposition}
There exists a constant $C>0$ depending only on the reward-free MDP $\mathcal M$ such that
\begin{equation}
d_H\bigl(\mathcal R(\mathcal D)\mid \mathcal R(d_e^\star)\bigr)
\le
C\,\Gap\bigl(\mathcal R(\mathcal D),\mathcal R(d_e^\star)\bigr).
\end{equation}
In particular, $\Gap(\mathcal R(\mathcal D),\mathcal R(d_e^\star))\to 0$ implies that $\mathcal R(\mathcal D)\to\mathcal R(d_e^\star)$ in the one-sided Hausdorff distance induced by the $\ell_\infty$ norm.
\end{proposition}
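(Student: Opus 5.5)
The proof is an error bound (Hoffman-type) for the polyhedron $\mathcal R(d_e^\star)$. By \eqref{eq:single_lp_formulation}, $\mathcal R(d_e^\star)$ is the projection onto the $r$-coordinates of the polyhedron
\begin{equation*}
P:=\{(r,v)\mid r\in\Delta(S\times A),\; M^\top v\ge r,\; (1-\gamma)\mu_0^\top v - r^\top d_e^\star\le 0\}.
\end{equation*}
Here the equality is replaced by $\le 0$, which is equivalent because $M^\top v\ge r$ already forces $(1-\gamma)\mu_0^\top v\ge r^\top d_e^\star$. The system is linear, and its coefficients depend only on $\mathcal M$ and on $d_e^\star$, which is fixed by the ground truth. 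So Hoffman's lemma~\citep{hoffman1952approximate} gives a constant $H$ with
\begin{equation*}
\operatorname{dist}_\infty\bigl((r,v),P\bigr)\le H\,\bigl\|\text{violation}(r,v)\bigr\|_\infty .
\end{equation*}
This $H$ depends on the constraint matrix only, not on the right-hand side. The constant $C$ of the proposition will be $H$, up to norm conversions.

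The main steps are as follows.
\begin{enumerate}
\item Fix $r\in\mathcal R(\mathcal D)$. Note that $\mathcal R(d_e^\star)$ is nonempty: for instance $r=d_e^\star$ itself, or a reward supported on $\supp(d_e^\star)$ as in Lemma~\ref{lem:coverage_gap}.
\item Choose $v^\star$ to be an optimal dual solution for $r$, which exists by the proof of Proposition~\ref{prop:lp_formulation}. Then $M^\top v^\star\ge r$ and $(1-\gamma)\mu_0^\top v^\star = J^\star(r)$.
\item Under this choice the only violated constraint of $P$ is the gap constraint, and its violation equals $J^\star(r)-r^\top d_e^\star=\subopt(r,d_e^\star)\le \Gap(\mathcal R(\mathcal D),\mathcal R(d_e^\star))$. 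The simplex constraints hold exactly, since $r\in\Delta(S\times A)$.
\item Hoffman's lemma then yields $(r',v')\in P$ with $\|r-r'\|_\infty\le\|(r,v^\star)-(r',v')\|_\infty\le H\,\subopt(r,d_e^\star)$, and $r'\in\mathcal R(d_e^\star)$.
\item Take the supremum over $r\in\mathcal R(\mathcal D)$ to obtain the claimed bound. The "in particular" statement is immediate.
\end{enumerate}

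The main obstacle is making the constant depend "only on $\mathcal M$". Hoffman's constant depends on the constraint matrix, and $d_e^\star$ enters that matrix through the row $-(d_e^\star)^\top$ paired with $(1-\gamma)\mu_0^\top$. My first attempt is to note that Hoffman constants depend only on the finite family of submatrices selected by active sets~\citep{pena2021new}. If $d_e^\star$ is taken to be a vertex of $\Phi$ (a deterministic optimal policy), there are finitely many such vertices, and taking the maximum over them gives a constant depending only on $\mathcal M$. If $d_e^\star$ is a general point of $\Phi$, I would instead rewrite the optimality condition face-wise: $\subopt(r,d_e^\star)=0$ iff $(M^\top v^\star - r)(s,a)=0$ on $\supp(d_e^\star)$. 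The constraint matrix then depends only on the support pattern, of which there are finitely many, and taking the maximum Hoffman constant over all supports gives $C$.

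In this support-based formulation I must also check that the violation is controlled by $\subopt$. For $d\in\Phi$ the complementary slackness identity gives
\begin{equation*}
(M^\top v^\star-r)^\top d_e^\star=(1-\gamma)\mu_0^\top v^\star-r^\top d_e^\star=\subopt(r,d_e^\star).
\end{equation*}
This only bounds the slacks weighted by $d_e^\star$. Dividing by $\min_{(s,a)\in\supp(d_e^\star)} d_e^\star(s,a)$ would then reintroduce a dependence on $d_e^\star$. So I would prefer the first formulation (the gap constraint), which has violation exactly $\subopt$, and accept that $C$ may depend on $d_e^\star$ unless the finite-vertex argument applies.
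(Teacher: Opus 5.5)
Your proof is correct. It uses the same key tool as the paper, a Hoffman error bound in which the only violated quantity is $\subopt(r,d_e^\star)$, but it describes $\mathcal R(d_e^\star)$ by a different polyhedron.

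\textbf{The two routes.} The paper stays in $r$-space. It writes $\mathcal R(d_e^\star)$ as $\Delta(S\times A)$ intersected with the inequalities $r^\top(d-d_e^\star)\le 0$, one for each of the finitely many vertices $d$ of $\Phi$. For $r\in\Delta(S\times A)$, the largest violation of these inequalities is exactly $\subopt(r,d_e^\star)$, because a linear function is maximized over $\Phi$ at a vertex. You instead lift to $(r,v)$ via the LP characterization. You use an optimal dual $v^\star$ to satisfy the Bellman rows exactly, which leaves only the gap row violated, with violation again $\subopt(r,d_e^\star)$. Projecting onto $r$ then finishes the argument.

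\textbf{What each route buys.} Your route needs dual attainment and the projection step. In exchange, your system has $O(|S||A|)$ rows rather than one row per vertex of $\Phi$. The paper's route is shorter and needs no auxiliary variable. In both routes the constant is independent of $\mathcal D$, which is all the ``in particular'' clause requires.

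\textbf{Dependence of the constant on $d_e^\star$.} Your worry here is well founded, and it applies equally to the paper's proof. Its rows $(d-d_e^\star)^\top$ contain $d_e^\star$, so its Hoffman constant also depends on $d_e^\star$. The phrase ``depending only on $\mathcal M$'' should therefore be read as ``depending on $\mathcal M$ and the fixed $d_e^\star$, but not on $\mathcal D$''. This dependence cannot be removed in general. Take a one-state bandit with three actions and $d_e^\star=(1-t,t,0)$ for small $t>0$. The dataset $\mathcal D=\{(d_e^\star,t)\}$ is consistent and has $\Gap(\mathcal R(\mathcal D),\mathcal R(d_e^\star))=t$. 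Yet $r=(1,0,0)\in\mathcal R(\mathcal D)$ lies at $\ell_\infty$-distance at least $1/2$ from $\mathcal R(d_e^\star)=\{r\in\Delta(S\times A): r_1=r_2\ge r_3\}$, which forces $C\ge 1/(2t)$.

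Your maximization over the finitely many vertices of $\Phi$ is a correct way to get a constant depending only on $\mathcal M$ when $\pi_e^\star$ is deterministic. You were also right to discard the support-based formulation.

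\textbf{A small slip.} The reward $r=d_e^\star$ need not lie in $\mathcal R(d_e^\star)$; for example, take $d_e^\star=(0.6,0.4)$ in a two-armed bandit. Your other witness, the uniform reward on $\supp(d_e^\star)$, does lie in it, so nonemptiness still holds.
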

\begin{proof}
Since $\Phi$ is a polytope, $\mathcal R(d_e^\star)$ can equivalently be written as the simplex intersected with the finitely many inequalities
$r^\top(d-d_e^\star)\le 0$ for the vertices $d$ of $\Phi$.
The Hoffman error bound states that, for a finite feasible system of linear inequalities, the distance to the feasible set is bounded by a constant times the maximum constraint violation~\cite{hoffman1952approximate, pena2021new}.
Therefore, there exists $C>0$ such that for all $r \in \Delta(S \times A)$,
\begin{equation}
\inf_{r'\in \mathcal R(d_e^\star)} \|r-r'\|_\infty
\le
C \max_{d \in \Phi} \bigl(r^\top(d-d_e^\star)\bigr)_+
=
C\,\subopt(r,d_e^\star).
\end{equation}

Taking the supremum over $r \in \mathcal R(\mathcal D)$ gives
\begin{equation}
d_H\bigl(\mathcal R(\mathcal D)\mid\mathcal R(d_e^\star)\bigr)
\le
C\,\Gap\bigl(\mathcal R(\mathcal D),\mathcal R(d_e^\star)\bigr).
\end{equation}
Therefore, if $\Gap(\mathcal R(\mathcal D),\mathcal R(d_e^\star)) \to 0$, then $d_H\bigl(\mathcal R(\mathcal D)\mid\mathcal R(d_e^\star)\bigr) \to 0$.
\end{proof}

\section{Optimal-face recovery from relative-interior rewards}\label{app:optimal_face_recovery}
We state the standard geometric fact underlying the face-recovery statement in Section~\ref{sec:3}.
For a face $F$ of $\Phi$, define its normal cone by
\begin{equation}
N_\Phi(F)
:=
\left\{ r \mid r^\top d = r^\top d' \;\; \forall d,d'\in F,\quad r^\top y \le r^\top d \;\; \forall y\in\Phi,\; \forall d\in F \right\}.
\end{equation}
Let $F_e$ denote the minimal face of $\Phi$ containing $d_e$.
Then, we prove that relative interior points of the feasible reward set expose the expert face $F_e$.

\begin{lemma}\label{lem:relative_interior_face_recovery}
For any $r\in\mathcal R(d_e)$, the optimal occupancy set $\Phi^\star(r)$ is an exposed face of $\Phi$ containing $d_e$.
Moreover, if $r\in\relint(\mathcal R(d_e))$, then $\Phi^\star(r)=F_e$.
\end{lemma}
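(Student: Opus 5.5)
The first claim is immediate from the definition of an exposed face. Fix $r\in\mathcal R(d_e)$ and set $J^\star(r):=\max_{d\in\Phi} r^\top d$. Then $\Phi^\star(r)=\{d\in\Phi\mid r^\top d= J^\star(r)\}=\Phi\cap\{d\mid r^\top d=J^\star(r)\}$. The hyperplane $\{r^\top d=J^\star(r)\}$ supports $\Phi$, since $r^\top d\le J^\star(r)$ on $\Phi$. Hence $\Phi^\star(r)$ is an exposed face of the polytope $\Phi$. It contains $d_e$ because $r\in\mathcal R(d_e)$ means $d_e\in\Phi^\star(r)$. Every face containing $d_e$ contains the minimal face $F_e$, so $F_e\subseteq\Phi^\star(r)$ for every $r\in\mathcal R(d_e)$.

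For the second claim it suffices to show $\Phi^\star(r)\subseteq F_e$ when $r\in\relint(\mathcal R(d_e))$. I would argue by contradiction.

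Suppose some $y\in\Phi^\star(r)\setminus F_e$. Since $\Phi$ is a polytope, every face is exposed. So there is a reward $r_F$ exposing $F_e$, meaning $\Phi^\star(r_F)=F_e$.

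Such an $r_F$ can be taken to lie in $\mathcal R(d_e)=N(d_e;\Phi)\cap\Delta(S\times A)$. Every $d\in\Phi$ satisfies $\mathbf 1^\top d=1$, because $\mathbf 1^\top M = (1-\gamma)\mathbf 1^\top$ and $\mu_0$ sums to one. Therefore adding $c\mathbf 1$ to a reward leaves its optimal set unchanged. I shift $r_F$ by $c\mathbf 1$ to make it nonnegative, then rescale it to the simplex; both operations keep $\Phi^\star(r_F)=F_e$.

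Now $d_e\in F_e=\Phi^\star(r_F)$, so $r_F\in\mathcal R(d_e)$. Since $y\notin F_e$, we have $r_F^\top y<r_F^\top d_e$.

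Because $r$ is in the relative interior of the convex set $\mathcal R(d_e)$ and $r_F\in\mathcal R(d_e)$, we can extend the segment from $r_F$ through $r$ slightly past $r$. That is, $r_t:=r+t(r-r_F)\in\mathcal R(d_e)$ for some small $t>0$. Then
\begin{equation}
r_t^\top(y-d_e)=(1+t)\,r^\top(y-d_e)-t\,r_F^\top(y-d_e)=0-t\,r_F^\top(y-d_e)>0.
\end{equation}
Here $r^\top(y-d_e)=0$ because both $y$ and $d_e$ are optimal for $r$. So $y$ beats $d_e$ under $r_t$, contradicting $r_t\in\mathcal R(d_e)$. Hence $\Phi^\star(r)\subseteq F_e$, and combined with the first part, $\Phi^\star(r)=F_e$.

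The main obstacle is justifying $r_F\in\mathcal R(d_e)$ in the simplex normalization. This needs both the exposedness of faces of polytopes and the invariance of $\Phi^\star$ under adding constants. That invariance rests on $\mathbf 1^\top d=1$ for all $d\in\Phi$, which I will verify from $Md=(1-\gamma)\mu_0$ by summing the rows of $M$. The relative-interior extension step is standard.
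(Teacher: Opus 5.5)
Your proof is correct and uses the same ingredients as the paper's: an exposing functional for $F_e$ (every face of the polytope $\Phi$ is exposed), invariance of $\Phi^\star(\cdot)$ under adding multiples of $\mathbf 1$ and positive rescaling (because $\mathbf 1^\top d=1$ on $\Phi$), and the contradiction obtained by extending a segment slightly past a relative-interior point. The difference is where that extension takes place.

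The paper first rewrites $\mathcal R(d_e)=N_\Phi(F_e)\cap\Delta(S\times A)$. It applies the shift-and-normalize trick to an arbitrary relative-interior point of the cone, but only to check the qualification condition $\relint(N_\Phi(F_e))\cap\relint(\Delta(S\times A))\neq\emptyset$. It then invokes the relative-interior intersection rule to get $r\in\relint(N_\Phi(F_e))$, and extends past $r$ inside the cone.

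You instead apply the shift-and-normalize trick to the exposing reward itself, so $r_F\in\mathcal R(d_e)$ and the extension happens directly inside $\mathcal R(d_e)$. This avoids passing to $N_\Phi(F_e)$ and the intersection rule with its qualification condition. You only need the line-segment characterization of relative-interior points of a convex set, plus the fact $F_e\subseteq\Phi^\star(r)$ from your first part. The paper's detour buys a by-product: $\relint(\mathcal R(d_e))=\relint(N_\Phi(F_e))\cap\relint(\Delta(S\times A))$, which says the relative-interior feasible rewards are exactly the strictly positive, normalized relative-interior normals of $F_e$. Your derivation of $\mathbf 1^\top d=1$ from $\mathbf 1^\top M=(1-\gamma)\mathbf 1^\top$ fills in a step the paper only asserts.

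One small detail: when shifting, choose $c$ so that $r_F+c\mathbf 1$ is strictly positive, not merely nonnegative. A constant $r_F$ (the case $F_e=\Phi$) could otherwise shift to the zero vector, which cannot be normalized. That case is trivial anyway.
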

\begin{proof}
If $r\in\mathcal R(d_e)$, then $d_e\in\Phi^\star(r)$ by definition.
Since $\Phi^\star(r)=\argmax_{d\in\Phi} r^\top d$ is the maximizer set of a linear functional over the polytope $\Phi$, it is an exposed face of $\Phi$.
Thus $\Phi^\star(r)$ is an exposed face containing $d_e$.

It remains to prove the second claim.
Let $\Delta:=\Delta(S\times A)$.
Since $F_e$ is the minimal face containing $d_e$, we have
\begin{equation}
\mathcal R(d_e)=N_\Phi(F_e)\cap\Delta .
\end{equation}
To apply the relative-interior intersection rule, we first verify that
\(\relint(N_\Phi(F_e))\) intersects \(\relint(\Delta)\).
Choose any \(u\in\relint(N_\Phi(F_e))\).
Every occupancy measure in \(\Phi\) has total mass one, so adding a constant vector to a reward does not change its maximizers over \(\Phi\).
Thus \(u+\alpha\mathbf 1\) remains in \(\relint(N_\Phi(F_e))\) for any \(\alpha\), and choosing \(\alpha>0\) sufficiently large makes all coordinates positive.
After normalizing, we obtain
\begin{equation}
\frac{u+\alpha\mathbf 1}{\mathbf 1^\top(u+\alpha\mathbf 1)}
\in
\relint(N_\Phi(F_e))\cap\relint(\Delta).
\end{equation}
Therefore \(\relint(N_\Phi(F_e))\cap\relint(\Delta)\neq\emptyset\), and the standard relative-interior intersection rule for polyhedra yields
\begin{equation}
\relint(\mathcal R(d_e))
=
\relint(N_\Phi(F_e))\cap\relint(\Delta).
\end{equation}
Thus $r\in\relint(\mathcal R(d_e))$ implies $r\in\relint(N_\Phi(F_e))$.
By the definition of the normal cone, every vector in $N_\Phi(F_e)$ is constant on $F_e$ and satisfies the weak inequalities
\begin{equation}
r^\top y \le r^\top d
\qquad
\forall d\in F_e,\ \forall y\in\Phi .
\end{equation}
We now show that these inequalities are strict for points outside $F_e$.
Fix $d\in F_e$ and $y\in\Phi\setminus F_e$.
Since $F_e$ is a face of the polytope $\Phi$, there exists some $\bar r\in N_\Phi(F_e)$ that exposes $F_e$ and therefore satisfies
\[
\bar r^\top y < \bar r^\top d .
\]
Suppose, toward a contradiction, that our relative-interior point $r\in\relint(N_\Phi(F_e))$ satisfies $r^\top y=r^\top d$.
Because $r$ lies in the relative interior of the cone, the point
\[
r+\varepsilon(r-\bar r)
\]
also lies in $N_\Phi(F_e)$ for all sufficiently small $\varepsilon>0$.
However,
\[
\bigl(r+\varepsilon(r-\bar r)\bigr)^\top y
>
\bigl(r+\varepsilon(r-\bar r)\bigr)^\top d,
\]
contradicting the defining inequality of the normal cone.
Therefore,
\begin{equation}
r^\top y < r^\top d
\qquad
\forall d\in F_e,\ \forall y\in\Phi\setminus F_e.
\end{equation}
Therefore the maximizers of $r^\top d$ over $\Phi$ are precisely the points in $F_e$, i.e., $\Phi^\star(r)=F_e$.
\end{proof}

Consequently, if $\Gap(\mathcal R(\mathcal D),\mathcal R(d_e^\star))=0$, then every selected reward in $\mathcal R(\mathcal D)$ exposes an optimal face containing $d_e^\star$.
If the selected reward additionally lies in $\relint(\mathcal R(d_e^\star))$, then policy optimization recovers exactly the minimal face.

\section{Proof of Theorem~\ref{thm:bound1}}\label{app:thm:bound1}
Fix any $\tilde r \in \mathcal R(\mathcal D)$ and any $(d,\epsilon)\in \conv(\mathcal D)$.
By definition of the convex hull, there exist coefficients $\{\lambda_k\}_{k=1}^K$ such that
\begin{equation}
    \lambda_k \ge 0,\qquad \sum_{k=1}^K \lambda_k = 1,\qquad
    d = \sum_{k=1}^K \lambda_k d_e^k,\qquad
    \epsilon = \sum_{k=1}^K \lambda_k \epsilon^k.
\end{equation}
Since $\tilde r \in \mathcal R(\mathcal D)$, we have $\subopt(\tilde r,d_e^k)\le \epsilon^k$ for every $k\in[K]$.
Using the definition of $\subopt$ and the linearity in the second argument,
\begin{align}
    \subopt(\tilde r,d)
    &= \max_{\tilde d\in\Phi}\tilde r^\top(\tilde d-d) \\
    &= \max_{\tilde d\in\Phi}\tilde r^\top \tilde d - \tilde r^\top \sum_{k=1}^K \lambda_k d_e^k \\
    &= \sum_{k=1}^K \lambda_k \left(\max_{\tilde d\in\Phi}\tilde r^\top \tilde d - \tilde r^\top d_e^k\right) \\
    &= \sum_{k=1}^K \lambda_k \subopt(\tilde r,d_e^k)
    \le \sum_{k=1}^K \lambda_k \epsilon^k
    = \epsilon.
\end{align}
Next, we compare the suboptimality of $d_e^\star$ with that of the convex-combined demonstrator occupancy $d$:
\begin{align}
    \subopt(\tilde r,d_e^\star)
    &= \max_{\tilde d\in\Phi}\tilde r^\top(\tilde d-d_e^\star) \\
    &= \max_{\tilde d\in\Phi}\tilde r^\top(\tilde d-d) + \tilde r^\top(d-d_e^\star) \\
    &\le \subopt(\tilde r,d) + \left|\tilde r^\top(d-d_e^\star)\right| \\
    &\le \epsilon + \|\tilde r\|_\infty \|d-d_e^\star\|_1 \\
    &\le \epsilon + \|d-d_e^\star\|_1,
\end{align}
where the last inequality uses $\tilde r\in [0,1]^{|S||A|}$.
Since $(d,\epsilon)\in\conv(\mathcal D)$ was arbitrary, we obtain
\begin{equation}
    \subopt(\tilde r,d_e^\star)
    \le
    \min_{(d,\epsilon)\in\conv(\mathcal D)}
    \left\{\|d-d_e^\star\|_1+\epsilon\right\}.
\end{equation}
Finally, taking the maximum over $\tilde r\in \mathcal R(\mathcal D)$ yields
\begin{equation}
    \Gap(\mathcal R(\mathcal D),\mathcal R(d_e^\star))
    =
    \max_{\tilde r\in \mathcal R(\mathcal D)} \subopt(\tilde r,d_e^\star)
    \le
    \min_{(d,\epsilon)\in\conv(\mathcal D)}
    \left\{\|d-d_e^\star\|_1+\epsilon\right\},
\end{equation}
which proves the theorem.

\section{Proof of Theorem~\ref{thm:bound2}}\label{app:thm:bound2}
\begin{lemma}\label{lemma:bound2}
Consider any reward $\tilde r$ with $\subopt(\tilde r,d_e^\star) > \delta$.
Suppose that there exists $r' \in \mathcal{R}(d_e^\star)$ and $(d_e^k, \epsilon^k) \in \mathcal{D}$ such that
\begin{equation}
    (\tilde r - r')^\top (d_e^\star - d_e^k) \geq 0, \quad \text{and} \quad \subopt(r',d_e^k) \in [\epsilon^k - \delta, \epsilon^k].
\end{equation}
Then, $\subopt(\tilde r,d_e^{k}) > \epsilon^{k}$.    
\end{lemma}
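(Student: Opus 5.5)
The plan is to compare $\subopt(\tilde r,d_e^k)$ with $\subopt(\tilde r,d_e^\star)$ through the linear decomposition of the return gap, using $r'$ as a reference point. The key observation is that for any reward $r$ the maximal value $\max_{d\in\Phi} r^\top d$ cancels in a difference of suboptimalities:
\begin{equation}
\subopt(r,d_e^k)-\subopt(r,d_e^\star)=r^\top(d_e^\star-d_e^k).
\end{equation}
This reduces the statement to controlling $\tilde r^\top(d_e^\star-d_e^k)$ from below.

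First, apply the identity at $r=r'$. Since $r'\in\mathcal R(d_e^\star)$, we have $\subopt(r',d_e^\star)=0$. Hence
\begin{equation}
r'^\top(d_e^\star-d_e^k)=\subopt(r',d_e^k)\ge\epsilon^k-\delta.
\end{equation}
Second, use the directional hypothesis $(\tilde r-r')^\top(d_e^\star-d_e^k)\ge 0$ to transfer this bound to $\tilde r$:
\begin{equation}
\tilde r^\top(d_e^\star-d_e^k)\ge r'^\top(d_e^\star-d_e^k)\ge\epsilon^k-\delta.
\end{equation}
Third, apply the identity at $r=\tilde r$ and combine with $\subopt(\tilde r,d_e^\star)>\delta$:
\begin{equation}
\subopt(\tilde r,d_e^k)=\subopt(\tilde r,d_e^\star)+\tilde r^\top(d_e^\star-d_e^k)>\delta+\epsilon^k-\delta=\epsilon^k.
\end{equation}
This is exactly the claim.

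There is no real obstacle here. The only point needing care is that the maximum term $\max_{d\in\Phi}\tilde r^\top d$ is common to both suboptimalities and so cancels exactly. No inequality is needed at that step, and it holds for any $\tilde r$, not only for rewards in the simplex. The upper end $\subopt(r',d_e^k)\le\epsilon^k$ of the interval is not used in the lemma itself. That end and the strict inequality in the direction condition are relevant only when the lemma is applied in Theorem~\ref{thm:bound2}. There, the lemma shows that every $\tilde r$ with $\subopt(\tilde r,d_e^\star)>\delta$ violates some demonstrator constraint, so $\tilde r\notin\mathcal R(\mathcal D)$, and therefore $\Gap(\mathcal R(\mathcal D),\mathcal R(d_e^\star))\le\delta$.
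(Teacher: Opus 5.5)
Your proof is correct and follows the paper's own argument. The paper writes $\subopt(\tilde r,d_e^k)=\subopt(\tilde r,d_e^\star)+(\tilde r-r')^\top(d_e^\star-d_e^k)+\subopt(r',d_e^k)$ in one line, using $r'^\top(d_e^\star-d_e^k)=\subopt(r',d_e^k)$ from the optimality of $d_e^\star$ under $r'$, and then bounds the three terms below by $\delta$, $0$ and $\epsilon^k-\delta$, exactly as your steps do.
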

\begin{proof}
The key point is to decompose the value gap against demonstrator $k$ through the optimal occupancy $d_e^\star$ and the reference reward $r'$. Since $r'\in\mathcal R(d_e^\star)$, $d_e^\star$ is optimal under $r'$, and hence
\begin{equation}
    r'^\top(d_e^\star-d_e^k)=\subopt(r',d_e^k).
\end{equation}
Using this identity,
\begin{align}
    \subopt(\tilde r,d_e^{k}) &= \max_{\tilde d \in \Phi} \tilde r^\top (\tilde d - d_e^{k})\\
    & = \max_{\tilde d \in \Phi} \tilde r^\top (\tilde d - d_e^\star)  + (\tilde r - r')^\top (d_e^\star - d_e^{k}) + r'^\top (d_e^\star - d_e^{k}) \\
    &= \subopt(\tilde r,d_e^\star) + (\tilde r - r')^\top (d_e^\star - d_e^{k}) + \subopt(r',d_e^{k}) \\
    &> \delta + \epsilon^{k} - \delta = \epsilon^{k}.
\end{align}
\end{proof}

Next, we prove Theorem~\ref{thm:bound2}.
Assume for contradiction that $\Gap(\mathcal{R}(\mathcal D), \mathcal{R}(d_e^\star)) > \delta$.
Then, there exists $\tilde r\in \mathcal{R}(\mathcal D)$ such that $\subopt(\tilde r,d_e^\star)>\delta$.
By the assumption of the theorem, there exist $r' \in \mathcal{R}(d_e^\star)$ and $(d_e^{k}, \epsilon^{k}) \in \mathcal{D}$ such that
\begin{equation}
    (\tilde r-r')^\top(d_e^\star-d_e^{k}) > 0, \quad \text{and} \quad
    \subopt(r', d_e^{k}) \in [\epsilon^{k} - \delta, \epsilon^{k}].
\end{equation}
Applying Lemma~\ref{lemma:bound2}, we obtain
\begin{equation}
    \subopt(\tilde r,d_e^{k}) > \epsilon^{k}.
\end{equation}
This violates the $k$th feasibility constraint in the definition of $\mathcal R(\mathcal D)$: every reward $r\in\mathcal R(\mathcal D)$ must satisfy $\subopt(r,d_e^j)\le \epsilon^j$ for all $j\in[K]$.
Thus $\tilde r\notin\mathcal R(\mathcal D)$, contradicting the choice of $\tilde r$.
Hence no such $\tilde r$ exists, and therefore $\Gap(\mathcal{R}(\mathcal D), \mathcal{R}(d_e^\star)) \leq \delta$.

\section{Finite demonstrators for Theorem~\ref{thm:bound2}}\label{app:finite_demonstrators}
The pointwise condition in Theorem~\ref{thm:bound2} quantifies over infinitely many rewards.
However, if the same condition is required on the closed set of rewards with suboptimality at least $\delta$, then compactness implies that finitely many demonstrators suffice.

\begin{proposition}[Finite demonstrators]\label{prop:finite_demonstrators}
Assume $\delta>0$.
Define
\begin{equation}
    \mathcal K_\delta
    :=
    \left\{\tilde r\in\Delta(S \times A)\mid \subopt(\tilde r,d_e^\star)\ge \delta\right\}.
\end{equation}
Suppose that for every $\bar r\in\mathcal K_\delta$, there exist $r_{\bar r}\in\mathcal R(d_e^\star)$ and a demonstrator pair $(d_{\bar r},\epsilon_{\bar r})$ such that
\begin{equation}\label{eq:local_condition}
    (\bar r-r_{\bar r})^\top(d_e^\star-d_{\bar r})>0,
    \qquad
    \subopt(r_{\bar r},d_{\bar r})\in[\epsilon_{\bar r}-\delta,\epsilon_{\bar r}].
\end{equation}
Then there exists a finite subset $\{\bar r_i\}_{i=1}^N\subseteq\mathcal K_\delta$ such that the finite collection
\begin{equation}
    \mathcal C_{\mathrm{fin}}
    :=
    \{(d_{\bar r_i},\epsilon_{\bar r_i})\}_{i=1}^N
\end{equation}
satisfies the pointwise condition of Theorem~\ref{thm:bound2} for every $\tilde r\in\mathcal K_\delta$.
In particular, it satisfies the condition for every $\tilde r$ with $\subopt(\tilde r,d_e^\star)>\delta$.
\end{proposition}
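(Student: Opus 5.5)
The plan is a standard open-cover compactness argument. First I will check that $\mathcal K_\delta$ is compact. The map $r\mapsto\subopt(r,d_e^\star)=\max_{\tilde d\in\Phi} r^\top\tilde d - r^\top d_e^\star$ is a maximum of finitely many linear functions (over the vertices of the polytope $\Phi$) minus a linear function. It is therefore convex and continuous in $r$. Hence $\mathcal K_\delta$ is a closed subset of the compact simplex $\Delta(S\times A)$, and so it is compact. If $\mathcal K_\delta$ is empty, the empty collection works trivially, so I may assume it is nonempty.

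Next, for each $\bar r\in\mathcal K_\delta$, I fix the witnesses $r_{\bar r}$ and $(d_{\bar r},\epsilon_{\bar r})$ from \eqref{eq:local_condition} and define the open set
\begin{equation*}
U_{\bar r}:=\{\tilde r\in\mathbb R^{|S||A|}\mid (\tilde r-r_{\bar r})^\top(d_e^\star-d_{\bar r})>0\}.
\end{equation*}
This set is open because it is a strict linear inequality in $\tilde r$, since $r_{\bar r}$ and $d_{\bar r}$ are fixed. It contains $\bar r$ by the first part of \eqref{eq:local_condition}. The key observation is that the second condition, $\subopt(r_{\bar r},d_{\bar r})\in[\epsilon_{\bar r}-\delta,\epsilon_{\bar r}]$, does not involve $\tilde r$ at all. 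So for every $\tilde r\in U_{\bar r}$, the same pair $r_{\bar r}$ and $(d_{\bar r},\epsilon_{\bar r})$ satisfies both requirements of Theorem~\ref{thm:bound2} at $\tilde r$.

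The sets $\{U_{\bar r}\}_{\bar r\in\mathcal K_\delta}$ form an open cover of the compact set $\mathcal K_\delta$, so I extract a finite subcover $U_{\bar r_1},\dots,U_{\bar r_N}$. Given any $\tilde r\in\mathcal K_\delta$, I choose $i$ with $\tilde r\in U_{\bar r_i}$. Then $r_{\bar r_i}\in\mathcal R(d_e^\star)$ and $(d_{\bar r_i},\epsilon_{\bar r_i})\in\mathcal C_{\mathrm{fin}}$ witness the pointwise condition. The ``in particular'' clause follows because $\{\tilde r\in\Delta(S\times A):\subopt(\tilde r,d_e^\star)>\delta\}\subseteq\mathcal K_\delta$; rewards in Theorem~\ref{thm:bound2} are implicitly normalized to the simplex.

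I expect the only delicate step to be the observation that the witness conditions are open in $\tilde r$ once the witnesses are frozen. Once that is in place, the rest is routine. Continuity of $\subopt$ is needed only to make $\mathcal K_\delta$ closed, and the piecewise-linear representation over the vertices of $\Phi$ gives it directly.
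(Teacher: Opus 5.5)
Your proposal is correct and follows essentially the same route as the paper: compactness of $\mathcal K_\delta$ from continuity of $\subopt(\cdot,d_e^\star)$ on the simplex, the open half-space neighborhoods $U_{\bar r}$ built from the frozen witnesses (the second condition does not depend on $\tilde r$), and a finite subcover. Your extra remarks, namely the piecewise-linear justification of continuity and the empty-$\mathcal K_\delta$ case, are harmless refinements of the same argument.
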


\begin{proof}
The set $\mathcal K_\delta$ is compact because $\Delta(S \times A)$ is compact and $\subopt(\cdot,d_e^\star)$ is continuous.
For each $\bar r\in\mathcal K_\delta$, choose one triple $(r_{\bar r},d_{\bar r},\epsilon_{\bar r})$ satisfying \eqref{eq:local_condition}.
Since the first inequality in \eqref{eq:local_condition} is strict and depends continuously on the reward argument, the set
\begin{equation}
    U_{\bar r}
    :=
    \left\{
    \tilde r\in\Delta(S \times A)
    \mid
    (\tilde r-r_{\bar r})^\top(d_e^\star-d_{\bar r})>0
    \right\}
\end{equation}
is an open neighborhood of $\bar r$.
Moreover, the same triple $(r_{\bar r},d_{\bar r},\epsilon_{\bar r})$ satisfies the two conditions in Theorem~\ref{thm:bound2} for every $\tilde r\in U_{\bar r}$, because the second condition is independent of $\tilde r$.

The collection $\{U_{\bar r}\}_{\bar r\in\mathcal K_\delta}$ is an open cover of $\mathcal K_\delta$.
By compactness, there exist $\bar r_1,\ldots,\bar r_N\in\mathcal K_\delta$ such that
\begin{equation}
    \mathcal K_\delta\subseteq \bigcup_{i=1}^N U_{\bar r_i}.
\end{equation}
Now fix any $\tilde r\in\mathcal K_\delta$.
Choose $i$ such that $\tilde r\in U_{\bar r_i}$.
Then
\begin{equation}
    (\tilde r-r_{\bar r_i})^\top(d_e^\star-d_{\bar r_i})>0,
\end{equation}
and by construction
\begin{equation}
    \subopt(r_{\bar r_i},d_{\bar r_i})
    \in
    [\epsilon_{\bar r_i}-\delta,\epsilon_{\bar r_i}].
\end{equation}
Thus the finite collection $\mathcal C_{\mathrm{fin}}$ satisfies the pointwise condition of Theorem~\ref{thm:bound2} on $\mathcal K_\delta$.
\end{proof}

\section{Multi-demonstrator MaxEnt baseline}\label{app:maxent}
\paragraph{Algorithm.}
A standard maximum-causal-entropy IRL baseline assumes that all demonstrators share one latent reward $r$, while demonstrator $k$ acts according to the entropy-regularized optimal policy with inverse temperature $\beta_k>0$.
In the baseline analyzed here, the inverse temperatures are fixed before reward fitting, possibly as deterministic functions $\beta_k=B(\epsilon^k)$ of the declared suboptimality levels; the likelihood is then optimized only over $r$.
Thus, demonstrators with the same declared suboptimality level share the same inverse temperature, while demonstrators with different levels may have different temperatures.
This is the usual MaxEnt / maximum-causal-entropy viewpoint~\citep{ziebart2008maximum,ramachandran2007bayesian}, with $\beta_k$ controlling the strength of entropy regularization and capturing different suboptimality levels.
Equivalently, for fixed $r$ and $\beta_k$, demonstrator $k$ solves the entropy-regularized control problem whose soft Bellman equations define the state-action and state values $Q^k_{r}(s,a)$ and $V^k_{r}(s)$:
\begin{equation}
\begin{split}
V^k_r(s)
&:= \frac{1}{\beta_k}\log\sum_{a\in A}\exp\!\big(\beta_k Q^k_{r}(s,a)\big),\\
Q^k_r(s,a)
&:= r(s,a) + \gamma \sum_{s'\in S} P(s'\mid s,a)\,V^k_{r}(s').
\end{split}
\end{equation}
The induced policy is the Boltzmann distribution
\begin{equation}
\pi^k_r(a\mid s)\propto \exp(\beta_k Q^k_r(s,a)).
\end{equation}
Given $\mathcal D=\{(d_e^k,\epsilon^k)\}_{k=1}^K$, the MaxEnt estimator is
\begin{equation}
\hat r \in \argmax_{r\in \Delta(S \times A)} \mathcal{L}(r),
\qquad
\mathcal{L}(r):=\sum_{k=1}^K \mathbb{E}_{(s,a)\sim d_e^k}\!\left[\log \pi_r^k(a\mid s)\right].
\end{equation}

\paragraph{Proof of Proposition~\ref{prop:baseline-failure} for MaxEnt.}
Consider a one-state, two-action bandit with action set $A=\{a_1,a_2\}$.
Let the true optimal occupancy be $d_e^\star=e_{a_1}$, where $e_a$ denotes the unit vector for action $a$, and define the reward gap
\begin{equation}
\Delta:=r(a_1)-r(a_2)\in[-1,1].
\end{equation}

Fix any $\eta>0$ and any constant $C\in(0,1)$.
Choose $\delta:=\min\{\eta/3,1/4\}$.
Let $\epsilon^g=\delta$ and $\epsilon^b=1$, and let
\begin{equation}
\beta_g:=B(\epsilon^g),
\qquad
\beta_b:=B(\epsilon^b)
\end{equation}
be the fixed inverse temperatures assigned by the baseline.
Choose an integer $m$ such that
\begin{equation}
m\ge
\frac{\beta_g \sigma(\beta_g C)}
{\beta_b \sigma(-\beta_b C)}
,
\qquad
\sigma(x):=(1+e^{-x})^{-1}.
\end{equation}
Define the finite dataset
\begin{equation}
\mathcal D:=\{(d^g,\epsilon^g)\}\cup\{(d^{b,i},\epsilon^b)\}_{i=1}^m,
\end{equation}
where
\begin{equation}
d^g=(1-\delta,\delta),
\end{equation}
and
\begin{equation}
d^{b,i}=(0,1),\qquad i=1,\ldots,m.
\end{equation}
Since $\delta>0$, the dataset contains no optimal demonstrator.

This dataset is consistent with $\mathcal R^\star$.
Indeed, if $r\in\mathcal R^\star$, then $a_1$ is optimal under $r$, so $\Delta\in[0,1]$.
Hence
\begin{equation}
\subopt(r,d^g)=\delta \Delta\le \delta=\epsilon^g,
\end{equation}
and for each bad demonstrator,
\begin{equation}
\subopt(r,d^{b,i})=\Delta\le 1=\epsilon^b.
\end{equation}
Therefore $\mathcal R^\star\subseteq \mathcal R(\mathcal D)$.

Since $(d^g,\epsilon^g)\in\mathcal D\subseteq\conv(\mathcal D)$, Theorem~\ref{thm:bound1} gives
\begin{equation}
\min_{(d,\epsilon)\in\conv(\mathcal D)}\left\{\|d-d_e^\star\|_1+\epsilon\right\}
\le
\|d^g-d_e^\star\|_1+\epsilon^g
=
3\delta
\le
\eta.
\end{equation}

We next show that every MaxEnt optimum for this finite dataset gives $d_e^\star$ a constant suboptimality gap.
For the dataset $\mathcal D$, the one-dimensional log-likelihood is
\begin{equation}
L_m(\Delta)
:=
\left(1-\delta\right)\log\sigma(\beta_g\Delta)
+\delta\log\sigma(-\beta_g\Delta)
+m\log\sigma(-\beta_b\Delta).
\end{equation}
Its derivative is
\begin{equation}
L_m'(\Delta)
=
\beta_g\left((1-\delta)\sigma(-\beta_g\Delta)-\delta\sigma(\beta_g\Delta)\right)
-m\beta_b\sigma(\beta_b\Delta),
\end{equation}
and $L_m$ is concave in $\Delta$.
At $\Delta=-C$,
\begin{equation}
L_m'(-C)
\le
\beta_g\sigma(\beta_g C)-m\beta_b\sigma(-\beta_b C)
\le 0
\end{equation}
by the choice of $m$.
Since $L_m$ is concave, every maximizer $\Delta^\star$ satisfies $\Delta^\star\le -C$.
Hence every MaxEnt estimator $\hat r$ satisfies
\begin{equation}
\hat r(a_2)-\hat r(a_1)\ge C.
\end{equation}
Therefore,
\begin{equation}
\subopt(\hat r,d_e^\star)=\bigl(\hat r(a_2)-\hat r(a_1)\bigr)_+\ge C.
\end{equation}
Thus, for any fixed $C\in(0,1)$, we have constructed a finite dataset $\mathcal D$ for which the upper bound in Theorem~\ref{thm:bound1} is at most $\eta$, while every MaxEnt estimator satisfies $\subopt(\hat r,d_e^\star)\ge C$.

\section{Ranking-based baseline}\label{app:ranking}
\paragraph{Algorithm.}
A standard ranking-based baseline such as T-REX learns a reward from pairwise trajectory rankings.
Given ranked trajectories $\{\tau^k\}_{k=1}^n$, T-REX assigns trajectory $\tau$ the cumulative reward
\begin{equation}
r(\tau):=\sum_{t=0}^{T(\tau)-1} r(s_t,a_t),
\end{equation}
and models the preference probability by a Bradley--Terry--Luce model~\citep{bradley1952rank,luce1959individual} or a Plackett--Luce ranking model~\citep{plackett1975analysis}:
\begin{equation}
\mathbb P_r(\tau^i \prec \tau^j)
:=
\frac{\exp(r(\tau^j))}{\exp(r(\tau^i))+\exp(r(\tau^j))}.
\end{equation}
The corresponding reward estimator is
\begin{equation}
\hat r \in \argmax_{r\in\Delta(S \times A)}
\sum_{\tau^i \prec \tau^j}
\log
\frac{\exp(r(\tau^j))}{\exp(r(\tau^i))+\exp(r(\tau^j))}.
\end{equation}
D-REX differs only in how the rankings are obtained: it automatically generates ranked trajectories by injecting noise into a cloned policy and then applies the same T-REX likelihood objective~\citep{brown2020better}.
In the one-step bandit counterexample below, a demonstration occupancy $d$ is scored by its expected one-step reward $r^\top d$; for a deterministic trajectory $\tau_a$ taking action $a$, this reduces to the action reward $r(\tau_a)=r(a)$.

\paragraph{Proof of Proposition~\ref{prop:baseline-failure} for the ranking-based estimator.}
Consider a one-state bandit with three actions $A=\{a_1,a_2,a_3\}$, so the occupancy set is
\begin{equation}
\Phi=\left\{d\in\mathbb R_+^3:\ \sum_{i=1}^3 d(a_i)=1\right\}.
\end{equation}
Let the ground-truth optimal occupancy be $d_e^\star=e_{a_1}$, induced for example by the reward $r^\star=e_{a_1}$.

Fix any $\eta>0$.
Choose $\rho>0$ small enough that
\begin{equation}
2(1-2\rho)\sigma(-(1-\rho))>\frac12,
\qquad
\sigma(x):=(1+e^{-x})^{-1}.
\end{equation}
Such a choice is possible because the left-hand side is continuous in $\rho$ and equals $2\sigma(-1)>1/2$ at $\rho=0$.
Consider the finite occupancy-suboptimality dataset
\begin{equation}
\mathcal D:=\{(d^g,\epsilon^g),(d^{u,1},\epsilon^u),(d^{u,2},\epsilon^u),(d^b,\epsilon^b)\},
\end{equation}
where
\begin{equation}
d^g=e_{a_1},\qquad \epsilon^g=0,
\end{equation}
\begin{equation}
d^{u,1}=d^{u,2}=\rho e_{a_1}+(1-\rho)e_{a_2},\qquad \epsilon^u=1-\rho,
\end{equation}
and
\begin{equation}
d^b=e_{a_3},\qquad \epsilon^b=1.
\end{equation}
For the ranking-based estimator, score each demonstration occupancy $d$ by $r^\top d$ and use the comparisons
\begin{equation}
d^{u,1}\succ d^b,\qquad d^{u,2}\succ d^b,\qquad d^g\succ d^b.
\end{equation}
These comparisons are consistent with $r^\star$, since $(r^\star)^\top d^g=1$, $(r^\star)^\top d^{u,i}=\rho$, and $(r^\star)^\top d^b=0$.

We first verify consistency with $\mathcal R^\star$.
If $r\in\mathcal R^\star$, then $a_1$ is optimal under $r$.
Hence
\begin{equation}
\subopt(r,d^g)=0=\epsilon^g,
\end{equation}
and for each $i\in\{1,2\}$,
\begin{equation}
\subopt(r,d^{u,i})=(1-\rho)\bigl(r(a_1)-r(a_2)\bigr)\le 1-\rho=\epsilon^u,
\end{equation}
while
\begin{equation}
\subopt(r,d^b)\le 1=\epsilon^b.
\end{equation}
Therefore $\mathcal R^\star\subseteq \mathcal R(\mathcal D)$.

Since $(d^g,\epsilon^g)\in\mathcal D\subseteq\conv(\mathcal D)$, Theorem~\ref{thm:bound1} yields
\begin{equation}
\min_{(d,\epsilon)\in\conv(\mathcal D)}\left\{\|d-d_e^\star\|_1+\epsilon\right\}
\le
\|d^g-d_e^\star\|_1+\epsilon^g
=
0,
\end{equation}
which is at most $\eta$.

We now analyze the actual T-REX likelihood.
The T-REX objective is
\begin{equation}
L(r)
=
2\log \sigma\!\bigl(\rho r(a_1)+(1-\rho)r(a_2)-r(a_3)\bigr)
+\log \sigma\!\bigl(r(a_1)-r(a_3)\bigr).
\end{equation}
Because $\log \sigma(\cdot)$ is increasing, any mass assigned to $a_3$ can be moved to $a_2$ while increasing both differences $\rho r(a_1)+(1-\rho)r(a_2)-r(a_3)$ and $r(a_1)-r(a_3)$.
Hence every maximizer satisfies
\begin{equation}
r(a_3)=0.
\end{equation}
Therefore we may write
\begin{equation}
r=(t,1-t,0),\qquad t\in[0,1],
\end{equation}
and the objective reduces to
\begin{equation}
L(t)=2\log \sigma\!\bigl(\rho t+(1-\rho)(1-t)\bigr)+\log \sigma(t).
\end{equation}
This function is concave, andD
\begin{equation}
L'(t)=-2(1-2\rho)\sigma\!\bigl(-\rho t-(1-\rho)(1-t)\bigr)+\sigma(-t).
\end{equation}
For every $t\in[0,1]$, we have $\sigma(-t)\le \tfrac12$ and $\sigma\!\bigl(-\rho t-(1-\rho)(1-t)\bigr)\ge \sigma(-(1-\rho))$, so
\begin{equation}
L'(t)\le \frac12-2(1-2\rho)\sigma(-(1-\rho))<0
\end{equation}
by the choice of $\rho$.
Thus $L$ is strictly decreasing on $[0,1]$, and every maximizer is
\begin{equation}
\hat r(a_1)=0,\qquad \hat r(a_2)=1,\qquad \hat r(a_3)=0.
\end{equation}
Finally,
\begin{equation}
\subopt(\hat r,d_e^\star)
=
\max_{d\in\Phi}\hat r^\top d-\hat r^\top d_e^\star
=
1-0
=
1.
\end{equation}
Thus, with $C=1$, we have constructed a finite dataset $\mathcal D$ for which the upper bound in Theorem~\ref{thm:bound1} is at most $\eta$, while every T-REX maximizer satisfies $\subopt(\hat r,d_e^\star)\ge C$.
Since D-REX uses the same reward-learning objective after generating automatic rankings, the same counterexample applies to that ranking-based baseline as well.

\section{Coverage-based penalty}\label{app:coverage_penalty}
In the single-optimal-demonstrator setting, reward ambiguity can be mitigated by selecting a (relative) interior point of the feasible reward set.
One standard construction imposes a positive margin on the Bellman inequalities, i.e., $(M^\top v -r)(s, a) \geq \delta  \mathbf{1}\{d_e(s,a) = 0\}$ for all $(s, a) \in S \times A$.
The margin $\delta$ penalizes reward on state-action pairs unvisited by the expert $d_e$ and discourages degenerate solutions on the boundary of the normal cone $N(d_e)$.

We extend this idea to multiple suboptimal demonstrations by penalizing state-action pairs that are unvisited by \emph{all} demonstrators simultaneously.
Concretely, we impose a positive margin on pairs that have zero mass under the demonstrators' aggregate occupancy measure.
The inactive region and the feasible reward set with coverage-based penalty are defined as follows.

\begin{definition}\label{def:inactive_region}
Let $\bar d$ denote the aggregate demonstrator occupancy measure: $\bar d := \frac{1}{K}\sum_{k=1}^K d_e^k$,
and $\mathcal{I}_\mathcal{D} := \{ (s,a) \in S \times A \mid \bar d(s,a) = 0 \}$ denote the inactive state-action pairs where the aggregate occupancy measure is zero.
\end{definition}

\begin{definition}[Feasible reward set with coverage-based penalty]\label{def:lp_multi_agg}
\begin{equation}\label{eq:lp_multi_agg}
    \begin{split}
    \mathcal{R}_{\delta}(\mathcal{D}) := &\{ r \in \Delta(S \times A)  \mid \; \exists  v \in \mathbb{R}^{|S|} \quad \text{s.t.} \quad (1-\gamma)\mu_0^\top v - r^\top d_e^k \leq \epsilon^k \quad \forall k \in [K],\\
    &(M^\top v - r)(s, a) \geq \delta \mathbf{1}\{ (s,a) \in \mathcal{I}_\mathcal{D} \} \quad \forall (s,a) \in S \times A\}.
    \end{split}
    \end{equation}
\end{definition}

By definition, $\mathcal{R}_0 (\mathcal{D}) =  \mathcal{R} (\mathcal{D})$ and $\mathcal{R}_\delta (\mathcal{D}) \subseteq  \mathcal{R} (\mathcal{D})$ for any $\delta>0$.
As in the single-optimal-demonstrator case, positive Bellman inequality slack on inactive pairs discourages rewards that keep clearly under-covered behaviors competitive.
The next theorem quantifies this effect: any occupancy placing mass off the demonstrators' aggregate support pays a suboptimality gap that scales linearly with the total off-support mass and the margin $\delta$.

\begin{theorem}\label{thm:sharpness}
If $r\in\mathcal R_{\delta}(\mathcal{D})$, then for any $d\in\Phi$, we have
\begin{equation}
    \subopt(r,d)\ge \delta \sum_{(s,a) \in \mathcal{I}_\mathcal{D}} d(s,a) - \min_{k\in[K]} \epsilon^k.
\end{equation}
\end{theorem}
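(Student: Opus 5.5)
The plan is to use the certificate $v$ from the definition of $\mathcal R_\delta(\mathcal D)$ together with the identity $d^\top M^\top v = (1-\gamma)\mu_0^\top v$, valid for every $d\in\Phi$.

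Fix $r\in\mathcal R_\delta(\mathcal D)$ and let $v$ be a witness. Write $w:=M^\top v - r$. By the margin constraint, $w\ge 0$ everywhere and $w(s,a)\ge\delta$ on $\mathcal I_{\mathcal D}$.

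For any $d\in\Phi$ we have $Md=(1-\gamma)\mu_0$. Hence
\begin{equation*}
r^\top d = v^\top M d - w^\top d = (1-\gamma)\mu_0^\top v - w^\top d .
\end{equation*}
Since $d\ge 0$, this gives
\begin{equation*}
r^\top d \le (1-\gamma)\mu_0^\top v - \delta\sum_{(s,a)\in\mathcal I_{\mathcal D}} d(s,a).
\end{equation*}

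Next we lower-bound the optimal value. For each $k$, the demonstrator constraint gives $r^\top d_e^k \ge (1-\gamma)\mu_0^\top v - \epsilon^k$. Since $d_e^k\in\Phi$, it follows that $\max_{\tilde d\in\Phi} r^\top\tilde d \ge (1-\gamma)\mu_0^\top v - \epsilon^k$ for every $k$. Choosing the $k$ with the smallest $\epsilon^k$ gives
\begin{equation*}
\max_{\tilde d\in\Phi} r^\top\tilde d \ge (1-\gamma)\mu_0^\top v - \min_{k}\epsilon^k .
\end{equation*}

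Subtracting the upper bound on $r^\top d$ from this lower bound, the $(1-\gamma)\mu_0^\top v$ terms cancel. This yields
\begin{equation*}
\subopt(r,d) \ge \delta\sum_{(s,a)\in\mathcal I_{\mathcal D}} d(s,a) - \min_k \epsilon^k .
\end{equation*}

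The only delicate point is the identity $d^\top M^\top v=(1-\gamma)\mu_0^\top v$. This is immediate from $\Phi$'s flow constraint, so no real obstacle is expected. It is also worth noting that the lower bound on $\max r^\top\tilde d$ does not need strong duality here; only feasibility of $d_e^k$ is used.
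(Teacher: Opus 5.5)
Your proof is correct and is essentially the paper's argument. Both use the slack $l = M^\top v - r$ and the flow identity $r^\top d = (1-\gamma)\mu_0^\top v - l^\top d$ for $d\in\Phi$, compare against a demonstrator $d_e^k$ using its constraint (equivalently $l^\top d_e^k\le\epsilon^k$), and then take the $k$ with smallest $\epsilon^k$; the paper only groups the terms differently, as $\subopt(r,d)\ge l^\top d - l^\top d_e^k$.
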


\begin{proof}
Fix any $r\in\mathcal R_\delta(\mathcal D)$, and let $v\in\mathbb R^{|S|}$ be a feasible solution from \eqref{eq:lp_multi_agg}. Define the slack vector
\begin{equation}
l := M^\top v-r.
\end{equation}
Then $l(s,a)\ge \delta\,\mathbf 1\{(s,a)\in\mathcal I_\mathcal D\}$ for every $(s,a)$.

By weak duality, for every $d\in\Phi$ we have $r^\top d \le (1-\gamma)\mu_0^\top v$.
Therefore, for each demonstrator $k\in[K]$,
\begin{equation}
\subopt(r,d_e^k)
= \max_{d\in\Phi} \; r^\top d - r^\top d_e^k
\le (1-\gamma)\mu_0^\top v-r^\top d_e^k
\le \epsilon^k,
\end{equation}
where the last inequality is exactly the feasibility condition in~\eqref{eq:lp_multi_agg}. Hence $d_e^k$ is at most $\epsilon^k$-suboptimal for every $k\in[K]$.

Fix any $d\in\Phi$ and any demonstrator index $k\in[K]$. Since $d_e^k\in\Phi$, the definition of $\subopt$ gives
\begin{equation}
\subopt(r,d)
= \max_{\tilde d\in\Phi} r^\top \tilde d-r^\top d
\ge r^\top d_e^k-r^\top d.
\end{equation}
Using $r=M^\top v-l$ and the Bellman equation constraint $Md=(1-\gamma)\mu_0$ for any $d\in\Phi$, we obtain
\begin{equation}
r^\top d
= v^\top Md-l^\top d
= (1-\gamma)\mu_0^\top v-l^\top d.
\end{equation}
The same identity also holds with $d_e^k$ in place of $d$.
Hence
\begin{equation}
\subopt(r,d)
\ge l^\top d-l^\top d_e^k.
\end{equation}
Moreover, by the constraint of \eqref{eq:lp_multi_agg}, we have
\begin{equation}
l^\top d_e^k
= (M^\top v-r)^\top d_e^k
= (1-\gamma)\mu_0^\top v-r^\top d_e^k
\le \epsilon^k.
\end{equation}
Combining the last two inequalities yields
\begin{equation}
\subopt(r,d)
\ge l^\top d-\epsilon^k
\ge \delta \sum_{(s,a)\in\mathcal I_\mathcal D} d(s,a)-\epsilon^k.
\end{equation}
Since this holds for every $k\in[K]$, we conclude that
\begin{equation}
\subopt(r,d)\ge \delta \sum_{(s,a)\in\mathcal I_\mathcal D} d(s,a)-\min_{k\in[K]}\epsilon^k,
\end{equation}
which proves the claim.
\end{proof}

\section{Effect of additional pairwise performance-gap information}\label{app:pairwise_performance_gap}
Suppose we additionally observe \emph{pairwise performance-gap information} from comparisons: for some ordered pairs $(i,j)$, demonstrator $i$ is known to outperform demonstrator $j$ by at least a margin $\epsilon_{ij}\geq 0$ under the unknown ground-truth reward. In our occupancy-measure notation, this corresponds to the linear constraints
\begin{equation}\label{eq:pairwise}
    r^\top(d_e^i-d_e^j)\ \ge\ \epsilon_{ij}\qquad \forall (i,j)\in\mathcal P,
\end{equation}
where $\mathcal P\subseteq[K]\times[K]$ indexes the available comparisons.

This section shows that the constraints in \eqref{eq:pairwise} rule out degenerate rewards by forcing any recovered reward to induce a nontrivial value range between the best and worst policies.
To be specific, we consider the following worst-case reward range: among all rewards in $\mathcal R$, it is the smallest best-vs-worst reward range achievable over $\Phi$.
\begin{definition}[Worst-case reward range under $\mathcal{R}$]\label{def:reward_range}
\begin{equation}
\mathrm{rng}(\mathcal{R}) := \min_{r \in \mathcal{R}} \left( \max_{\tilde d \in \Phi} r^\top \tilde d - \min_{d \in \Phi} r^\top d \right).
\end{equation}
\end{definition}

Then, we show that this measure is larger than the sum of performance gaps over any path in the comparison graph.

\paragraph{Aggregating comparisons via paths.}
When many comparisons are available, one can aggregate them along a chain of pairwise comparisons to obtain a stronger lower bound than $\max_{(i,j)\in\mathcal P}\epsilon_{ij}$.
Define the directed \emph{comparison graph} $\mathcal G=(\mathcal V,\mathcal E)$ with $\mathcal V:=[K]$ and $\mathcal E:=\mathcal P$, and assign each edge weight $\epsilon_{ij}$ between demonstrators $i$ and $j$ by the pairwise performance-gap information.
For a simple directed path
\begin{equation}
    p=(i_0\to i_1\to \cdots \to i_m),
\end{equation}
in $\mathcal G$, define its total weight as
\begin{equation}
    \epsilon(p):=\sum_{\ell=1}^m \epsilon_{i_{\ell-1}i_\ell}.
\end{equation}
The next result shows that every such path yields a lower bound on the worst-case reward range.

\begin{proposition}[Path-based worst-case reward-range guarantee]\label{prop:range_path}
Let
\begin{equation}
    \mathcal R_{\mathcal P}
    :=
    \left\{
    r\in\mathcal R(\mathcal D)
    \;\middle|\;
    r^\top(d_e^i-d_e^j)\ge \epsilon_{ij}
    \quad \forall (i,j)\in\mathcal P
    \right\}.
\end{equation}
For any simple directed path $p$ in $\mathcal G$, if $\mathcal R_{\mathcal P}$ is nonempty, then
\begin{equation}\label{eq:range_path}
    \mathrm{rng}(\mathcal R_{\mathcal P})\ \ge \epsilon(p).
\end{equation}
\end{proposition}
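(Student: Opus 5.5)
The plan is to fix an arbitrary $r\in\mathcal R_{\mathcal P}$ and show that its best-vs-worst range over $\Phi$ is at least $\epsilon(p)$. Taking the minimum over $r\in\mathcal R_{\mathcal P}$ then gives \eqref{eq:range_path}. The nonemptiness hypothesis is needed only so that the minimum in Definition~\ref{def:reward_range} is over a nonempty set. Attainment of that minimum is not an issue: $\mathcal R_{\mathcal P}$ is a compact polytope (a closed subset of the simplex cut by linear inequalities, using Proposition~\ref{prop:lp_formulation} for $\mathcal R(\mathcal D)$), and the range map $r\mapsto \max_{\tilde d\in\Phi} r^\top\tilde d-\min_{d\in\Phi} r^\top d$ is continuous (a difference of a convex and a concave piecewise-linear function). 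In any case, a pointwise lower bound suffices for the infimum as well.

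The key step is a telescoping argument along the path $p=(i_0\to\cdots\to i_m)$. Each edge $(i_{\ell-1},i_\ell)\in\mathcal P$ gives
\[
r^\top(d_e^{i_{\ell-1}}-d_e^{i_\ell})\ge \epsilon_{i_{\ell-1}i_\ell}.
\]
Summing over $\ell=1,\dots,m$ telescopes to
\[
r^\top d_e^{i_0}-r^\top d_e^{i_m}\ge \epsilon(p).
\]
Since every demonstrator occupancy lies in $\Phi$, we have $\max_{\tilde d\in\Phi} r^\top\tilde d\ge r^\top d_e^{i_0}$ and $\min_{d\in\Phi} r^\top d\le r^\top d_e^{i_m}$. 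Hence the range of $r$ is at least $\epsilon(p)$. For an empty path ($m=0$) the bound is trivial, since $\epsilon(p)=0$ and ranges are nonnegative.

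There is no substantial obstacle. The only points to handle carefully are (i) that the edge direction in $\mathcal G$ matches the sign convention in \eqref{eq:pairwise}, so the telescoping runs from $i_0$ down to $i_m$, and (ii) that the minimum defining $\mathrm{rng}$ is well posed, which is covered by the compactness remark above. The simplicity of the path is not actually used; it merely keeps $\epsilon(p)$ finite and well defined.
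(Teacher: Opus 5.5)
Your proposal is correct and follows essentially the same argument as the paper: fix an arbitrary $r\in\mathcal R_{\mathcal P}$, telescope the edge constraints along the path to get $r^\top(d_e^{i_0}-d_e^{i_m})\ge\epsilon(p)$, bound the range below by this difference since both occupancies lie in $\Phi$, and then minimize over $r$. Your additional remarks on compactness, attainment of the minimum, the $m=0$ case, and the unused simplicity assumption are accurate but do not change the argument.
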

\begin{proof}
Fix any simple directed path $p$ in $\mathcal G$ of the form $p=(i_0 \to i_1 \to \cdots \to i_m)$, where each $(i_{\ell-1},i_\ell)\in\mathcal P$.
Fix any $r\in\mathcal R_{\mathcal P}$.
By the definition of $\mathcal R_{\mathcal P}$, we have for each edge
$r^\top(d_e^{i_{\ell-1}}-d_e^{i_\ell})\ge \epsilon_{i_{\ell-1}i_\ell}$.
Summing over $\ell=1,\ldots,m$ telescopes to
\begin{equation}
    r^\top(d_e^{i_0}-d_e^{i_m}) \;\ge\; \epsilon(p).
\end{equation}
Since $d_e^{i_0},d_e^{i_m}\in\Phi$, the reward range under $r$ satisfies
\begin{equation}
    \max_{\tilde d\in\Phi} r^\top \tilde d - \min_{d\in\Phi} r^\top d
    \;\ge\;
    r^\top d_e^{i_0} - r^\top d_e^{i_m}
    \;\ge\;
    \epsilon(p).
\end{equation}
Since $r\in\mathcal R_{\mathcal P}$ was arbitrary, taking the minimum over $r\in\mathcal R_{\mathcal P}$ yields
\begin{equation}
\mathrm{rng}(\mathcal R_{\mathcal P})
\;=\;
\min_{r\in\mathcal R_{\mathcal P}}
\left[\max_{\tilde d\in\Phi} r^\top \tilde d - \min_{d\in\Phi} r^\top d \right]
\;\ge\;
\epsilon(p).
\end{equation}
\end{proof}

\section{Advantage parameterization of the offline objective}\label{app:direct_policy}
We derive the advantage-parameterized loss from the population optimal advantage, and then use the resulting expression with empirical occupancies.

\begin{proposition}[Advantage parameterization]\label{prop:advantage_param}
For a reward $r$, let $v_r^\star$ be its optimal value function and define the optimal advantage
\begin{equation}\label{eq:optimal_adv_appendix}
    A_r^\star(s,a)
    :=
    r(s,a)
    + \gamma \sum_{s'\in S}P(s'\mid s,a)v_r^\star(s')
    - v_r^\star(s).
\end{equation}
Then $A_r^\star(s,a)\le 0$ for every $(s,a)$, and for every occupancy $d\in\Phi$,
\begin{equation}\label{eq:subopt_adv_identity}
    \subopt(r,d)
    =
    -\sum_{s,a} d(s,a)A_r^\star(s,a).
\end{equation}
Consequently, the population version of the constrained reward-selection objective can be written directly in terms of $A_r^\star$.
Replacing the population quantities by empirical occupancies and using a free parameterization $A_\psi$ gives
\begin{align}\label{eq:loss_advantage_param}
    L(A_\psi) :=\;&
    -\sum_{s,a}\left(\frac{1}{K}\sum_{k=1}^K \hat d_e^k(s,a)-\hat d_{\mathrm{base}}(s,a)\right)A_\psi(s,a)
    \nonumber\\
    &+ \lambda_{\mathrm{sub}}\frac{1}{K}\sum_{k=1}^K
    \rho_{\leq}\!\left(
    -\sum_{s,a}\hat d_e^k(s,a)\,A_\psi(s,a)-\epsilon^k
    \right)
    \nonumber\\
    &+ \lambda_{\mathrm{bell}}\,
    \mathbb{E}_{(s,a)\sim\bar{\mathcal{D}}}
    \left[
    \rho_{\leq}\!\left(A_\psi(s,a)\right)
    \right].
\end{align}
\end{proposition}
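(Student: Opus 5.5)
We prove the two claims in order: first $A_r^\star\le 0$, then the identity \eqref{eq:subopt_adv_identity}, which converts each term of the LP objective into an advantage expression.

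\textbf{Nonpositivity.} Take $v_r^\star$, the optimal value function, to be an optimal primal solution of \eqref{eq:primal}; by the Bellman optimality equations it is feasible, i.e. $M^\top v_r^\star\ge r$. Written coordinatewise, this constraint says
\[
v_r^\star(s)-\gamma\sum_{s'}P(s'\mid s,a)v_r^\star(s')\ge r(s,a),
\]
which is exactly $A_r^\star(s,a)\le 0$. In vector form this is $A_r^\star=r-M^\top v_r^\star$, since $(M^\top v)(s,a)=v(s)-\gamma\sum_{s'}P(s'\mid s,a)v(s')$.

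\textbf{The identity.} Fix $d\in\Phi$. Using $Md=(1-\gamma)\mu_0$,
\[
d^\top A_r^\star=r^\top d-(Md)^\top v_r^\star=r^\top d-(1-\gamma)\mu_0^\top v_r^\star .
\]
By strong duality between \eqref{eq:primal} and \eqref{eq:dual}, as already used in the proof of Proposition~\ref{prop:lp_formulation}, we have $(1-\gamma)\mu_0^\top v_r^\star=\max_{\tilde d\in\Phi}r^\top\tilde d$. Substituting gives $-d^\top A_r^\star=\subopt(r,d)$.

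The only care needed here is to justify that $v_r^\star$ attains the primal optimum, and not merely some feasible $v$. A feasible $v$ would only give the inequality $\ge$, which is what Theorem~\ref{thm:sharpness} exploits. Optimality of $v_r^\star$ for \eqref{eq:primal} is the standard fact cited from \citep{puterman1994markov} in Section~\ref{sec:2}.

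\textbf{Rewriting the objective.} Each term of \eqref{eq:objective} and \eqref{eq:lp_formulation} now transforms as follows.

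\begin{itemize}
\item The reward-selection term satisfies $r^\top(\bar d_e-d_{\mathrm{base}})=(\bar d_e-d_{\mathrm{base}})^\top A_r^\star$, because the $\mu_0^\top v_r^\star$ contributions cancel between two occupancies in $\Phi$.
\item The demonstrator constraints become $-\sum_{s,a}d_e^k(s,a)A_r^\star(s,a)\le\epsilon^k$, using the identity just proved.
\item The Bellman feasibility constraint $M^\top v\ge r$, taken at the tight choice $v=v_r^\star$, becomes $A_r^\star\le 0$.
\end{itemize}

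Replacing $A_r^\star$ with a free $A_\psi$, occupancies with their empirical estimates $\hat d$, and constraints with $\rho_\le$ penalties yields \eqref{eq:loss_advantage_param}. This last step is a definition, not a derivation.

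The main point to flag is exactly that relaxation. Realizability of $A_\psi$ as the optimal advantage of some reward is dropped, so the "consequently" holds only at the population level, with $A$ ranging over true optimal advantages.
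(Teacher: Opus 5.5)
Your proposal is correct and follows the paper's proof essentially step for step. Nonpositivity comes from primal feasibility $M^\top v_r^\star \ge r$, which is Bellman optimality. The identity comes from $Md=(1-\gamma)\mu_0$ together with $(1-\gamma)\mu_0^\top v_r^\star=\max_{\tilde d\in\Phi} r^\top \tilde d$. The objective is rewritten term by term in the same way, before substituting empirical occupancies and a free $A_\psi$. Your closing remark, that realizability of $A_\psi$ as an optimal advantage is dropped in that last step, is accurate; the paper leaves it implicit.
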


\begin{proof}
The Bellman optimality equation gives $A_r^\star(s,a)\le 0$ for all $(s,a)$.
Since $d\in\Phi$ satisfies $Md=(1-\gamma)\mu_0$, we have
\begin{align}
    \sum_{s,a}d(s,a)A_r^\star(s,a)
    &=
    r^\top d - (M^\top v_r^\star)^\top d \\
    &=
    r^\top d - (1-\gamma)\mu_0^\top v_r^\star \\
    &=
    r^\top d - \max_{\tilde d\in\Phi} r^\top \tilde d
    =
    -\subopt(r,d),
\end{align}
which proves \eqref{eq:subopt_adv_identity}.
Thus the population suboptimality constraint $\subopt(r,d_e^k)\le\epsilon^k$ is equivalent to
\begin{equation}
    -\sum_{s,a}d_e^k(s,a)A_r^\star(s,a)-\epsilon^k\le 0.
\end{equation}
Similarly, for any two occupancies $d,d'\in\Phi$, the flow constraints imply
\begin{equation}
    r^\top(d-d')
    =
    \sum_{s,a}\bigl(d(s,a)-d'(s,a)\bigr)A_r^\star(s,a),
\end{equation}
because $(M^\top v_r^\star)^\top(d-d')=v_r^{\star\top}M(d-d')=0$.
Applying this identity with $d=\frac{1}{K}\sum_{k=1}^K d_e^k$ and $d'=d_{\mathrm{base}}$ rewrites the reward-selection term.
Finally, Bellman feasibility is exactly the sign constraint $A_r^\star(s,a)\le0$, which is penalized by $\rho_{\leq}(A_\psi(s,a))$ in the direct parameterization.
Replacing $d_e^k$ and $d_{\mathrm{base}}$ by their empirical estimates and sampling the sign penalty over $\bar{\mathcal D}$ yields \eqref{eq:loss_advantage_param}.
\end{proof}
In implementation, one may further restrict the direct parameterization to non-positive advantages, for example by setting $A_\psi(s,a)=-\mathrm{softplus}(f_\psi(s,a))$.
This enforces the Bellman-feasibility sign convention by construction while preserving the same penalty-based objective form.

\section{Experimental details}\label{app:exp_details}
\subsection{Grid-world with detour portals}
We illustrate the two cases of suboptimal demonstrators in Figure~\ref{fig:grid_world_illustration}.
In each figure, the yellow star marks the terminal goal state, and the red, blue, and green arrows indicate the active portal transitions from entrance to exit.
The black arrows show the optimal policy.
In Case 1, the demonstrator quality, or knowledge, increases with $K$; in Case 2, only the state-action coverage increases with $K$ while the demonstrator quality remains limited.

\begin{figure}[H]
    \centering
    \begin{subfigure}[t]{0.32\textwidth}
        \centering
        \includegraphics[width=\textwidth]{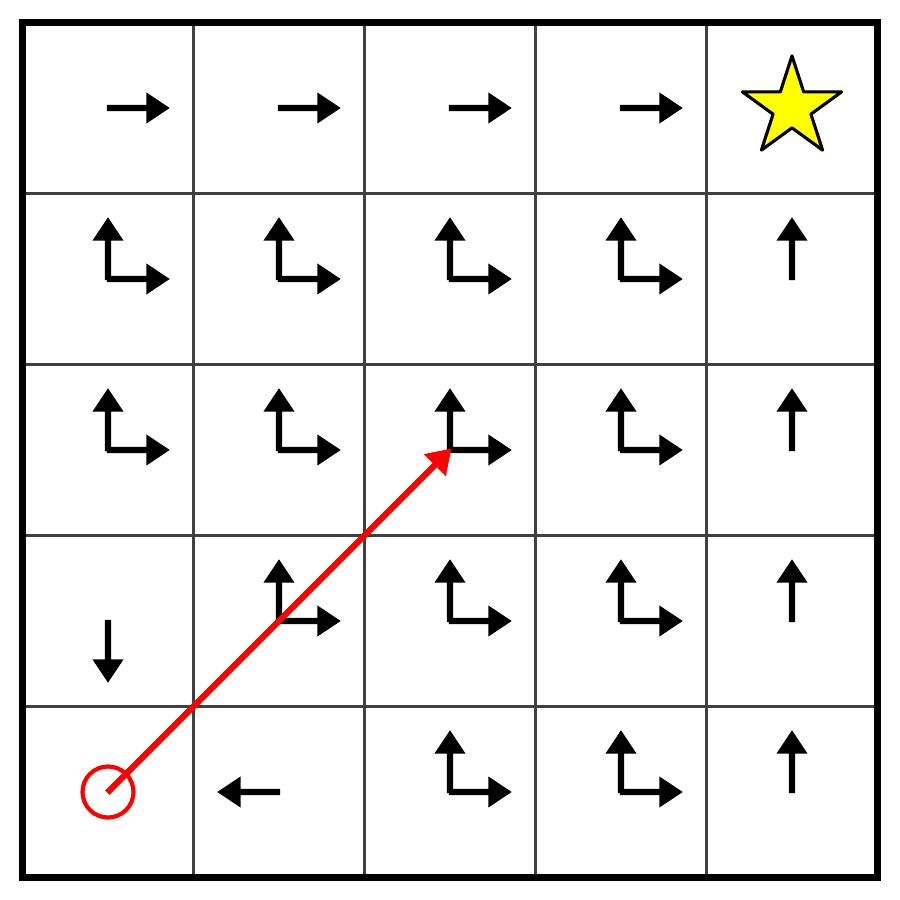}
        \caption{Case 1: Demonstrator 1}
    \end{subfigure}
    \hfill
    \begin{subfigure}[t]{0.32\textwidth}
        \centering
        \includegraphics[width=\textwidth]{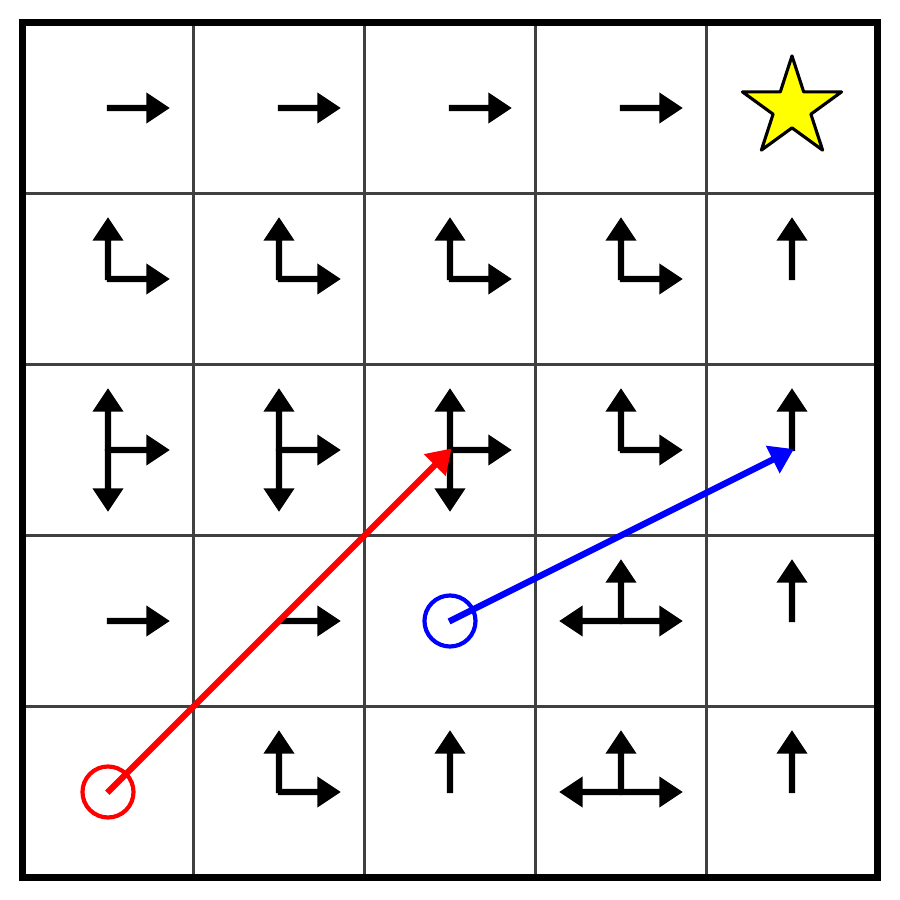}
        \caption{Case 1: Demonstrator 2}
    \end{subfigure}
    \hfill
    \begin{subfigure}[t]{0.32\textwidth}
        \centering
        \includegraphics[width=\textwidth]{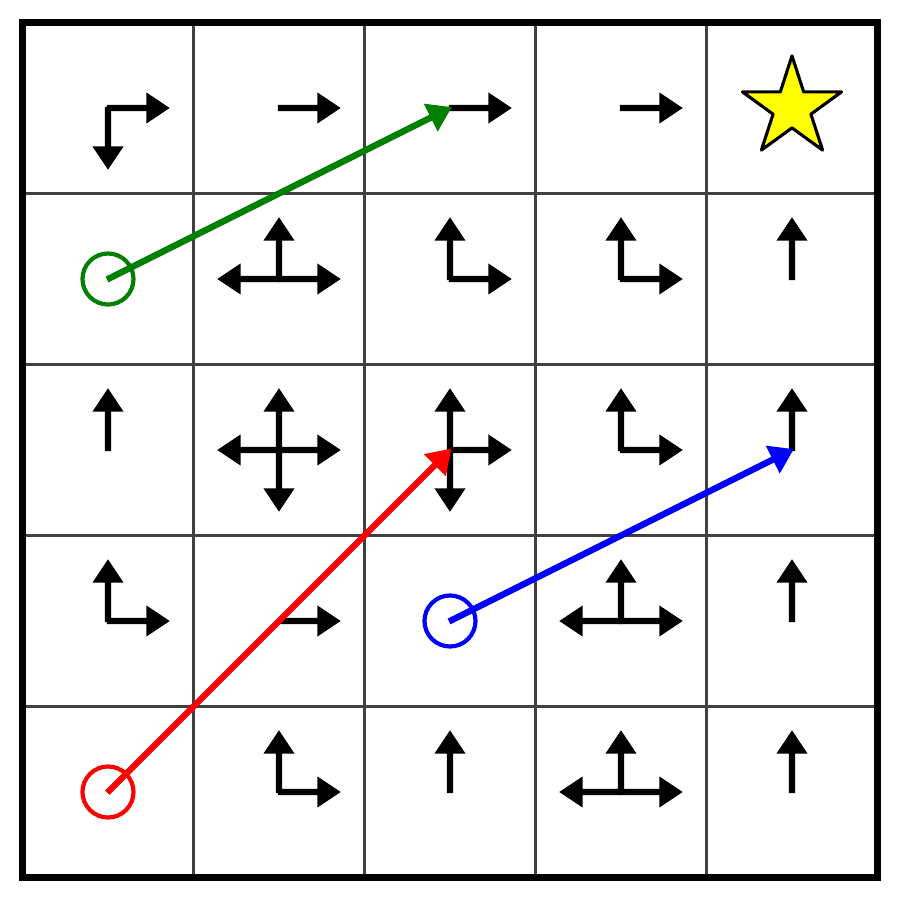}
        \caption{Case 1: Demonstrator 3}
    \end{subfigure}

    \vspace{0.75em}

    \begin{subfigure}[t]{0.32\textwidth}
        \centering
        \includegraphics[width=\textwidth]{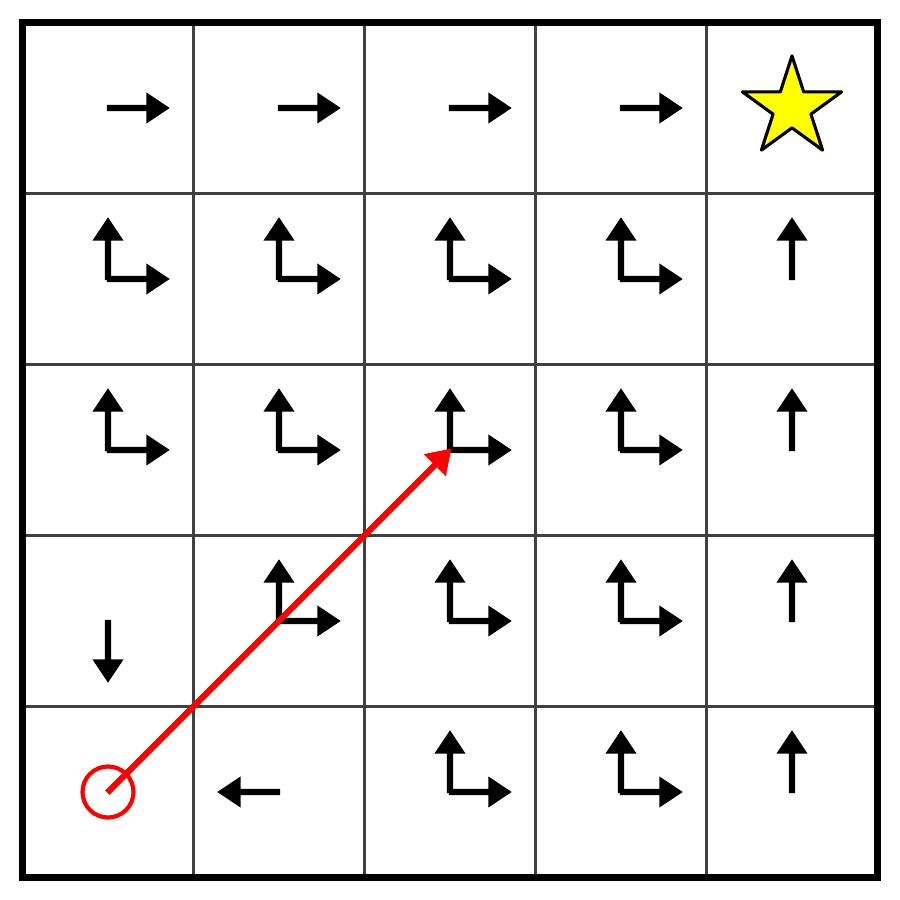}
        \caption{Case 2: Demonstrator 1}
    \end{subfigure}
    \hfill
    \begin{subfigure}[t]{0.32\textwidth}
        \centering
        \includegraphics[width=\textwidth]{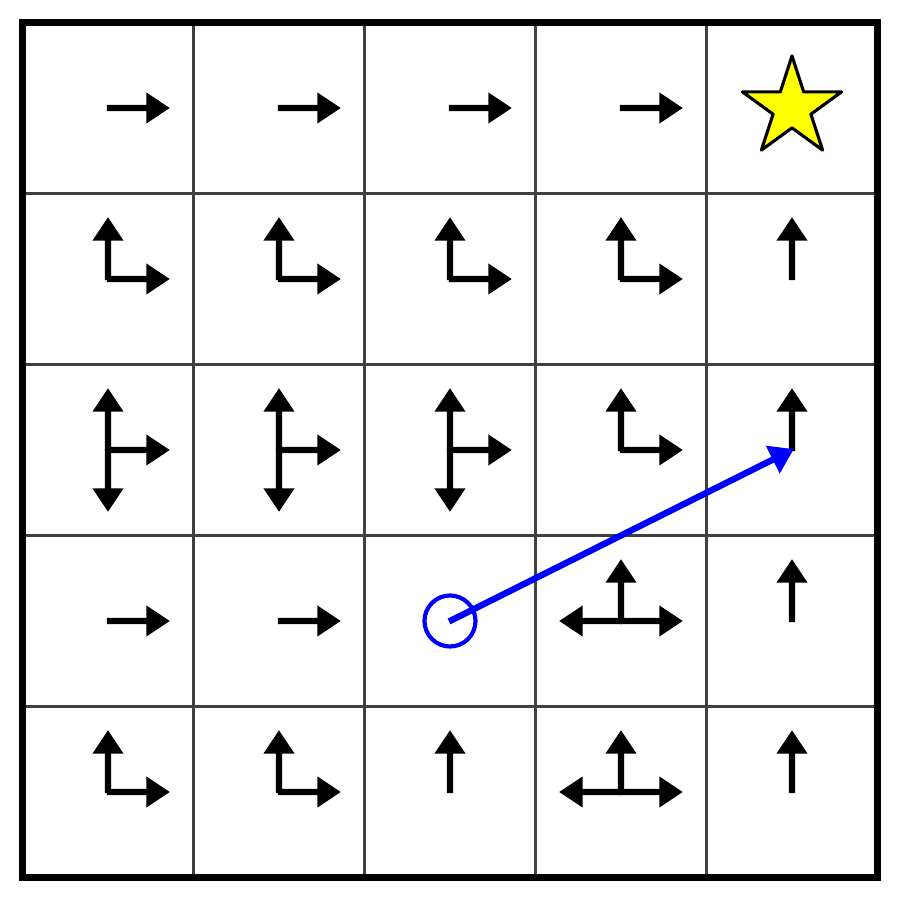}
        \caption{Case 2: Demonstrator 2}
    \end{subfigure}
    \hfill
    \begin{subfigure}[t]{0.32\textwidth}
        \centering
        \includegraphics[width=\textwidth]{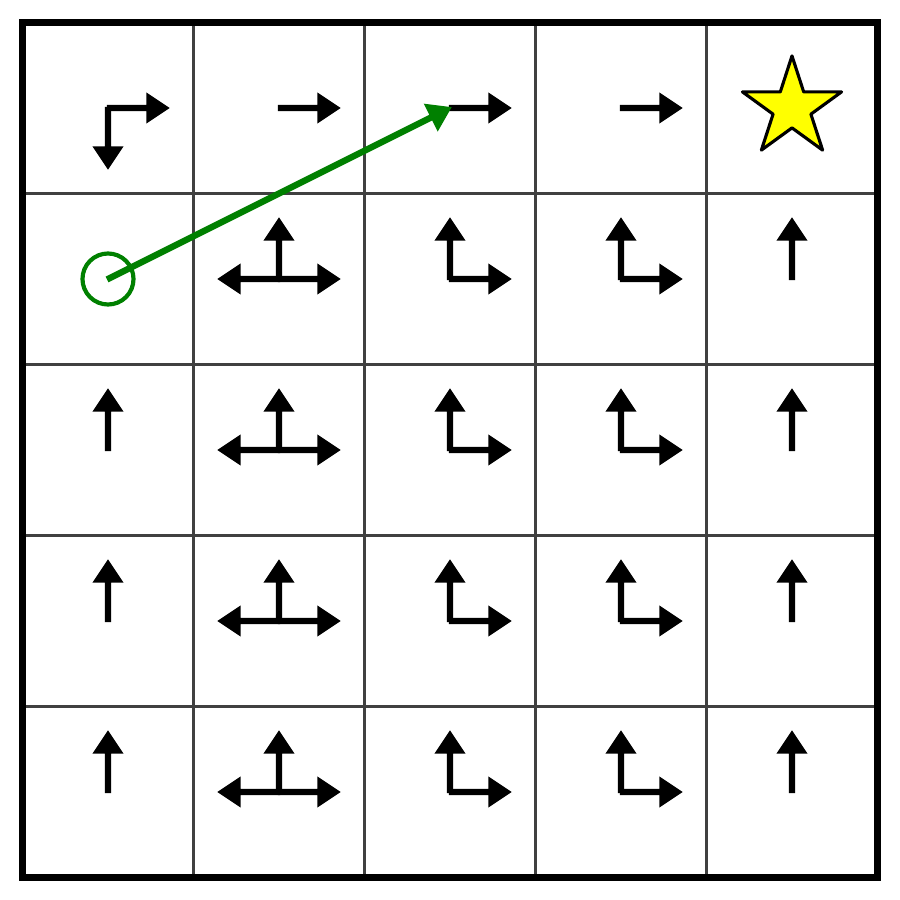}
        \caption{Case 2: Demonstrator 3}
    \end{subfigure}
    \caption{Grid-world examples with detour portals for the two cases of suboptimal demonstrators.}
    \label{fig:grid_world_illustration}
\end{figure}

\subsection{Arithmetic tool-use LLM experiment}
This appendix provides additional details for the arithmetic tool-use LLM experiment in Section~\ref{sec:4}.

\paragraph{Task and dataset generation.}
We consider synthetic arithmetic tool-use tasks built from five tools:
\texttt{add}, \texttt{subtract}, \texttt{multiply}, \texttt{divide}, and \texttt{power}.
Each task consists of a natural-language arithmetic question whose solution requires a chain of one to five tool calls.
The target completion is a multi-line tool-use trace with one call per line.
When an argument is the output of the previous line, we replace it with the special token \texttt{PREV}.

For each training task, we construct one optimal completion and five demonstrator completions.
The optimal completion uses the correct tool at every step.
Demonstrator $k$ is specialized to one tool: whenever the correct tool on a line matches that specialty, demonstrator $k$ produces the correct line; otherwise, it outputs a fixed placeholder tool ``unknown'' with the correct arguments.
This construction yields five suboptimal demonstrators with complementary coverage over the tool set.
For the proposed method, we use a fixed baseline completion that replaces every tool call in the optimal trace with the same placeholder tool ``unknown''.

\paragraph{Model and training details.}
We initialize from \texttt{Qwen/Qwen2.5-1.5B}~\cite{yang2025qwen} and attach a single linear advantage head on top of the final hidden states.
The base checkpoint is available at \url{https://huggingface.co/Qwen/Qwen2.5-1.5B} under the Apache License 2.0.
The backbone is fine-tuned with Low-Rank Adaptation (LoRA).

For the proposed method, we use the advantage-parameterized objective \eqref{eq:loss_advantage_param} in Appendix~\ref{app:direct_policy}.
We use the squared hinge loss for $\rho_{\leq}$.
As a baseline, we compare against a MaxEnt objective that fits the demonstrator completions with demonstrator-specific inverse temperatures, as described in Appendix~\ref{app:maxent}.
We optimize all trainable parameters with the AdamW optimizer.
All LLM fine-tuning experiments were run on a single NVIDIA L40S GPU.
Using the same model, dataset size, batch size, and number of epochs, the proposed advantage-parameterized objective has nearly the same wall-clock training time as the MaxEnt IRL baseline, since both optimize the same backbone with comparable loss computations.
Additional optimization hyperparameters are reported in Table~\ref{tab:llm_hparams}.
\begin{table}[t]
    \centering
    \footnotesize
    \caption{LLM experiment hyperparameters.}
    \label{tab:llm_hparams}
    \begin{tabular}{ll}
        \toprule
        Field & Value \\
        \midrule
        Train / val tasks & \texttt{2000} / \texttt{200} \\
        Epochs / batch size / grad.\ accum. & 1 / 8 / 4 \\
        Learning rate & $10^{-4}$ \\
        Optimizer & AdamW \\
        AdamW $\beta_1$ / $\beta_2$ / $\varepsilon$ & 0.9 / 0.999 / $10^{-8}$ \\
        Weight decay / warmup ratio / grad.\ clip & 0.01 / 0.1 / 1.0 \\
        LoRA $r$ / $\alpha$ / dropout & 16 / 32 / 0.05 \\
        $\gamma$ (discount factor)  & 0.95 \\
        $\epsilon^k$ (suboptimality slack) & 0.1 \\
        $\lambda_{\mathrm{sub}}$ / $\lambda_{\mathrm{bell}}$ & 1.0 / 1.0 \\
        Penalty function $\rho_{\leq}(x)$ & $\max\{0,x\}^2$ (squared hinge) \\
        $\beta$ (MaxEnt inverse temperature) & 1.0 \\
        \bottomrule
    \end{tabular}
\end{table}

\paragraph{Evaluation.}
We evaluate the trained model on the validation set by scoring the optimal completion and a near-optimal completion with only a single wrong tool name, sampled uniformly at random from the available tools.
For validation task $i$, let $y_i^\star$ denote the optimal completion, $\tilde y_i$ denote the near-optimal completion, and $S_\theta(x_i,y)$ denote the score assigned by the trained reward or advantage model to completion $y$ for prompt $x_i$.
We report the win-rate
\begin{equation}
    \mathrm{WinRate}
    :=
    \frac{1}{N_{\mathrm{val}}}
    \sum_{i=1}^{N_{\mathrm{val}}}
    \mathbf{1}\!\left\{S_\theta(x_i,y_i^\star) > S_\theta(x_i,\tilde y_i)\right\},
\end{equation}
with mean and standard deviation computed over 10 runs.

\section{Societal impact}\label{app:societal_impact}

This work is primarily a theoretical contribution to IRL from multiple imperfect demonstrators.
Potential positive impacts include reducing reliance on a single near-optimal demonstrator and enabling reward learning from more realistic heterogeneous data, which may be useful in applications such as RLHF, robotics, and assistive decision-making where demonstrations from optimal demonstrators are costly or unavailable.
At the same time, reward-learning methods can inherit biases or systematic errors present in the demonstrations, so care is needed when applying them in safety-critical or high-stakes domains.



\end{document}